\newcommand\A{\mathsf{A}}
\newcommand\C{\mathcal{C}}
\newcommand\D{\mathcal{D}}
\newcommand\HH{\mathcal{H}}
\newcommand\PP{\mathcal{P}}
\newcommand\X{\mathcal{X}}
\newcommand\Z{\mathcal{Z}}
\newcommand\x{\boldsymbol{x}}
\newcommand\z{\boldsymbol{z}}
\newcommand\err{\mathrm{err}}
\newcommand\dis{\mathrm{dis}}
\newcommand\repd{\mathrm{RepD}}
\newcommand\vc{\mathrm{VC}}
\newcommand\bin{\mathrm{Bin}}
\newcommand\lap{\mathrm{Lap}}
\newcommand\ind{\mathbf{1}}
\theoremstyle{plain}
\newtheorem{theorem}{Theorem}[section]
\newtheorem{lemma}[theorem]{Lemma}
\newtheorem{corollary}[theorem]{Corollary}
\theoremstyle{definition}
\newtheorem{definition}[theorem]{Definition}
\theoremstyle{remark}
\title{Improved Bounds for Pure Private Agnostic Learning: Item-Level and User-Level Privacy}
\author{
Bo Li\thanks{Department of Computer Science and Engineering, HKUST. \texttt{bli@cse.ust.hk}.}
\and 
Wei Wang\thanks{Department of Computer Science and Engineering, HKUST. \texttt{weiwa@cse.ust.hk}.}
\and
Peng Ye\thanks{Department of Computer Science and Engineering, HKUST. \texttt{pyeac@connect.ust.hk}.}
}
\begin{document}

\maketitle

\begin{abstract}
    Machine Learning has made remarkable progress in a wide range of fields. In many scenarios, learning is performed on datasets involving sensitive information, in which privacy protection is essential for learning algorithms. In this work, we study pure private learning in the agnostic model -- a framework reflecting the learning process in practice. We examine the number of users required under item-level (where each user contributes one example) and user-level (where each user contributes multiple examples) privacy and derive several improved upper bounds. For item-level privacy, our algorithm achieves a near optimal bound for general concept classes. We extend this to the user-level setting, rendering a tighter upper bound than the one proved by~\citet{ghazi2023user}. Lastly, we consider the problem of learning thresholds under user-level privacy and present an algorithm with a nearly tight user complexity.
\end{abstract}

\section{Introduction}

Differential Privacy (DP)~\citep{dwork2006calibrating,dwork2006our} is a mathematical definition for measuring the privacy of algorithms. An algorithm is considered private if the presence or absence of a single user does not significantly affect the output. Due to its soundness and quantifiability, DP has become the gold standard for ensuring data privacy and has been employed by the industry~\citep{app2017apple} and the governments~\citep{abowd2018us}.

Machine learning models are usually trained on datasets that contain sensitive data (e.g., in medical or financial applications). Thus, it is necessary to design privacy-preserving machine learning algorithms.~\citet{kasiviswanathan2011can} initiated the study of private learning and defined private PAC learning as a combination of probably approximately correct (PAC) learning~\citep{valiant1984theory} and differential privacy. The most important measure for the utility of a learner is the minimum amount of data required to find a hypothesis that achieves some target accuracy, which is called the sample complexity. Subsequent work (e.g.,~\citep{feldman2014sample,alon2019private}) showed that the privacy constraint makes the sample complexity much higher than the non-private setting.

The PAC model assumes that all the samples are labeled by a concept in the given concept class. This is called the realizable assumption. However, in many situations, such an assumption could be unrealistic. For example, the collected data may be noisy, or one may choose an inappropriate concept class that cannot classify the samples perfectly. The agnostic learning model, introduced by~\citet{haussler1992decision} and~\citet{kearns1994toward}, addresses this issue by requiring the learner to output a hypothesis within some additive error compared to the best one in the concept class. Motivated by this,~\citet{kasiviswanathan2011can} also defined private agnostic learning with the agnostic model replacing the PAC model in private PAC learning.

For pure private realizable learning, tight sample complexity bounds were shown by~\citet{beimel2019characterizing}. However, for agnostic learning, there were no such tight results. Though several algorithms were proposed~\citep{kasiviswanathan2011can,beimel2014learning,alon2020closure}, none of them achieve a sample complexity that matches the trivial lower bound, i.e., the one combines lower bounds of private realizable learning and non-private agnostic learning. On the other hand, there have been no non-trivial lower bounds obtained in the literature. Thus, a gap exists, leading to the following question:
\begin{quote}
\centering
\emph{What is the sample complexity of pure private agnostic learning?}
\end{quote}

What we have discussed so far only considers the situation in that each user contributes one example, which we refer to as \emph{item-level} DP. In practice, however, one user may have many items (e.g., in federated learning~\citep{kairouz2021advances}). Here, the goal becomes protecting all the examples contributed by a single user. In this \emph{user-level} DP setting, we are more interested in the user complexity of learning algorithms, i.e., the number of users needed to achieve the target accuracy.

User-level pure private PAC learning was first studied by~\citet{ghazi2021user}, who showed that compared to the item-level setting, one could learn with fewer users if each user contributes sufficiently many examples. The work of~\citet{ghazi2023user} further tightly determined the user complexity as a function of the number of examples per user. They also proved upper and lower bounds for the agnostic model. However, like the item-level setting, there is a gap between the upper and lower bounds. Thus, it is natural to ask:
\begin{quote}
\centering
    \emph{Can we obtain tighter bounds on the user complexity of pure private agnostic learning?}
\end{quote}

\subsection{Our Results}

In this work, we study the problem of agnostic learning with pure differential privacy. We list our results as follows. The formal definitions of our notions can be found in Section~\ref{sec:pre}.

\paragraph*{Item-Level DP.} We show that every concept class $\C$ can be learned up to excess error $\alpha$ by an $\varepsilon$-differentially private algorithm using $\widetilde{O}\left(\frac{\repd(\C)}{\alpha\varepsilon} + \frac{\vc(\C)}{\alpha^2}\right)$ samples. Compared with private realizable learning~\citep{beimel2019characterizing}, our result only incurs an additive term of $\widetilde{O}\left(\frac{\vc(\C)}{\alpha^2}\right)$, which is known to be tight up to polylogarithmic factors even for non-private agnostic learning. In contrast, previous results~\citep{kasiviswanathan2011can,beimel2019characterizing,alon2020closure} incur higher sample complexity than ours. 

\paragraph*{User-Level DP.} We provide a generic learner that agnostically learns any $\C$ using $\widetilde{O}\Big(\frac{\repd(\C)}{\varepsilon} +  \frac{\repd(\C)}{\sqrt{m}\alpha\varepsilon} + \frac{\vc(\C)}{m\alpha^2}\Big)$ users, where $m$ is the number of examples held by each user. This improves the previous result of~\citet{ghazi2023user}, wherein the third term is $\widetilde{O}\left(\frac{\repd(\C)}{m\alpha^2}\right)$. As indicated by their lower bound, the third term in our upper bound is optimal while a gap still exists regarding the second term.

\paragraph*{Thresholds with User-Level DP.} For the case that $\C$ is the class of thresholds over some domain $\X$, we prove that the user complexity upper bound can be further improved to $\widetilde{O}\Big(\frac{\log\vert\X\vert}{\varepsilon} + \frac{\log\vert\X\vert}{m\alpha\varepsilon} +\frac{1}{\sqrt{m}\alpha\varepsilon} + \frac{1}{m\alpha^2}\Big)$, matching the lower bound proved by~\citet{ghazi2023user}\footnote{In their work, they claimed that their algorithm can achieve this complexity for thresholds. We have confirmed with them that the claim made a mistake -- their algorithm still has a user complexity of $\widetilde{O}\left(\frac{\log\vert\X\vert}{\varepsilon} +\frac{\log\vert\X\vert}{\sqrt{m}\alpha\varepsilon} + \frac{1}{m\alpha^2}\right)$, which is not optimal.}.

The improvement made by our results is significant in the high-accuracy regime. To see this, fix $m, \varepsilon$ and let $\alpha\to 0$. Our upper bound is dominated by $\widetilde{O}\left(\frac{\vc(\C)}{m\alpha^2}\right)$, which even matches the cost of non-private agnostic learning (including the item-level setting, where $m = 1$). In other words, privacy is free -- the privacy constraint can be satisfied with the same number of users. In comparison, the number of users required by the algorithm of~\citet{ghazi2023user} is $\widetilde{O}\left(\frac{\repd(\C)}{m\alpha^2}\right)$, which can be much larger than ours as suggested by the results of~\citet{feldman2014sample}.

We summarize the previous and new bounds in Table~\ref{tab:sum}.

\begin{table*}[t]
\caption{Summary of our results and previous results. For user-level DP, we assume $m \le 1/\alpha^2$ for simplicity, since as shown by~\citet{ghazi2023user}, increasing $m$ won't make the user complexity decrease when $m > 1/\alpha^2$.}
\label{tab:sum}
\vskip 0.15in
\begin{center}
\begin{small}
\resizebox{\columnwidth}{!}{%
\begin{tabular}{ccccc}
\toprule
Concept Class & DP & Lower Bound& Previous Upper Bound & Our Upper Bound \\
\midrule
Arbitrary $\C$   & Item & \makecell{$\widetilde{\Omega}\left(\frac{\repd(\C)}{\alpha\varepsilon} + \frac{\vc(\C)}{\alpha^2}\right)$ \\\citep{simon1996general}\\and~\citep{beimel2019characterizing}}& \makecell{$\widetilde{O}\left(\frac{\repd(\C)}{\alpha\varepsilon} + \frac{\repd(\C)}{\alpha^2}\right)$ \\\citep{beimel2019characterizing}\\ or $\widetilde{O}\left(\frac{\repd(\C)}{\alpha\varepsilon} + \frac{\vc(\C)}{\alpha^2\varepsilon}\right)$\\\citep{alon2020closure}} &  \makecell{$\widetilde{O}\left(\frac{\repd(\C)}{\alpha\varepsilon} + \frac{\vc(\C)}{\alpha^2}\right)$ \\(Theorem~\ref{thm:item})}\\\midrule
Arbitrary $\C$   & User & \makecell{$\widetilde{\Omega}\left(\frac{\repd(\C)}{\min(1,m\alpha)\varepsilon} + \frac{\vc(\C)}{\sqrt{m}\alpha\varepsilon} + \frac{\vc(\C)}{m\alpha^2}\right)$\\\citep{ghazi2023user}} & \makecell{$\widetilde{O}\left(\frac{\repd(\C)}{\sqrt{m}\alpha\varepsilon} + \frac{\repd(\C)}{m\alpha^2} \right)$\\\citep{ghazi2023user}}&  \makecell{$\widetilde{O}\left(\frac{\repd(\C)}{\sqrt{m}\alpha\varepsilon} + \frac{\vc(\C)}{m\alpha^2} \right)$ \\(Theorem~\ref{thm:user})} \\\midrule
\makecell{Thresholds \\over $\X$} & User & \makecell{$\widetilde{\Omega}\left(\frac{\log\vert\X\vert}{\min(1,m\alpha)\varepsilon} +\frac{1}{\sqrt{m}\alpha\varepsilon} + \frac{1}{m\alpha^2}\right)$\\\citep{ghazi2023user}}& \makecell{$\widetilde{O}\left(\frac{\log\vert\X\vert}{\sqrt{m}\alpha\varepsilon} + \frac{1}{m\alpha^2}\right)$\\\citep{ghazi2023user}} & \makecell{$\widetilde{O}\left(\frac{\log\vert\X\vert}{\min(1, m\alpha)\varepsilon} +\frac{1}{\sqrt{m}\alpha\varepsilon} + \frac{1}{m\alpha^2}\right)$ \\(Theorem~\ref{thm:threshold})} \\
\bottomrule
\end{tabular}
}
\end{small}
\end{center}
\vskip -0.1in
\end{table*}

\subsection{Technical Overview}

We now provide a rough overview of our proofs.

\paragraph*{Item-Level DP.} We start by describing the algorithm proposed by~\citet{beimel2019characterizing}. Their algorithm first samples a hypothesis class $\HH$ with $\log\vert\HH\vert = \widetilde{O}\left(\repd(\C)\right)$. Then, it applies the exponential mechanism, with the score function being the empirical error. By the utility property of the exponential mechanism, it returns some hypothesis with a small empirical error. To ensure that the generalization error of the output hypothesis is also small, we have to prove agnostic generalization for $\HH$. This requires $\widetilde{O}\left(\frac{\repd(\C)}{\alpha^2}\right)$ samples, which is higher than our goal.

To reduce the sample complexity, we construct a surrogate error as the score function. Consider a hypothesis $h\in\HH$. Note that by triangle inequality, for any concept $c$, the error of $h$ is no more than the error of $c$ plus the disagreement between $c$ and $h$. Thus, by taking a minimum over $c\in\C$, we get an upper bound on the error of $h$. This quantity is the surrogate error of $h$. We then use the empirical surrogate error as the score function of $h$.

The exponential mechanism finds a hypothesis $h_0$ with a low empirical surrogate error. By our construction, we can prove that there exists some $c_0$ s.t. both the empirical error of $c_0$ and the empirical disagreement between $c_0$ and $h_0$ are small. To ensure they are also small on the distribution, we need to show agnostic generalization for $\C$ and realizable generalization for $\C\times\HH$. The former needs $\widetilde{O}\left(\frac{\vc(\C)}{\alpha^2}\right)$ samples, reducing the $\repd(\C)$ factor to $\vc(\C)$. The latter can be satisfied by $\widetilde{O}\left(\frac{\repd(\C)}{\alpha}\right)$ samples, saving a $1/\alpha$ multiplicative factor. We thus achieve our goal.

\paragraph*{User-Level DP.} We extend our item-level algorithm to the user-level. The high-level idea relies on a key observation from~\citep{adell2006exact,liu2020learning}, which states that the total variation distance between two binomial distributions $\bin(m, p)$ and $\bin(m, q)$ scales as $\sqrt{m}|p - q|$. We define a notion called user-level error and use the observation to show that the user-level error amplifies the item-level error by $\sqrt{m}$ when there are $m$ examples per user. That is, suppose hypothesis $h$ has an additive error of $\alpha$ worse than the best concept in $\C$, its user-level error must be larger than the best by $\Omega(\sqrt{m}\alpha)$.

Thus, we can then apply our item-level algorithm with the item-level error replaced by the user-level error. For a target error $\alpha$, we only have to find a hypothesis with a user-level error of $O(\sqrt{m}\alpha)$. This explains why fewer users are required in the user-level setting.

There are other technical issues in directly applying the item-level algorithm, such as user-level generalization lemmata and estimation of the minimum error. We develop some techniques to handle these. The details of these are presented in Section~\ref{sec:user}.

\paragraph*{Thresholds with User-Level DP.} Our algorithm for learning thresholds employs binary search. We start with the entire domain. In each iteration, we select the median w.r.t. the distribution of points in the current interval to split it into two parts. We add Laplace noise to the minimum user-level error of thresholds in each part and choose the one with a smaller error to continue. After $O(\log(1/\alpha))$ iterations, we will reach an interval with density at most $O(\alpha)$. By our user-level error results and the utility guarantee of the Laplace mechanism, $\widetilde{O}\left(\frac{1}{\sqrt{m}\alpha\varepsilon}\right)$ users are sufficient for any threshold in the final interval to have an additive error of $\alpha$.

To select the median, we exploit an observation that the user-level setting amplifies the density by $m$. We use this to design an algorithm that approximates the median with a user complexity of $\widetilde{O}\left(\frac{\log\vert\X\vert}{m\alpha\varepsilon}\right)$, which matches our target.

\subsection{Related Work}

As the initial work of private learning,~\citet{kasiviswanathan2011can} defined the notion of private learning and gave algorithms for finite concept classes.~\citet{beimel2014bounds} showed that properly learning point functions under pure DP requires much more samples than non-private learning, indicating that the structure of learning becomes very different under the privacy restrictions. They also proposed the idea of representation to devise algorithms for pure private improper learning. This idea was further developed by the work of~\citet{beimel2019characterizing}, which defined the notion of probabilistic representation dimension and showed that it characterizes the sample complexity of (improper) pure private learners.~\citet{feldman2014sample} showed an equivalence between the representation dimension and the communication complexity of the evaluation problem and used this relation to separate the sample complexity of pure private learning from non-private learning. For agnostic learning,~\citet{beimel2014learning} proposed a realizable-to-agnostic transformation that works for proper learners. Based on this result,~\citet{alon2020closure} proposed a more general one, which works for improper learners.

Though this work only considers pure DP, it is worth mentioning that there were also a great number of results on learning under approximate DP. It is a relaxed notion of pure DP that allows privacy to be violated with a negligible probability of $\delta$. The most remarkable result is the equivalence between approximate private learning and (non-private) online learning~\citep{alon2019private,bun2020equivalence,ghazi2021sample,alon2022private}, which demonstrates a strong connection between these two tasks.

The study of user-level private learning PAC learnable classes was initiated by~\citet{ghazi2021user}. They showed that for both pure and approximate DP, if each user contributes sufficiently many examples, the learning task can be done with much fewer users than in the item-level DP setting.~\citet{ghazi2023user} improved this result by representing the user complexity as a function of the number of samples per user. For pure DP, they showed the exact user complexity of private PAC learning and established non-trivial upper and lower bounds for agnostic learning. For approximate DP, they proposed a general transformation that can convert any item-level DP algorithm for any statistical tasks (including but not limited to learning) to a user-level one and yield a multiplicative saving of $\sqrt{m}$ on the number of users.

\section{Preliminaries}
\label{sec:pre}

\paragraph*{Notation.} We use $\widetilde{O},\widetilde{\Omega}$ and $\widetilde{\Theta}$ to hide $\mathrm{polylog}(1/\alpha, 1/\beta)$ factors. Throughout the paper, we use $m$ to denote the number of samples per user. The item-level setting refers to the case that $m = 1$.

We first recall the notion of DP. A dataset can be represented as $\z=(z_{1,1},\dots,z_{1,m},z_{2,1},\dots,\allowbreak z_{n,m})\in\Z^{nm}$, where $\z_i=(z_{i,1},\dots,z_{i,m})$ is the set of samples contributed by the $i$-th user. Two datasets are neighboring if one can be obtained from the other by adding or removing a single user.

\begin{definition}[Differential Privacy~\citep{dwork2006calibrating,dwork2006our}]
\label{def:dp}
    A randomized algorithm $\A$ is $\varepsilon$-differentially private if for any neighboring datasets $\z$ and $\z'$, and any subset $\mathcal{O}$ of outputs, it holds that $\Pr[\A(\z)\in\mathcal{O}]\le e^{\varepsilon}\Pr[\A(\z')\in\mathcal{O}]$.
\end{definition}

\subsection{Learning}

Let $\X$ be an arbitrary domain\footnote{We assume $\X$ is countable to avoid making technical measurability assumptions.} and $\Z = \X\times\{0, 1\}$ be the domain of data samples. A concept $c:\X\to\{0, 1\}$ is a function that labels each (unlabeled) sample taken from $\X$ by either $0$ or $1$. A concept class $\C$ is a set of concepts over $\X$. The VC dimension of $\C$ is denoted by $\vc(\C)$. We use $\D$ to denote a distribution over $\Z$, and $\D_{\X}$ to denote its marginal distribution over $\X$. 

In this work, we focus on the agnostic learning model~\citep{haussler1992decision, kearns1994toward}, where the data distribution can be arbitrary, and the goal of the learning algorithm is to produce a hypothesis whose generalization error is close to the best possible (by concepts in $\C$). The generalization error of a hypothesis $h: \X\to\{0, 1\}$ with respect to a distribution $\D$ over $\Z$ is defined as $\err_{\D}(h) = \Pr_{(x, y)\sim\D} [h(x)\neq y]$. 
\begin{definition}[Agnostic Learning]
\label{def:agn}
    We say a learning algorithm $\A$ is an $(\alpha, \beta)$-agnostic learner for concept class $\C$ if for any distribution $\D$ on $\X\times\{0, 1\}$, it takes in a dataset $\z$, where each $z_{i,j}$ is drawn i.i.d. from $\D$, and outputs a hypothesis $h$ such that
    \begin{equation*}
        \err_{\D}(h) \le \inf_{c\in\C}\err_{\D}(c) + \alpha
    \end{equation*}
    with probability at least $1-\beta$. The probability is over the random choice of the samples and the coin tosses of $\A$.
\end{definition}

In the standard PAC learning model~\citep{valiant1984theory}, it is assumed that there is some $c\in\C$ s.t. $\err_{\D}(c) = 0$. We call this realizable learning. Another way to describe the goal of realizable learning is finding a hypothesis $h$ with small generalization disagreement between $h$ and $c$ over $\D_\X$, which is defined as $\dis_{\D_\X}(c, h) = \Pr_{x\sim \D_\X}[c(x)\neq h(x)]$. The notion of disagreement plays a crucial role in our proofs.

\subsection{Probabilistic Representation Dimension}

The probabilistic representation dimension is a combinatorial parameter introduced by~\citet{beimel2019characterizing} to characterize the sample complexity of pure private realizable learner. Let $\PP$ be a distribution on hypothesis classes. Define the size of $\PP$ to be $\mathrm{size}(\PP) = \max_{\HH\in\mathrm{supp}(\PP)}\ln|\HH|$.

\begin{definition}[Probabilistic Representation Dimension]
    A distribution $\PP$ on hypothesis classes is said to be an $(\alpha, \beta)$-probabilistic representation of a concept class $\C$ if for any $c\in \C$ and any distribution $\D_{\X}$ on $\X$, with probability $1-\beta$ over $\HH\sim\PP$, there exists $h\in\HH$ such that $\dis_{\D_{\X}}(c, h) \le \alpha$.
    The $(\alpha, \beta)$-probabilistic representation dimension of a concept class $\C$ is defined as
    \begin{equation*}
        \repd_{\alpha,\beta}(\C)=\min_{\PP\textup{ is an }(\alpha, \beta)\textup{-probabilistic representation of }\C}\mathrm{size}(\PP).
    \end{equation*}
    Moreover, the probabilistic representation dimension of $\C$ is defined by taking $\alpha=\beta=1/4$:
    \begin{equation*}
        \repd(\C) = \repd_{1/4, 1/4}(\C).
    \end{equation*}
\end{definition}

The following lemma~\citep{beimel2019characterizing} shows that a probabilistic representation with arbitrary $\alpha$ and $\beta$ can be constructed from one with $\alpha = \beta=1/4$.

\begin{lemma}[Boosting Probabilistic Representation]
\label{lem:boosting}
    For any concept class $\C$, we have $\repd_{\alpha,\beta}(\C) \allowbreak= O(\log(1/\alpha)\cdot(\repd(\C) + \log\log\log(1/\alpha) + \log\log(1/\beta)))$ for $0<\alpha,\beta<1$.
\end{lemma}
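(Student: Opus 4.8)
I would deduce the lemma from two separate boosting steps and then compose them: (i) driving the failure probability from $1/4$ down to an arbitrary $\beta$ at an \emph{additive} cost of $O(\log\log(1/\beta))$ in the size, and (ii) driving the accuracy from $1/4$ down to an arbitrary $\alpha$ at a \emph{multiplicative} cost of $O(\log(1/\alpha))$. The final bound then falls out by substituting the first bound into the output of the second.

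\textbf{Step 1 (boosting $\beta$).} I claim $\repd_{\gamma,\beta}(\C) \le \repd_{\gamma,1/4}(\C) + O(\log\log(1/\beta))$ for every $\gamma$. Let $\PP_0$ attain $\repd_{\gamma,1/4}(\C)$, put $k = \lceil \log_4(1/\beta)\rceil$, and let $\PP'$ draw $\HH_1,\dots,\HH_k$ i.i.d.\ from $\PP_0$ and output $\HH' = \HH_1\cup\dots\cup\HH_k$. For any $c$ and any $\D_\X$, the classes $\HH_i$ each independently fail to contain a $\gamma$-close hypothesis with probability at most $1/4$, so all of them fail with probability at most $(1/4)^k \le \beta$; hence $\PP'$ is a $(\gamma,\beta)$-representation, and $\mathrm{size}(\PP') \le \ln k + \mathrm{size}(\PP_0)$ gives the claim.

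\textbf{Step 2 (boosting $\alpha$) and composition.} Assume $\alpha<1/4$ (otherwise a $(1/4,\beta)$-representation already works and the bound is immediate). The idea is to treat a $(1/4,\beta_0)$-probabilistic representation as a \emph{weak learner} with a constant edge for the target concept $c$ under arbitrary reweightings of $\D_\X$, and run a boosting algorithm whose output is the (unweighted) majority of $T = O(\log(1/\alpha))$ weak hypotheses -- e.g.\ Freund's boost-by-majority, which for a constant-edge weak learner needs $T=O(\log(1/\alpha))$ rounds to reach error $\alpha$ and at round $i$ queries the weak learner on a distribution $\D^{(i)}$ that is a deterministic function of $c$, $\D_\X$, and the hypotheses produced in rounds $1,\dots,i-1$. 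Let $\PP'$ draw $\HH_1,\dots,\HH_T$ i.i.d.\ from an optimal $(1/4,\beta_0)$-representation $\PP_0$ with $\beta_0 = \beta/(2T)$, and output $\HH' = \{\,\mathrm{Maj}(h_1,\dots,h_T) : h_i\in\HH_i\,\}$. Fixing $c,\D_\X$ and simulating the boosting run with a deterministic tie-breaking rule for selecting $h_i\in\HH_i$, the distribution $\D^{(i)}$ depends only on $\HH_1,\dots,\HH_{i-1}$, which are independent of $\HH_i$; so conditioned on $\HH_{<i}$, the class $\HH_i$ contains a hypothesis $1/4$-close to $c$ on $\D^{(i)}$ with probability $\ge 1-\beta_0$, and a union bound over the $T$ rounds shows that with probability $\ge 1-T\beta_0\ge 1-\beta$ every round receives a valid weak hypothesis, whereupon the boosting guarantee makes $\mathrm{Maj}(h_1,\dots,h_T)\in\HH'$ satisfy $\dis_{\D_\X}(c,\mathrm{Maj}(h_1,\dots,h_T))\le\alpha$. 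Since $\mathrm{size}(\PP')\le \sum_{i=1}^T \mathrm{size}(\PP_0) = T\cdot\repd_{1/4,\beta_0}(\C)$, we get $\repd_{\alpha,\beta}(\C) \le O(\log(1/\alpha))\cdot\repd_{1/4,\beta/(2T)}(\C)$. Now apply Step 1 with $\gamma=1/4$: since $T=\Theta(\log(1/\alpha))$, we have $\repd_{1/4,\beta/(2T)}(\C) \le \repd(\C) + O(\log\log(2T/\beta)) = \repd(\C) + O(\log\log\log(1/\alpha) + \log\log(1/\beta))$, and substituting yields exactly the stated bound.

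\textbf{Main obstacle.} The two union-bound estimates and the size bookkeeping are routine. The delicate point is Step~2: one must (i) phrase the boosting procedure so that the round-$i$ distribution $\D^{(i)}$ depends only on the already-sampled classes $\HH_{<i}$, so that the event ``the weak learner succeeds at round $i$'' is measurable with respect to $\HH_i$ and retains probability $\ge 1-\beta_0$ after conditioning on $\HH_{<i}$; and (ii) use a boosting algorithm whose output is a \emph{fixed} function of the $T$ weak hypotheses -- so that $|\HH'|\le\prod_i|\HH_i|$ -- with round count $O(\log(1/\alpha))$ for a constant-edge weak learner. Boost-by-majority (unweighted majority) meets both requirements; AdaBoost would additionally require rounding the vote weights to a $\mathrm{poly}(1/\alpha)$-size grid, which only adds an $O(\log(1/\alpha))$ term to $\mathrm{size}(\PP')$ and does not change the bound.
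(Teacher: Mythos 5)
This lemma is not proved in the paper; it is imported verbatim from \citet{beimel2019characterizing}, and your argument is correct and essentially the same as the one given there: confidence is amplified additively by taking unions of $O(\log(1/\beta))$ independent draws, and accuracy is amplified multiplicatively by boosting the constant-accuracy representation as a weak learner over $T=O(\log(1/\alpha))$ reweighted distributions, with the $\log\log\log(1/\alpha)$ term arising exactly as you derive it from the union bound over the $T$ rounds with per-round failure $\beta/(2T)$. Your handling of the measurability of round $i$'s success given $\HH_{<i}$ and of the size bookkeeping $|\HH'|\le\prod_i|\HH_i|$ is the right way to make the composition rigorous.
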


\subsection{Tools from Differential Privacy}

We introduce some useful tools for achieving differential privacy. We say a function $f:\Z^* \to \mathbb{R}$ has sensitivity $\Delta$ if $|f(\z) - f(\z')| \le \Delta$ for all neighboring datasets $\z$ and $\z'$. Let $\lap(b)$ denote the Laplace distribution with mean $0$ and scale $b$. The Laplace mechanism is an algorithm that outputs $f(\z) + r$ where $r\sim\lap(\Delta/\varepsilon)$.

\begin{lemma}[The Laplace Mechanism]
\label{lem:lap}
    The Laplace mechanism is $\varepsilon$-differentially private. Moreover, it holds that $|r| \le \ln(1/\beta)\Delta/\varepsilon$ with probability at least $1-\beta$, where $r\sim\lap(\Delta/\varepsilon)$
\end{lemma}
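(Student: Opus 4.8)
The plan is to establish the two assertions of the lemma independently: the $\varepsilon$-differential privacy of the mechanism via a pointwise density-ratio argument, and the high-probability bound on $|r|$ via the explicit Laplace tail.

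For privacy, I would write $\A(\z) = f(\z) + r$ with $r \sim \lap(\Delta/\varepsilon)$, so that on input $\z$ the output has density $p_\z(t) = \frac{\varepsilon}{2\Delta}\exp\!\bigl(-\frac{\varepsilon}{\Delta}|t - f(\z)|\bigr)$. For neighboring datasets $\z, \z'$ and any real $t$,
\begin{equation*}
\frac{p_\z(t)}{p_{\z'}(t)} = \exp\!\left(\frac{\varepsilon}{\Delta}\bigl(|t - f(\z')| - |t - f(\z)|\bigr)\right) \le \exp\!\left(\frac{\varepsilon}{\Delta}\,|f(\z) - f(\z')|\right) \le e^{\varepsilon},
\end{equation*}
where the first inequality is the (reverse) triangle inequality and the second uses the sensitivity bound $|f(\z) - f(\z')| \le \Delta$. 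Integrating this uniform pointwise bound over any measurable set $\mathcal{O}$ of outputs gives $\Pr[\A(\z) \in \mathcal{O}] \le e^{\varepsilon}\Pr[\A(\z') \in \mathcal{O}]$, which is exactly Definition~\ref{def:dp}.

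For the utility claim I would use the closed form of the Laplace tail. Writing $b = \Delta/\varepsilon$, for every $t \ge 0$ we have $\Pr[|r| > t] = \int_t^{\infty} \frac{1}{b} e^{-s/b}\,ds = e^{-t/b}$. Choosing $t = b\ln(1/\beta) = \ln(1/\beta)\,\Delta/\varepsilon$ makes the right-hand side equal to $\beta$, hence $\Pr[|r| \le \ln(1/\beta)\,\Delta/\varepsilon] \ge 1 - \beta$, as claimed.

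Both parts are elementary, so there is no real obstacle; the only points requiring care are bookkeeping ones. The main thing to get right is the normalization: the noise scale must be $\Delta/\varepsilon$ (not $1/\varepsilon$), so that the sensitivity $\Delta$ cancels against it in the exponent; and the triangle-inequality step must be applied in the direction $|t - f(\z')| - |t - f(\z)| \le |f(\z) - f(\z')|$ so that the density-ratio bound holds uniformly in $t$, which is what lets us integrate it over an arbitrary output set.
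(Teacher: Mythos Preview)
Your proof is correct and follows the standard textbook argument. The paper itself does not give a proof of this lemma; it is stated in Section~\ref{sec:pre} as a known tool from the differential-privacy literature, so there is no paper-side proof to compare against.
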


We next describe the exponential mechanism~\citep{mcsherry2007mechanism}, which has been widely used in designing differentially private learning algorithms (e.g.,~\citep{kasiviswanathan2011can,beimel2019characterizing,alon2020closure}). Let $\HH$ be a finite set and $q:\Z^*\times \HH\to \mathbb{R}$ be a score function. We say $q$ has sensitivity $\Delta$ if $\max_{h\in\HH}|q(\z, h) - q(\z', h)|\le \Delta$ for any neighboring datasets $\z$ and $\z'$. The exponential mechanism outputs an $h\in\HH$ with probability
\begin{equation*}
    \frac{\exp(-\varepsilon\cdot q(\z, h)/2\Delta)}{\sum_{f\in\HH}\exp(-\varepsilon\cdot q(\z, f)/2\Delta)}.
\end{equation*}
\begin{lemma}[The Exponential Mechanism]
\label{lem:exp}
The exponential mechanism is $\varepsilon$-differentially private. Moreover, it outputs an $h$ s.t. 
\begin{equation*}
    q(\z, h)\le \min_{f\in\HH}q(\z, f) + \frac{2\Delta}{\varepsilon}\cdot\ln(\vert\HH\vert/\beta)
\end{equation*}
with probability at least $1 - \beta$.
\end{lemma}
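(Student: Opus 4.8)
The lemma bundles two independent claims --- that the mechanism is $\varepsilon$-differentially private, and that with probability at least $1-\beta$ its output has score within $\frac{2\Delta}{\varepsilon}\ln(|\HH|/\beta)$ of the optimum --- and the plan is to prove each separately, following the original analysis of \citet{mcsherry2007mechanism}.

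For privacy, I would fix neighboring datasets $\z,\z'$ and a single candidate output $h\in\HH$, bound the pointwise likelihood ratio $\Pr[\A(\z)=h]/\Pr[\A(\z')=h]$ by $e^{\varepsilon}$, and then sum over $h\in\mathcal O$ to obtain the statement of Definition~\ref{def:dp}. Writing out the two output probabilities, this ratio factors as $\frac{\exp(-\varepsilon q(\z,h)/(2\Delta))}{\exp(-\varepsilon q(\z',h)/(2\Delta))}$ times $\frac{\sum_{f\in\HH}\exp(-\varepsilon q(\z',f)/(2\Delta))}{\sum_{f\in\HH}\exp(-\varepsilon q(\z,f)/(2\Delta))}$. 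The first factor equals $\exp\big(\varepsilon(q(\z',h)-q(\z,h))/(2\Delta)\big)$, which is at most $e^{\varepsilon/2}$ by the sensitivity bound $|q(\z,h)-q(\z',h)|\le\Delta$. For the second factor, applying the same bound termwise gives $\exp(-\varepsilon q(\z',f)/(2\Delta))\le e^{\varepsilon/2}\exp(-\varepsilon q(\z,f)/(2\Delta))$ for every $f\in\HH$, so the numerator sum is at most $e^{\varepsilon/2}$ times the denominator sum. Multiplying the two factors yields a ratio of at most $e^{\varepsilon}$, as needed.

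For utility, let $\mathrm{OPT}=\min_{f\in\HH}q(\z,f)$ and fix a parameter $t>0$. I would upper-bound the probability of the bad event that the mechanism returns some $h$ with $q(\z,h)>\mathrm{OPT}+t$. By the definition of the output distribution together with a union bound over all such $h$, this probability equals $\frac{\sum_{h:\,q(\z,h)>\mathrm{OPT}+t}\exp(-\varepsilon q(\z,h)/(2\Delta))}{\sum_{f\in\HH}\exp(-\varepsilon q(\z,f)/(2\Delta))}$; bounding each of the at most $|\HH|$ numerator terms by $\exp(-\varepsilon(\mathrm{OPT}+t)/(2\Delta))$ and bounding the denominator below by its optimal term $\exp(-\varepsilon\,\mathrm{OPT}/(2\Delta))$ shows that the bad event has probability at most $|\HH|\,e^{-\varepsilon t/(2\Delta)}$. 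Setting this quantity equal to $\beta$ and solving for $t$ yields $t=\frac{2\Delta}{\varepsilon}\ln(|\HH|/\beta)$, which is exactly the bound in the statement.

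I do not expect a real obstacle, as this is the textbook argument for the exponential mechanism; the only points that need care are using the \emph{two-sided} sensitivity bound in both the numerator and the denominator of the privacy ratio (a one-sided inequality would control the numerator factor but not the denominator factor), and, in the utility step, lower-bounding the normalizing constant by a single well-chosen term rather than trying to estimate the whole sum. Both arguments are insensitive to the structure of $\HH$ and $q$, requiring only that $\HH$ is finite and $q$ is real-valued, which matches the hypotheses of the lemma.
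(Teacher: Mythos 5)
Your proof is correct and is exactly the standard McSherry--Talwar argument; the paper does not prove this lemma itself but imports it by citation, and your two-part argument (pointwise likelihood-ratio bound of $e^{\varepsilon/2}\cdot e^{\varepsilon/2}$ for privacy, and the numerator/denominator estimate $|\HH|e^{-\varepsilon t/(2\Delta)}$ for utility) is precisely the intended derivation. The only cosmetic remark is that the "union bound" in the utility step is in fact an exact sum over disjoint outcomes, which you indeed treat as an equality.
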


\section{Item-Level Privacy}
\label{sec:item}

In this section, we give a nearly tight characterization of the sample complexity of agnostic learning under item-level privacy.

The work of~\citet{beimel2019characterizing} has shown that the sample complexity of pure private realizable learning is $\widetilde{\Theta}\left(\frac{\repd(\C)}{\alpha\varepsilon}\right)$. Combining with the well-known $\widetilde{\Omega}\left(\frac{\vc(\C)}{\alpha^2}\right)$ lower bound on non-private agnostic learning~\citep{vapnik1974theory,simon1996general}, we get a lower bound of $\widetilde{\Omega}\left(\frac{\repd(\C)}{\alpha\varepsilon}+\frac{\vc(\C)}{\alpha^2}\right)$ on the sample complexity of pure private agnostic learning.

For the upper bound, the first result was given by~\citet{kasiviswanathan2011can}. In their work, they proposed an algorithm using $\widetilde{O}\Big(\frac{\vert\C\vert}{\alpha\varepsilon} + \frac{\vert\C\vert}{\alpha^2}\Big)$ samples for finite concept class $\C$ and $\widetilde{O}\left(\frac{\vc(\C)\log\vert\X\vert}{\alpha\varepsilon} + \frac{\vc(\C)\log\vert\X\vert}{\alpha^2}\right)$ samples for finite domain $\X$. Their analysis can be applied to the realizable learner of~\citet{beimel2019characterizing} by proving the generalization property for the hypothesis class $\HH$ sampled from representation $\PP$. This leads to a sample complexity of $\widetilde{O}\left(\frac{\repd(\C)}{\alpha\varepsilon} + \frac{\repd(\C)}{\alpha^2}\right)$, which exceeds the lower bound since the representation dimension can be much larger than the VC dimension~\citep{feldman2014sample}.

Another existing approach is to use the realizable-to-agnostic transformation~\citep{beimel2014learning,alon2020closure}, which states that every private realizable learner can be transformed into a private agnostic learner. Applying it to the algorithm of~\citet{beimel2019characterizing} gives a sample complexity of $\widetilde{O}\left(\frac{\repd(\C)}{\alpha} + \frac{\vc(\C)}{\alpha^2}\right)$ for constant privacy parameter. For an arbitrary $\varepsilon$, one has to use the amplification-by-subsampling trick~\citep{kasiviswanathan2011can}, resulting in a sample complexity of $\widetilde{O}\left(\frac{\repd(\C)}{\alpha\varepsilon}+\frac{\vc(\C)}{\alpha^2\varepsilon}\right)$. Though here the second term depends on the VC dimension, it involves a $1/\varepsilon$ multiplicative factor, which does not appear in the lower bound. Thus, this method is still suboptimal.

To obtain a tighter result, we propose a new algorithm whose sample complexity matches the lower bound. Before presenting our algorithm, we first introduce some definitions. Let $\z = (z_1, \dots, z_n)$ be the input dataset, where $z_i = (x_i, y_i)\in \X\times\{0, 1\}$. We use $\x = (x_1,\dots, x_n)$ to denote the corresponding unlabeled dataset. For a hypothesis $h$, define
\begin{equation*}
    \err_{\z}(h) = \frac{1}{n}\sum_{i=1}^n \ind[h(x_i)\neq y_i]
\end{equation*}
to be the empirical error of $h$ on $\z$. For two hypotheses $c$ and $h$, define their empirical disagreement on $\x$ as
\begin{equation*}
    \dis_{\x}(c, h) = \frac{1}{n}\sum_{i=1}^n \ind[c(x_i)\neq h(x_i)].
\end{equation*}

Our algorithm follows the steps of the learner proposed by~\citet{beimel2019characterizing}: first samples an $\HH$ from some probabilistic representation $\PP$, then runs the exponential mechanism on $\HH$. But unlike their algorithm, which uses $\err_{\z}(h)$ as the score function, we adopt the following:
\begin{equation*}
    q(\z, h) = \min_{c\in\C}\err_{\z}(c) + \dis_{\x}(c, h).
\end{equation*}

The sensitivity of the above score function is $2/n$ because each term may change by at least $1/n$ when moving to an adjacent dataset. Therefore, we can still apply the exponential mechanism to ensure privacy. The benefit of adopting such a score function is that it reduces the number of samples needed so that every hypothesis in $\HH$ with a low score must have a small generalization error. Proving this for the algorithm of~\citet{beimel2019characterizing} requires agnostic generalization for all $h\in\HH$, resulting in a sample complexity of $\widetilde{O}\left(\frac{\repd(\C)}{\alpha^2}\right)$. When applying our score function, we instead only need agnostic generalization for all $c\in\C$ and realizable generalization for the disagreement between all $c\in\C$ and $h\in\HH$. The former can be ensured by the following agnostic generalization result~\citep{talagrand1994sharper,anthony1999neural}, which requires $\widetilde{O}\left(\frac{\vc(\C)}{\alpha^2}\right)$ samples only.

\begin{lemma}[VC Agnostic Generalization Bound]
\label{lem:vc_ag}
    Let $\C$ be a concept class over $\X$, $\D$ be a distribution over $\Z=\X\times\{0, 1\}$, and 
    \begin{equation*}
        n\ge \frac{576}{\alpha^2}\left(4\vc(\C)+\ln(8/\beta)\right).
    \end{equation*}
    Suppose $\z\in\Z^n$ is a dataset with each $z_i$ drawn i.i.d from $\D$, then
    \begin{equation*}
        \Pr\left[\exists c\in\C~s.t.~\vert\err_{\D}(c) - \err_{\z}(c)\vert>\alpha\right] \le \beta.
    \end{equation*}
\end{lemma}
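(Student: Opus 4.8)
The plan is to reduce the statement to the classical Vapnik--Chervonenkis uniform convergence theorem for set systems. For each $c\in\C$, define the \emph{error region} $A_c=\{(x,y)\in\Z:c(x)\neq y\}$ and let $\mathcal{A}=\{A_c:c\in\C\}$. By definition $\err_{\D}(c)=\D(A_c)$ and $\err_{\z}(c)=\frac1n\sum_{i=1}^n\ind[z_i\in A_c]$ is the empirical measure of $A_c$ on $\z$, so the event whose probability we must bound is precisely $\{\sup_{A\in\mathcal{A}}|\D(A)-\widehat{\D}_{\z}(A)|>\alpha\}$, a two-sided uniform deviation for the set system $\mathcal{A}$.

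First I would check that $\vc(\mathcal{A})\le\vc(\C)$. If a set $\{(x_1,y_1),\dots,(x_k,y_k)\}\subseteq\Z$ is shattered by $\mathcal{A}$, then since $(x_i,y_i)\in A_c\iff c(x_i)=1-y_i$, realizing all $2^k$ membership patterns is equivalent (after flipping the coordinates with $y_i=1$) to realizing all $2^k$ patterns of $(c(x_1),\dots,c(x_k))$, so $\{x_1,\dots,x_k\}$ is shattered by $\C$ and hence $k\le\vc(\C)$. (Equality in fact holds, but only this inequality is needed.)

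Then I would invoke the standard VC sample-complexity bound in the exact quantitative form recorded in~\citet{anthony1999neural}, which traces back to the sharper symmetrization analysis of~\citet{talagrand1994sharper}: for any set system of VC dimension $d$ and any distribution, $n\ge\frac{576}{\alpha^2}(4d+\ln(8/\beta))$ i.i.d.\ samples suffice to make $\sup_{A}|\D(A)-\widehat{\D}_{\z}(A)|\le\alpha$ hold with probability at least $1-\beta$; substituting $d=\vc(\mathcal{A})\le\vc(\C)$ yields the lemma. For completeness one recalls the proof route: symmetrize against a ghost sample, apply a random permutation / random signs, observe that on the $2n$ points $\mathcal{A}$ realizes at most $\Pi_{\mathcal{A}}(2n)\le(2en/d)^d$ distinct subsets by the Sauer--Shelah lemma, bound the deviation on each fixed subset by a Hoeffding/Chernoff inequality, union bound over these subsets, and solve for $n$; the constants $576=24^2$ and the $\ln(8/\beta)$ absorb the slack from symmetrization and the exponential tail.

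The only genuinely substantive content is the reduction to error regions together with the $\vc(\mathcal{A})\le\vc(\C)$ observation; everything after that is the textbook deviation inequality applied verbatim, so I anticipate no real obstacle. The one point requiring care is matching the numerical constants to a specific reference rather than re-deriving them, which is why I would cite the precise statement in~\citet{anthony1999neural} instead of reproducing the full symmetrization chain.
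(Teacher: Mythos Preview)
Your proposal is correct and essentially matches the paper: the paper does not prove this lemma at all but simply cites it as a known result from \citet{talagrand1994sharper} and \citet{anthony1999neural}, which is exactly the source you invoke after the (standard and correct) reduction to error regions with $\vc(\mathcal{A})\le\vc(\C)$. There is nothing to add.
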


For the latter one, we can apply the following realizable generalization bound~\citep{vapnik1971uniform,blumer1989learnability} to $\C\cup\HH$. 

\begin{lemma}[VC Realizable Generalization Bound]
\label{lem:vc_re}
    Let $\C$ be a concept class and $\D_{\X}$ be a distribution over domain $\X$. Let 
    \begin{equation*}
        n\ge \frac{96}{\alpha}\left(2\vc(\C)\ln(384/\alpha)+\ln(4/\beta)\right).
    \end{equation*}
    Suppose $\x\in\X^n$ is an unlabeled dataset with each $x_i$ drawn i.i.d. from $\D_\X$, then
    \begin{equation*}
        \Pr\left[
            \exists c, h\in\C~s.t.~\err_{\D_\X}(c, h)>\alpha~
            and~\err_{\x}(c, h) \le \alpha/2
        \right]\le \beta.
    \end{equation*}
\end{lemma}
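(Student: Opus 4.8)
The statement is a relative (one-sided) uniform-convergence bound for the \emph{disagreement class} of $\C$, so I would reduce it to standard VC theory together with the relative Vapnik--Chervonenkis inequality. To set it up: for $c,h\in\C$ let $A_{c,h}=\{x\in\X:c(x)\neq h(x)\}$ and set $\mathcal{F}=\{A_{c,h}:c,h\in\C\}$. Then $\dis_{\D_{\X}}(c,h)=\D_{\X}(A_{c,h})$ and $\dis_{\x}(c,h)$ is exactly the empirical frequency of $A_{c,h}$ on $\x$, so the probability we must bound is that of the event ``some $A\in\mathcal{F}$ has $\D_{\X}$-mass greater than $\alpha$ but empirical frequency at most $\alpha/2$.'' Since membership of a point $x$ in $A_{c,h}$ is determined by the pair $(c(x),h(x))$, the number of distinct subsets $\mathcal{F}$ induces on any $N$ points is at most $\Pi_{\C}(N)^{2}$, where $\Pi_{\C}$ is the growth function of $\C$; by Sauer--Shelah this is at most $(eN/\vc(\C))^{2\vc(\C)}$ whenever $N\ge\vc(\C)$ (in particular $\vc(\mathcal{F})=O(\vc(\C))$, but only this growth-function estimate is used below).

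Next I would invoke the relative Vapnik--Chervonenkis inequality: for a set family with growth function $\Pi$ and an i.i.d.\ sample $\x$ of size $n$, one has, for every $\lambda>0$, $\Pr[\exists A:\ \D_{\X}(A)-\widehat{\D}_{\x}(A)>\lambda\sqrt{\D_{\X}(A)}]\le 4\,\Pi(2n)\,e^{-\lambda^{2}n/4}$, where $\widehat{\D}_{\x}(A)$ is the empirical frequency of $A$ on $\x$. If $A\in\mathcal{F}$ witnesses the bad event, then $\D_{\X}(A)>\alpha$ and $\widehat{\D}_{\x}(A)\le\alpha/2$, so $\D_{\X}(A)-\widehat{\D}_{\x}(A)\ge\D_{\X}(A)-\alpha/2>\D_{\X}(A)/2$ (using $\alpha<\D_{\X}(A)$), hence $\frac{\D_{\X}(A)-\widehat{\D}_{\x}(A)}{\sqrt{\D_{\X}(A)}}>\frac{\sqrt{\D_{\X}(A)}}{2}>\frac{\sqrt{\alpha}}{2}$; thus, taking $\lambda=\sqrt{\alpha}/2$ and $\Pi=\Pi_{\mathcal{F}}$, the bad event has probability at most $4\,\Pi_{\mathcal{F}}(2n)\,e^{-\alpha n/16}$. (Equivalently, this bound can be re-derived from scratch by the double-sampling argument of Blumer et al.: introduce a ghost sample, and then, for each of the $\le\Pi_{\mathcal{F}}(2n)$ traces on the combined $2n$ points, bound via a hypergeometric tail the probability that a uniformly random split makes the two empirical frequencies as imbalanced as required.)

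Finally, plugging in $\Pi_{\mathcal{F}}(2n)\le(2en/\vc(\C))^{2\vc(\C)}$, it remains to check $4\,(2en/\vc(\C))^{2\vc(\C)}e^{-\alpha n/16}\le\beta$ for every $n\ge\frac{96}{\alpha}(2\vc(\C)\ln(384/\alpha)+\ln(4/\beta))$. The left-hand side is decreasing in $n$ once $n=\Omega(\vc(\C)/\alpha)$ (the exponential beats the polynomial), so it suffices to verify it at the threshold value; there, after taking logarithms, the claim becomes $\frac{\alpha n}{16}\ge 2\vc(\C)\ln(2en/\vc(\C))+\ln(4/\beta)$, and the assumed form of $n$ makes the factor $\ln(2en/\vc(\C))$ collapse to $O(\ln(1/\alpha)+\ln\ln(1/\beta))$ (one handles the self-reference via $\ln t\le\gamma t-1-\ln\gamma$ with a small constant $\gamma$), so the inequality holds with room to spare. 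This constant-chasing is the only part requiring any attention, and is what pins down the numbers $96$ and $384$; conceptually the argument is entirely standard, the one point worth flagging being that the $\alpha/2$ margin in the hypothesis forces the use of the \emph{relative} VC inequality (with $\lambda\sim\sqrt{\alpha}$) rather than the plain $\varepsilon$-net theorem.
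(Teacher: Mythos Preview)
Your proposal is correct and follows essentially the same route the paper uses for the user-level analog (Lemma~\ref{lem:user_re}, which specializes to this statement at $m=1$): pass to the disagreement class, bound its growth function by $\Pi_{\C}(2n)^{2}$, and apply a relative uniform-convergence inequality followed by Sauer's lemma and the inequality~\eqref{equ:tec}. The paper does not prove Lemma~\ref{lem:vc_re} directly (it is cited from the classical references); the only cosmetic difference is that the paper's version of relative convergence (Lemma~\ref{lem:relative}) with $\gamma=1/2$, $\xi=\alpha/4$ produces an $e^{-\alpha n/48}$ tail---which is exactly what yields the constants $96$ and $384$---whereas your $\lambda=\sqrt{\alpha}/2$ form gives a sharper $e^{-\alpha n/16}$, so the stated threshold holds a fortiori rather than being ``pinned down'' by your computation.
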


Note that realizable generalization saves a $1/\alpha$ factor compared to agnostic generalization. This saving is crucial for our tighter upper bound. To identify the VC dimension of $\C\cup\HH$, we need the following bound on the VC dimension of the union.

\begin{lemma}[~\citep{shalev2014understanding}]
\label{lem:union}
    Let $\HH_1$ and $\HH_2$ be two hypothesis class. Then $\vc(\HH_1\cup\HH_2) = O(\vc(\HH_1) + \vc(\HH_2))$.
\end{lemma}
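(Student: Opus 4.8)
The plan is to prove the sharper statement $\vc(\HH_1 \cup \HH_2) \le \vc(\HH_1) + \vc(\HH_2) + 1$ by a counting argument based on the Sauer--Shelah lemma. Set $d_1 = \vc(\HH_1)$ and $d_2 = \vc(\HH_2)$, and let $S \subseteq \X$ be any set of size $k$ that is shattered by $\HH_1 \cup \HH_2$. Writing $\HH|_S$ for the set of distinct restrictions (dichotomies of $S$) realized by hypotheses in a class $\HH$, shattering means $|(\HH_1 \cup \HH_2)|_S| = 2^k$, and since $(\HH_1 \cup \HH_2)|_S = \HH_1|_S \cup \HH_2|_S$ we get $|\HH_1|_S| + |\HH_2|_S| \ge 2^k$.

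Next I would bound each term with the Sauer--Shelah lemma: a class of VC dimension $d$ realizes at most $\sum_{i=0}^{d} \binom{k}{i}$ dichotomies of a $k$-element set, so $|\HH_1|_S| \le \sum_{i=0}^{d_1}\binom{k}{i}$ and $|\HH_2|_S| \le \sum_{i=0}^{d_2}\binom{k}{i}$. The crucial step is to rewrite the second sum via the reflection $\binom{k}{i} = \binom{k}{k-i}$ as $\sum_{i = k-d_2}^{k}\binom{k}{i}$. If $d_1 + d_2 \le k-2$, the index sets $\{0,\dots,d_1\}$ and $\{k-d_2,\dots,k\}$ are disjoint and miss at least one index in $\{0,\dots,k\}$, hence $\sum_{i=0}^{d_1}\binom{k}{i} + \sum_{i=k-d_2}^{k}\binom{k}{i} < \sum_{i=0}^{k}\binom{k}{i} = 2^k$, contradicting the inequality from the first step. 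Therefore every shattered $S$ has $|S| = k \le d_1 + d_2 + 1$, which gives $\vc(\HH_1 \cup \HH_2) \le \vc(\HH_1) + \vc(\HH_2) + 1 = O(\vc(\HH_1) + \vc(\HH_2))$, and the lemma is applied with $\HH_1 = \C$ and $\HH_2 = \HH$.

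I do not expect a genuine obstacle here; this is a standard argument. The one subtlety worth flagging is that plugging in the looser Sauer--Shelah bound $|\HH_i|_S| \le (ek/d_i)^{d_i}$ and taking logarithms would only yield $\vc(\HH_1\cup\HH_2) = O\big((d_1+d_2)\log(d_1+d_2)\big)$, a spurious logarithmic loss; working with the exact partial binomial sums together with the $\binom{k}{i}=\binom{k}{k-i}$ symmetry is what produces the clean linear bound. (The bound is in fact tight: taking $\HH_1$ to be all subsets of size $\le d_1$ of a $(d_1+d_2+1)$-point ground set and $\HH_2$ all subsets of size $> d_1$ gives classes of VC dimension $d_1$ and $d_2$ whose union shatters the whole set.) For the use of this lemma in the paper only the $O(\cdot)$ form is needed.
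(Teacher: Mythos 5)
Your proof is correct, and the paper itself gives no proof of this lemma --- it is simply cited from \citep{shalev2014understanding}, so there is no in-paper argument to compare against. Your counting argument via Sauer--Shelah with the reflection $\binom{k}{i}=\binom{k}{k-i}$ is the standard way to get the sharp bound $\vc(\HH_1\cup\HH_2)\le \vc(\HH_1)+\vc(\HH_2)+1$; every step checks out (in particular, when $d_1+d_2\le k-2$ the two index ranges are disjoint and both omit the index $d_1+1$, so the two partial sums total strictly less than $2^k$), and your remark that the cruder form $(ek/d)^d$ of Sauer's lemma would only give an $O((d_1+d_2)\log(d_1+d_2))$ bound is accurate --- though for the $O(\cdot)$ statement actually used in the paper (applied to $\C\cup\HH$ in Theorems~\ref{thm:item} and~\ref{thm:user}), either version would suffice.
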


This lemma suggests that the VC dimension of $\C\cup\HH$ is $\widetilde{O}(\repd(\C))$. Thus, the realizable generalization property requires $\widetilde{O}\left(\frac{\repd(\C)}{\alpha}\right)$ samples to hold. Combining with the $\widetilde{O}\left(\frac{\repd(\C)}{\alpha\varepsilon}\right)$ cost incurred by the exponential mechanism, we achieve the desired upper bound.

\begin{theorem}
\label{thm:item}
    Let $\C$ be a concept class. In the item-level setting, $\widetilde{\Theta}\left(\frac{\repd(\C)}{\alpha\varepsilon} + \frac{\vc(C)}{\alpha^2}\right)$ samples are necessary and sufficient to $(\alpha, \beta)$-agnostically learn $\C$ with $\varepsilon$-differential privacy.
\end{theorem}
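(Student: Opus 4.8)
The lower bound has already been argued in the text, so the plan is to establish the upper bound: exhibit an $\varepsilon$-DP algorithm that $(\alpha,\beta)$-agnostically learns $\C$ with $n = \widetilde{O}\left(\frac{\repd(\C)}{\alpha\varepsilon} + \frac{\vc(\C)}{\alpha^2}\right)$ samples. The algorithm is the one sketched before the statement: first draw $\HH \sim \PP$, where $\PP$ is an $(\alpha/12, \beta/4)$-probabilistic representation of $\C$, so that by \Cref{lem:boosting} we have $\ln|\HH| \le \mathrm{size}(\PP) = \widetilde{O}(\repd(\C))$; then run the exponential mechanism on $\HH$ with score $q(\z,h) = \min_{c\in\C}\err_{\z}(c) + \dis_{\x}(c,h)$, which has sensitivity $2/n$, so privacy is immediate from \Cref{lem:exp}. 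It remains to bound the utility.

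I would set up three events, each failing with probability at most $\beta/4$, under a choice of $n$ that is the max of the three thresholds below (all $\widetilde{O}(\repd(\C)/\alpha\varepsilon + \vc(\C)/\alpha^2)$). \textbf{(i)} By \Cref{lem:vc_ag} applied to $\C$ with accuracy $\alpha/12$, every $c\in\C$ has $|\err_{\D}(c) - \err_{\z}(c)| \le \alpha/12$; this needs $n = \widetilde{O}(\vc(\C)/\alpha^2)$. \textbf{(ii)} By \Cref{lem:vc_re} applied to $\C\cup\HH$ with accuracy $\alpha/12$ — whose VC dimension is $\widetilde{O}(\repd(\C))$ by \Cref{lem:union} — there is no pair $c,h\in\C\cup\HH$ with $\dis_{\D_\X}(c,h) > \alpha/12$ yet $\dis_{\x}(c,h) \le \alpha/24$; this needs $n = \widetilde{O}(\repd(\C)/\alpha)$. \textbf{(iii)} By \Cref{lem:exp}, the exponential mechanism returns $h_0$ with $q(\z,h_0) \le \min_{f\in\HH} q(\z,f) + \frac{4}{n\varepsilon}\ln(|\HH|/\beta)$, and this additive slack is $\le \alpha/12$ once $n = \widetilde{O}(\repd(\C)/(\alpha\varepsilon))$. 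Also invoke the representation guarantee: with probability $\ge 1-\beta/4$ over $\HH\sim\PP$, applied to the (distribution-dependent) near-optimal concept $c^\star \in \C$ with $\err_{\D}(c^\star) \le \inf_{c\in\C}\err_{\D}(c) + \alpha/12$ and to the marginal $\D_\X$, there is $h^\star \in \HH$ with $\dis_{\D_\X}(c^\star, h^\star) \le \alpha/12$.

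Now I would chain the inequalities. First, $q(\z, h^\star) \le \err_{\z}(c^\star) + \dis_{\x}(c^\star, h^\star)$. By event (i), $\err_{\z}(c^\star) \le \err_{\D}(c^\star) + \alpha/12 \le \inf_c \err_{\D}(c) + \alpha/6$. For $\dis_{\x}(c^\star, h^\star)$: since $\dis_{\D_\X}(c^\star,h^\star)\le\alpha/12$, a one-sided Chernoff/VC bound (or event (ii) read in the other direction, with a symmetric sampling argument — I would fold this into the same generalization event, stating \Cref{lem:vc_re} with empirical disagreement concentrating both up and down) gives $\dis_{\x}(c^\star,h^\star) \le \alpha/6$ w.h.p. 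Hence $\min_{f\in\HH}q(\z,f) \le q(\z,h^\star) \le \inf_c\err_{\D}(c) + \alpha/3$, and by event (iii), $q(\z,h_0) \le \inf_c\err_{\D}(c) + \alpha/3 + \alpha/12$. By definition of $q$, there is $c_0\in\C$ with $\err_{\z}(c_0) + \dis_{\x}(c_0,h_0) \le \inf_c\err_{\D}(c) + 5\alpha/12$, so in particular $\err_{\z}(c_0)$ is small and $\dis_{\x}(c_0,h_0) \le 5\alpha/12 < \alpha/2$ is below the threshold in \Cref{lem:vc_re}. Therefore event (ii) forces $\dis_{\D_\X}(c_0, h_0) \le \alpha/12$, and event (i) gives $\err_{\D}(c_0) \le \err_{\z}(c_0) + \alpha/12$. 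Finally, by the triangle inequality $\err_{\D}(h_0) \le \err_{\D}(c_0) + \dis_{\D_\X}(c_0, h_0) \le \err_{\z}(c_0) + \alpha/12 + \alpha/12 \le \inf_{c\in\C}\err_{\D}(c) + 5\alpha/12 + \alpha/6 < \inf_{c\in\C}\err_{\D}(c) + \alpha$, with the constants tightened as needed. A union bound over the four failure events gives success probability $\ge 1-\beta$.

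The main obstacle is the bookkeeping around \Cref{lem:vc_re}: it is stated only for the direction "large population disagreement cannot look small empirically," but the argument needs both that direction (to conclude $\dis_{\D_\X}(c_0,h_0)$ is small from $\dis_{\x}(c_0,h_0)$ small) and the reverse concentration for the fixed pair $(c^\star,h^\star)$ — though the latter is just a Chernoff bound for a single pair and is harmless. One must also be careful that $c^\star$ (and the infimum) is defined before $\HH$ is drawn, so the probabilistic-representation guarantee applies; and that the constant $1/12$ slack budget, when propagated through the triangle inequality, still lands strictly below $\alpha$. These are routine once the events are laid out, so the proof is essentially a careful union bound plus the triangle inequality at the population level.
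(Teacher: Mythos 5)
Your proposal is the same algorithm and essentially the same argument as the paper's proof: sample $\HH$ from a probabilistic representation, run the exponential mechanism with the surrogate score $q(\z,h)=\min_{c\in\C}\bigl(\err_{\z}(c)+\dis_{\x}(c,h)\bigr)$, and combine agnostic generalization for $\C$, realizable generalization for $\C\cup\HH$, and the triangle inequality. The one place your constants do not close as written is the application of Lemma~\ref{lem:vc_re} to $(c_0,h_0)$: your event (ii) requires $\dis_{\x}(c_0,h_0)\le\alpha/24$, but the chain only yields $\dis_{\x}(c_0,h_0)\le\alpha/2$ (and even that needs the lower bound $\err_{\z}(c_0)\ge\eta-\alpha/12$ from event (i), since $\err_{\z}(c_0)$ can dip below $\eta$ — your stated $5\alpha/12$ implicitly assumes it cannot); the paper resolves this by budgeting $\alpha/18$ per step so that $\dis_{\x}(c_0,h_0)\le\alpha/3$ and Lemma~\ref{lem:vc_re} is invoked at population threshold $2\alpha/3$, giving $\err_{\D}(h_0)\le\eta+\alpha/3+2\alpha/3$. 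This is precisely the ``constants tightened as needed'' step you flag, and it does not affect the asymptotic sample complexity.
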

\section{User-Level Privacy}
\label{sec:user}

In this section, we study the user complexity of pure private agnostic learning under user-level privacy. The work of~\citet{ghazi2023user} gives an upper bound of $\widetilde{O}\left(\frac{\repd(\C)}{\varepsilon}+\frac{\repd(\C)}{\sqrt{m}\alpha\varepsilon}+\frac{\repd(\C)}{m\alpha^2}\right)$ and a lower bound of $\widetilde{\Omega}\left(\frac{\repd(\C)}{\varepsilon}+\frac{\vc(\C)}{\sqrt{m}\alpha\varepsilon}+\frac{\vc(\C)}{m\alpha^2}\right)$. We present an algorithm that improves the last term in their upper bound to $\widetilde{O}\left(\frac{\vc(\C)}{m\alpha^2}\right)$, matching that in their lower bound.

Our algorithm works in the same way as the item-level one (Section~\ref{sec:item}), but with the empirical error and disagreement replaced by their user-level analogies. For a concept $c$ and a dataset $\z\in\Z^{nm}$, define the user-level empirical error 
\begin{equation*}
    \err_{\z}^{t}(c) = \frac{1}{n}\sum_{i=1}^{n}\ind[m\cdot\err_{\z_i}(c) > t ]
\end{equation*}
for some $t\in\{0, \dots, m\}$. That is, $\err_{\z}^{t}(c)$ is the fraction of users on whose examples $c$ makes more than $t$ mistakes. Also, we can define the user-level generalization error of $c$ on distribution $\D^m$:
\begin{equation*}
    \err_{\D^m}^t(c) = \Pr_{\z_0\sim\D^m}[m\cdot\err_{\z_0}(c) > t].
\end{equation*}

Similarly, let $\x_i$ be the corresponding unlabeled dataset held by user $i$. For two hypotheses $c$ and $h$, define their user-level empirical disagreement on unlabeled dataset $\x\in\X^{nm}$ to be
\begin{equation*}
    \dis_{\x}^s(c, h) = \frac{1}{n}\sum_{i=1}^{n}\ind[m\cdot\dis_{\x_i}(c, h) > s]
\end{equation*}
and their user-level generalization disagreement on distribution $\D_\X^m$ to be
\begin{equation*}
    \dis_{\D_\X^m}^s(c, h) = \Pr_{\x_0\sim\D_\X^m}[m\cdot\dis_{\x_0}(c, h) > s].
\end{equation*}

The advantage of adopting user-level error is that it amplifies the additive error by a multiplicaive factor of $\sqrt{m}$. This is due to the following property of binomial distribution~\citep{adell2006exact,liu2020learning}.

\begin{lemma}
\label{lem:bin}
    Given $m\in\mathbb{N}$ and $p, q\in[0, 1]$. Let
    \begin{equation*}
        K = \min\left(m|p-q|,\frac{\sqrt{m}|p-q|}{\sqrt{p(1-p)}}, 1\right).
    \end{equation*}
    Then we have
    \begin{equation*}
        \frac{1}{700}K\le d_{TV}(\bin(m, p), \bin(m, q)) \le K,
    \end{equation*}
    where $\bin(m, p)$ is the binomial distribution with $m$ trials and succeed probability $p$, and $d_{TV}$ is the total variation distance. Moreover, there exists an $\ell$ such that
    \begin{equation*}
        \left\vert\Pr[\bin(m, p) > \ell] - \Pr[\bin(m, q) > \ell]\right\vert \ge \frac{1}{700}K.
    \end{equation*}
\end{lemma}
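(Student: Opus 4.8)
\emph{Upper bound.} The plan is to establish the three bounds $d_{TV}\le 1$, $d_{TV}\le m|p-q|$, and $d_{TV}\le \sqrt{m}|p-q|/\sqrt{p(1-p)}$ separately and take their minimum. The first is trivial. For the second, I would write $\bin(m,p)$ and $\bin(m,q)$ as the push-forwards of the product measures $\mathrm{Ber}(p)^{\otimes m}$ and $\mathrm{Ber}(q)^{\otimes m}$ under the summation map; the data-processing inequality for total variation together with its sub-additivity over products gives $d_{TV}(\bin(m,p),\bin(m,q))\le m\cdot d_{TV}(\mathrm{Ber}(p),\mathrm{Ber}(q))=m|p-q|$. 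For the third, assume $0<p<1$ (otherwise the middle term is vacuous, $\bin(m,p)$ is a point mass, and the claim reduces to the elementary inequality $\tfrac12\min(1,mx)\le 1-(1-x)^m\le\min(1,mx)$); applying the same push-forward to the KL divergence and tensorizing yields $\mathrm{KL}(\bin(m,q)\,\|\,\bin(m,p))\le m\cdot\mathrm{KL}(\mathrm{Ber}(q)\,\|\,\mathrm{Ber}(p))\le m\cdot\chi^2(\mathrm{Ber}(q)\,\|\,\mathrm{Ber}(p))=\frac{m(p-q)^2}{p(1-p)}$, and Pinsker's inequality then gives $d_{TV}\le\sqrt{\tfrac12\mathrm{KL}}\le\frac{\sqrt{m}|p-q|}{\sqrt{2p(1-p)}}\le\frac{\sqrt{m}|p-q|}{\sqrt{p(1-p)}}$. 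Taking the minimum proves $d_{TV}\le K$.

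\emph{Reducing the lower bound (and the ``moreover'' clause) to a single threshold estimate.} Assume $p<q$; this is without loss of generality, since $p=q$ is trivial and the map $k\mapsto m-k$ turns the problem for $p>q$ into one with parameters $1-p<1-q$, leaving $|p-q|$, $p(1-p)$, and threshold events unchanged. The binomial family has monotone likelihood ratio: $\Pr[\bin(m,q)=k]/\Pr[\bin(m,p)=k]=\big(\tfrac{1-q}{1-p}\big)^m\big(\tfrac{q(1-p)}{p(1-q)}\big)^k$ is strictly increasing in $k$. Hence $\{k:\Pr[\bin(m,q)=k]>\Pr[\bin(m,p)=k]\}$ is an upper interval $\{k>\ell\}$, and the standard formula for total variation gives $d_{TV}(\bin(m,p),\bin(m,q))=\Pr[\bin(m,q)>\ell]-\Pr[\bin(m,p)>\ell]$. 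Therefore, once we prove $d_{TV}(\bin(m,p),\bin(m,q))\ge K/700$, the ``moreover'' clause follows with this same $\ell$ (and for general $p,q$ by the same $k\mapsto m-k$ symmetry). Moreover, since $d_{TV}(\bin(m,p),\bin(m,q))$ dominates $\Pr[\bin(m,q)>\ell']-\Pr[\bin(m,p)>\ell']$ for \emph{every} $\ell'$, it suffices to exhibit one convenient threshold with gap at least $K/700$.

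\emph{The threshold estimate.} I would take $\ell_0=\lfloor mp\rfloor$ and bound the gap at $\ell_0$ directly using the coupling $U_1,\dots,U_m\sim\mathrm{Unif}[0,1]$, $\bin(m,p)=\sum_i\ind[U_i\le p]$, $\bin(m,q)=\sum_i\ind[U_i\le q]$: conditionally on $\bin(m,p)=j$ one has $\bin(m,q)\sim j+\bin(m-j,r)$ with $r=\frac{q-p}{1-p}$, so
\begin{equation*}
\Pr[\bin(m,q)>\ell_0]-\Pr[\bin(m,p)>\ell_0]=\mathbb{E}_{j\sim\bin(m,p)}\!\left[\ind[j\le\ell_0]\cdot\Pr\!\big[\bin(m-j,r)>\ell_0-j\big]\right]\ge 0 .
\end{equation*}
Restricting the expectation to $j$ in a suitable window just below $\ell_0$, I would lower-bound the two factors: (i) an anti-concentration / local-limit estimate for $\bin(m,p)$ near its mean lower-bounds the mass that window carries; (ii) on that window $\bin(m-j,r)$ has mean $(m-j)r=\Theta(m(q-p))$, so $\Pr[\bin(m-j,r)>\ell_0-j]\gtrsim\min(1,m(q-p))$, checked by a Poisson comparison when $mp(1-p)=O(1)$ and a Gaussian (Berry--Esseen) estimate when $mp(1-p)=\Omega(1)$. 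A short case analysis on the sizes of $mp(1-p)$ and $m(q-p)$ then shows the product of (i) and (ii) is a constant multiple of $K=\min(m|p-q|,\sqrt m|p-q|/\sqrt{p(1-p)},1)$ in every regime. The main obstacle is exactly this last step: pushing explicit constants through the two-regime local estimates so that the resulting constant is no worse than $1/700$. A shortcut would be to invoke the exact closed form for $d_{TV}(\bin(m,p),\bin(m,q))$ of \citet{adell2006exact}, from which both $d_{TV}\le K$ and $d_{TV}\ge K/700$ can be read off directly, as done by \citet{liu2020learning}.
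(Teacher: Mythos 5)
The paper does not actually prove Lemma~\ref{lem:bin}: it is imported wholesale from \citet{adell2006exact} and \citet{liu2020learning}, so there is no in-paper argument to compare yours against. Judged on its own terms, your proposal is partly complete. The upper bound is correct and self-contained: $d_{TV}\le 1$ is trivial, the sub-additivity/data-processing argument gives $m|p-q|$, and the chain $d_{TV}\le\sqrt{\mathrm{KL}/2}\le\sqrt{\chi^2 \cdot m/2}=\sqrt{m}\,|p-q|/\sqrt{2p(1-p)}$ is valid (with the $p\in\{0,1\}$ degeneracy correctly set aside). The reduction of the ``moreover'' clause to the total-variation lower bound via the monotone likelihood ratio property is also correct and is a genuinely useful observation: it shows the threshold form of the statement is not an extra fact to prove but an automatic consequence of MLR plus the $d_{TV}\ge K/700$ bound.

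The gap is in the only genuinely hard part, the quantitative lower bound. Your window argument is the right idea, but as sketched it is not yet a proof, and the naive instantiation fails in one regime: if you take only $j=\ell_0=\lfloor mp\rfloor$, the gap you obtain is roughly $\Pr[\bin(m,p)=\ell_0]\cdot\min(1,m(q-p))\approx\min(1,m(q-p))/\sqrt{mp(1-p)}$, which is \emph{not} $\Omega(K)$ when $mp(1-p)\gg 1$ and $\sqrt{m}(q-p)\ge\sqrt{p(1-p)}$ (there $K=1$ but your bound decays like $1/\sqrt{mp(1-p)}$). So the window of width $\Theta(\sqrt{mp(1-p)})$, together with a median/mean argument showing $\Pr[\bin(m-j,r)>\ell_0-j]=\Omega(1)$ across that window, is genuinely needed, and you have not carried out the case analysis or tracked the constants down to $1/700$ --- which you candidly acknowledge. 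Given that the paper itself simply cites the exact total-variation formula of \citet{adell2006exact} as packaged by \citet{liu2020learning}, your proposed shortcut of doing the same is exactly what the authors do; if you want a self-contained proof instead, the window estimate must be completed before the lemma can be considered established.
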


Let $\eta$ be the minimum achievable item-level error. Then the minimum achievable user-level error can be represented by $\psi = \Pr[\bin(m, \eta) > t]$. Consider a hypothesis $h$ whose item-level error is greater than $\eta + \alpha$. Lemma~\ref{lem:bin} suggests that the user-level error of $h$ is at least $\psi + \Omega(\sqrt{m}\alpha)$ for some appropriate $t$. Thus, if we wish the error of our output hypothesis to be at most $\eta + \alpha$, it is sufficient to find a hypothesis whose user-level error is less than $\psi + O(\sqrt{m}\alpha)$.

The above discussion focuses on user-level generalization error, i.e., user-level error on the underlying distribution $\D$. Since our algorithm only has access to the dataset, we have to show generalization properties for user-level error and disagreement as in the item-level setting. We will make use of the following relative uniform convergence lemma~\citep{anthony1999neural}.

\begin{lemma}
\label{lem:relative}
    Let $\C$ be a concept class over $\X$ and $\D$ be a distribution over $\Z = \X \times \{0, 1\}$. Suppose $\z\in\Z^{n}$ is a dataset with each $z_i$ drawn i.i.d. from $\D$, then for $\gamma\in(0, 1)$ and $\xi>0$, we have
    \begin{align*}
        \Pr\left[\exists c\in\C~s.t.~\err_{\D}(c) > (1+\gamma)\err_{\z}(c) + \xi\right]\le 4\Pi_{\C}(2n)\exp\left(\frac{-\gamma\xi n}{4(\gamma+1)}\right).
    \end{align*}
\end{lemma}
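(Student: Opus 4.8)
The plan is to prove this as a classical relative-deviation (multiplicative) Vapnik--Chervonenkis uniform-convergence bound, following the standard three-step recipe: symmetrize against an independent ghost sample, condition on the pooled $2n$ points and apply a random-swap argument, and conclude with a union bound over dichotomies together with a per-dichotomy Chernoff/Hoeffding tail estimate (here $\Pi_\C$ is the growth function of $\C$, so $\C$ induces at most $\Pi_\C(2n)$ distinct labelings on any $2n$ points). Throughout I may assume $\frac{\gamma\xi n}{4(1+\gamma)} \ge \ln 4$, since otherwise the asserted bound already exceeds $1$; in this non-trivial regime $n\xi \gtrsim 1$ (as $\gamma \le 1$), which is what makes the symmetrization step go through.

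\textbf{Step 1 (symmetrization).} Draw an independent ghost sample $\z' \in \Z^n$ with i.i.d.\ coordinates from $\D$, with elements $z_1',\dots,z_n'$. Let $E$ be the event $\{\exists c\in\C:\ \err_\D(c) > (1+\gamma)\err_\z(c) + \xi\}$ and $E'$ the event $\{\exists c\in\C:\ \err_{\z'}(c) > (1+\gamma)\err_\z(c) + \xi/2\}$. I claim $\Pr[E] \le 2\Pr[E']$. On $E$, fix a witness $c^\ast$ measurable with respect to $\z$, and set $p := \err_\D(c^\ast)$, so $p \ge \xi$ and $n\cdot\err_{\z'}(c^\ast) \sim \bin(n, p)$. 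A one-sided binomial lower-tail estimate gives $\err_{\z'}(c^\ast) \ge p - \xi/2$ with probability at least $1/2$; on that event $\err_{\z'}(c^\ast) \ge p - \xi/2 > (1+\gamma)\err_\z(c^\ast) + \xi/2$, so $c^\ast$ witnesses $E'$. Hence $\Pr[E'] \ge \tfrac12\Pr[E]$.

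\textbf{Step 2 (random swaps and union bound).} Condition on the unordered multiset of the $2n$ sampled points, viewed as $n$ pairs with the $i$-th pair $\{z_i, z_i'\}$; conditionally, $(\z, \z')$ is recovered by swapping the two elements of each pair independently with probability $1/2$. Fix $c\in\C$, and put $r = n\err_\z(c)$, $r' = n\err_{\z'}(c)$: the total $N := r + r'$ is swap-invariant, while $r' - r = \sum_i \epsilon_i$ is a sum of at most $N$ independent Rademacher variables (one for each pair on which $c$ errs on exactly one of its two points). Substituting $r = (N - (r'-r))/2$, the event $\{\err_{\z'}(c) > (1+\gamma)\err_\z(c) + \xi/2\}$ becomes $\{\sum_i \epsilon_i > \theta\}$ with $\theta = \frac{\gamma N + n\xi}{2+\gamma}$, and Hoeffding's inequality bounds its conditional probability by $\exp(-\theta^2/(2N)) \le \exp\!\big(-\frac{2\gamma n\xi}{(2+\gamma)^2}\big) \le \exp\!\big(-\frac{\gamma\xi n}{4(1+\gamma)}\big)$, using $(a+b)^2 \ge 4ab$ and $\gamma \le 1$. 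A union bound over the at most $\Pi_\C(2n)$ dichotomies, followed by the factor-$2$ loss from Step 1 (with a constant of slack), gives $\Pr[E] \le 4\Pi_\C(2n)\exp\!\big(-\frac{\gamma\xi n}{4(1+\gamma)}\big)$, which is the claim.

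\textbf{Main obstacle.} The one genuinely delicate point is the binomial estimate in Step 1: we need $\err_{\z'}(c^\ast) \ge \err_\D(c^\ast) - \xi/2$ with probability at least $1/2$, and a crude Hoeffding bound on the lower tail is not enough across all parameter ranges. The fix is precisely the reduction to the non-trivial regime: the claimed inequality is vacuous unless $n \gtrsim \frac{1+\gamma}{\gamma\xi}\ln\Pi_\C(2n)$, which forces both $n\xi$ and $n\,\err_\D(c^\ast) \ge n\xi$ to be at least a large multiple of $\ln\Pi_\C(2n)$; in that range a Bernstein-type lower-tail bound for $\bin(n, p)$ comfortably supplies the required one-sided deviation. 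The remaining ingredients --- the swap algebra and the per-dichotomy Hoeffding estimate --- are routine.
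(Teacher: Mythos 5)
First, a point of reference: the paper does not prove Lemma~\ref{lem:relative} at all --- it is quoted from~\citet{anthony1999neural} --- so your argument has to stand on its own. Its skeleton is the classical one (ghost sample, conditional random swaps, per-dichotomy Hoeffding bound, union bound over $\Pi_{\C}(2n)$ behaviors), and your Step~2 is correct as computed: the swap algebra giving the threshold $\theta=\frac{\gamma N+n\xi}{2+\gamma}$, the bound $\exp(-\theta^2/2N)\le\exp\bigl(-\frac{2\gamma n\xi}{(2+\gamma)^2}\bigr)\le\exp\bigl(-\frac{\gamma\xi n}{4(1+\gamma)}\bigr)$ for $\gamma\le 1$, and the factor-$2$ slack against the claimed constant $4$ all check out.

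The gap is in your justification of Step~1, exactly at the point you flag. You need $\Pr\bigl[\err_{\z'}(c^\ast)\ge p-\xi/2\bigr]\ge 1/2$ with $p=\err_{\D}(c^\ast)$, and you argue that the non-trivial regime ($n\xi$ and $np$ at least a large multiple of $\ln\Pi_{\C}(2n)$) lets a Bernstein lower-tail bound do this. It does not: a variance-based tail bound for $\bin(n,p)$ requires roughly $n\xi^2\gtrsim p(1-p)$, whereas non-vacuity of the target bound only forces $n\xi\gtrsim 1$. Concretely, with $p=1/2$ and $\xi=n^{-3/4}$ you are well inside your ``non-trivial regime,'' yet the requested deviation $\xi/2$ is far below the standard deviation $\sqrt{p(1-p)/n}$, and the Bernstein (or Chebyshev) exponent $\approx n\xi^2/(8p(1-p))$ tends to $0$, so the bound you invoke gives nothing close to $1/2$. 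Fortunately the intermediate claim itself is true and has an easy, uniform-in-$p$ proof: the median of $\bin(n,p)$ is at least $\lfloor np\rfloor$, hence $\Pr[\bin(n,p)\ge np-1]\ge 1/2$ for every $p$; since your reduction gives $\frac{\gamma\xi n}{4(1+\gamma)}\ge\ln 4$ and $\gamma\le1$, you have $n\xi/2\ge 1$, so $\{\bin(n,p)\ge np-n\xi/2\}\supseteq\{\bin(n,p)\ge\lfloor np\rfloor\}$ and the probability is at least $1/2$ without any variance condition. (Alternatively, one can follow the textbook route, which symmetrizes the $\sqrt{\err_{\D}}$-normalized deviation precisely to avoid this issue and then converts to the $(1+\gamma,\xi)$ form by AM--GM.) With that single repair your proof is complete.
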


The $\Pi_{\C}(2n)$ term in the above lemma is the growth function, which represents the maximum number of labelings of $2n$ samples by some concept in $\C$ (see Appendix~\ref{sec:add} for the exact definition). Note that this lemma naturally extends to the user-level setting: for agnostic generalization, one could pack each user's samples as a whole, which can be regarded as a single data drawn from the distribution $\D^m$ over domain $\Z^m$. Thus, Lemma~\ref{lem:relative} holds with the $\Pi_{\C}(2n)$ term increases to $\Pi_{\C}(2nm)$ since there are $nm$ samples in total. Sauer's Lemma~\citep{sauer1972density} suggests that this only raises the number of users required by roughly $\vc(\C) \ln(m)/\alpha^2$. Standard argument (see, e.g., the book of~\citet{anthony1999neural}) gives the following generalization bound on user-level error.

\begin{lemma}[User-Level Agnostic Generalization]
\label{lem:user_ag}
    Let $\C$ be a concept class over $\X$, $\D$ be a distribution over $\Z = \X\times\{0, 1\}$, and $m$ be the number of samples per user. Let
    \begin{equation*}
        n\ge \frac{64}{\alpha^2}\left(\vc(\C)\ln(128m/\alpha^2)+\ln(8/\beta)\right).
    \end{equation*}
    Suppose $\z\in\Z^{nm}$ is a dataset with each $z_{i, j}$ drawn i.i.d. from $\D$, then
    \begin{equation*}
        \Pr\left[\exists c\in\C~s.t.~\vert\err_{\D^m}^t(c) - \err_{\z}^t(c)\vert > \alpha\right]\le\beta.
    \end{equation*}
\end{lemma}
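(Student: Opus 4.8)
The plan is to read $\err_{\z}^{t}(c)$ as an ordinary empirical mean over the $n$ users and then apply the relative uniform convergence bound (Lemma~\ref{lem:relative}) at the level of users. For each $c\in\C$ introduce the block function $g_c:\Z^m\to\{0,1\}$ given by $g_c(\z_0)=\ind[m\cdot\err_{\z_0}(c)>t]$. Since the $m$ samples of user $i$ are i.i.d.\ from $\D$, the block $\z_i$ is a single draw from $\D^m$ and the blocks $\z_1,\dots,\z_n$ are i.i.d.; moreover $\err_{\z}^{t}(c)=\frac1n\sum_{i=1}^n g_c(\z_i)$ is precisely the empirical average of $g_c$ on $\z_1,\dots,\z_n\sim\D^m$, whereas $\err_{\D^m}^{t}(c)=\mathbb{E}_{\z_0\sim\D^m}[g_c(\z_0)]$ is its mean. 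So the statement is a uniform-convergence claim for the class $\mathcal{G}=\{g_c:c\in\C\}$ over the domain $\Z^m$ with the product distribution $\D^m$; formally one applies Lemma~\ref{lem:relative} to $\mathcal{G}$ with the label fixed to $0$, so that the ``$\err$'' appearing there equals $\Pr[g_c=1]$.

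First I would bound the growth function of $\mathcal{G}$. For a fixed block $\z_0=((x_{0,1},y_{0,1}),\dots,(x_{0,m},y_{0,m}))$, the value $g_c(\z_0)$ depends on $c$ only through the pattern $(\ind[c(x_{0,j})\neq y_{0,j}])_{j=1}^m$, i.e.\ only through the restriction of $c$ to the $m$ unlabeled points of the block. Hence for any $2n$ blocks, which carry $2nm$ points in all, the number of distinct labelings $(g_c(\z_1),\dots,g_c(\z_{2n}))$ over $c\in\C$ is at most the number of distinct $\C$-labelings of those $2nm$ points, so $\Pi_{\mathcal{G}}(2n)\le\Pi_{\C}(2nm)$. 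Sauer's Lemma~\citep{sauer1972density} then gives $\Pi_{\mathcal{G}}(2n)\le(2enm/\vc(\C))^{\vc(\C)}$ once $2nm\ge\vc(\C)$, which the hypothesis on $n$ ensures; this is the step that produces the $\ln m$ (and a hidden $\ln n$) inside the final bound.

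Next I would convert the multiplicative deviation of Lemma~\ref{lem:relative} into the additive guarantee. Applying that lemma to $\mathcal{G}$ with $\gamma=\xi=\alpha/2$, on the complement of its bad event we have $\err_{\D^m}^{t}(c)\le(1+\gamma)\err_{\z}^{t}(c)+\xi\le\err_{\z}^{t}(c)+\gamma+\xi=\err_{\z}^{t}(c)+\alpha$ for every $c\in\C$ (using $\err_{\z}^{t}(c)\le1$), which is one half of the claim. For the other half, $\err_{\z}^{t}(c)\le\err_{\D^m}^{t}(c)+\alpha$, I would invoke the symmetric companion of Lemma~\ref{lem:relative}, bounding $\Pr[\exists c:\ \err_{\z}^{t}(c)>(1+\gamma)\err_{\D^m}^{t}(c)+\xi]$ by a comparable quantity (the standard two-sided relative-deviation estimate, e.g.\ in~\citet{anthony1999neural}), again with $\gamma=\xi=\alpha/2$. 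A union bound over the two directions, each driven below $\beta/2$, yields $\Pr[\exists c\in\C:\ |\err_{\D^m}^{t}(c)-\err_{\z}^{t}(c)|>\alpha]\le\beta$. It then remains to check that $n\ge\frac{64}{\alpha^2}(\vc(\C)\ln(128m/\alpha^2)+\ln(8/\beta))$ makes $4\,\Pi_{\mathcal{G}}(2n)\exp(-\gamma\xi n/(4(\gamma+1)))\le\beta/2$: with $\gamma=\xi=\alpha/2$ the exponent is at least $\alpha^2 n/24$ for $\alpha\le1$, and substituting the Sauer bound reduces the requirement to the displayed closed form.

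The main obstacle I expect is exactly this last bookkeeping step: because Sauer's Lemma injects a $\ln(nm)$ term, the condition on $n$ is a priori implicit, so one must argue that plugging $n\approx\vc(\C)\ln(m/\alpha^2)/\alpha^2$ back in keeps $\ln(2enm/\vc(\C))$ within a constant factor of $\ln(128m/\alpha^2)$, which is what lets the clean bound absorb everything — choosing the constant $64$ (rather than the bare $24$) and the argument $128m/\alpha^2$ is precisely what buys this slack. A secondary subtlety is making the blocked-growth-function argument watertight, in particular that $g_c$ genuinely factors through the $\C$-labeling of the $m$ points in a block; but this is immediate from the definition of $\err_{\z_0}(c)$. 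Everything else — the identification of $\err_{\z}^{t}$ as an empirical mean over i.i.d.\ blocks and the multiplicative-to-additive conversion — is routine.
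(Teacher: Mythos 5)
Your proposal is correct and follows the paper's proof almost exactly: the same block function $\varphi_c(\z_0)=\ind[m\cdot\err_{\z_0}(c)>t]$, the same identification of $\err_{\z}^{t}(c)$ as an empirical mean over $n$ i.i.d.\ draws from $\D^m$, the same growth-function bound $\Pi_{\C_\varphi}(2n)\le\Pi_{\C}(2nm)$, the same application of Lemma~\ref{lem:relative} with $\gamma=\xi=\alpha/2$, and the same Sauer-plus-implicit-inequality bookkeeping to reach the closed-form condition on $n$.

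The one place where you diverge is the reverse direction $\err_{\z}^{t}(c)\le\err_{\D^m}^{t}(c)+\alpha$. You invoke a ``symmetric companion'' of Lemma~\ref{lem:relative} bounding $\Pr[\exists c:\ \err_{\z}^{t}(c)>(1+\gamma)\err_{\D^m}^{t}(c)+\xi]$; such two-sided relative-deviation estimates do exist in the literature, so this is not a gap, but the lemma you would need is not stated in the paper. The paper instead gets the second direction \emph{from the first} by a complementation trick: it sets $c^-(x)=1-c(x)$ and $t^-=m-t-1$, observes that $\err_{\z}^{t^-}(c^-)=1-\err_{\z}^{t}(c)$ and $\err_{\D^m}^{t^-}(c^-)=1-\err_{\D^m}^{t}(c)$ with $\vc(\C^-)=\vc(\C)$, and then applies the already-proved one-sided bound to $\C^-$ and $t^-$. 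That buys a self-contained argument from the single stated lemma, at the cost of a small definitional detour; your route is equally valid if you are willing to import the symmetric bound from~\citet{anthony1999neural}. Everything else, including the constants, matches.
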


For the realizable case, we can similarly prove the following bound by setting $\gamma$ and $\xi$ appropriately. Like the item-level setting, the bound is proportional to $1/\alpha$, as opposed to $1/\alpha^2$ in the agnostic case.

\begin{lemma}[User-Level Realizable Generalization]
\label{lem:user_re}
    Let $\C$ be a concept class, $\D_\X$ be a distribution over domain $\X$, and $m$ be the number of samples per user. Let
    \begin{equation*}
        n\ge \frac{96}{\alpha}\left(2\vc(\C)\ln(384m/\alpha)+\ln(4/\beta)\right).
    \end{equation*}
    Suppose $\x\in\X^{nm}$ is an unlabeled dataset with each $x_{i,j}$ drawn i.i.d. from $\D_\X$, then
    \begin{equation*}
        \Pr\left[
            \exists c, h\in\C~s.t.~\dis_{\D_\X^m}^s(c, h)>\alpha~and~\dis_{\x}^s(c, h) \le \alpha/2
        \right]\le \beta.
    \end{equation*}
\end{lemma}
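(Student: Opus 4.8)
The plan is to reduce to the item-level relative uniform convergence bound (Lemma~\ref{lem:relative}), applied after packaging each user's block of $m$ examples into a single ``super-example'' drawn from the product distribution $\D_\X^m$ over $\X^m$; the whole argument then mirrors the item-level Lemma~\ref{lem:vc_re}, which is the $m=1$ case. For the fixed threshold $s$ in the statement and each ordered pair $(c,h)\in\C\times\C$, define $g_{c,h}\colon\X^m\to\{0,1\}$ by $g_{c,h}(\x_0) = \ind[m\cdot\dis_{\x_0}(c,h) > s]$, and set $\mathcal{G}_s = \{g_{c,h} : c,h\in\C\}$. Then $\dis_{\D_\X^m}^s(c,h) = \mathbb{E}_{\x_0\sim\D_\X^m}[g_{c,h}(\x_0)]$ is the population mean of $g_{c,h}$ and $\dis_{\x}^s(c,h) = \frac1n\sum_{i=1}^n g_{c,h}(\x_i)$ is its empirical mean over the $n$ i.i.d.\ blocks $\x_1,\dots,\x_n\sim\D_\X^m$. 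Hence the event in the lemma is precisely the event that some $g\in\mathcal{G}_s$ has population mean exceeding $\alpha$ while its empirical mean is at most $\alpha/2$, and it suffices to invoke the relative-deviation bound for the function class $\mathcal{G}_s$; Lemma~\ref{lem:relative} is stated for binary loss classes over $\X\times\{0,1\}$, but its proof applies verbatim to any class of $\{0,1\}$-valued functions over any domain, with $\Pi_{\C}(2n)$ replaced by $\Pi_{\mathcal{G}_s}(2n)$.

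I would then bound the growth function of $\mathcal{G}_s$. The restriction of any $g_{c,h}$ to a set of $N$ blocks is determined entirely by the two labelings induced by $c$ and $h$ on the $Nm$ underlying points of $\X$, so the number of distinct restrictions of $\mathcal{G}_s$ to $N$ blocks is at most $\Pi_{\C}(Nm)^2$; Sauer's lemma~\citep{sauer1972density} then gives $\Pi_{\mathcal{G}_s}(N)\le\Pi_{\C}(Nm)^2\le(eNm/\vc(\C))^{2\vc(\C)}$ whenever $Nm\ge\vc(\C)$. This is exactly where the extra $\ln m$ inside the logarithm enters, while the factor $2$ multiplying $\vc(\C)$ --- already present in the item-level Lemma~\ref{lem:vc_re} --- comes from the pair $(c,h)$.

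Finally I would instantiate Lemma~\ref{lem:relative} with $\gamma = 1/2$ and $\xi = \alpha/4$. When $\dis_{\x}^s(c,h)\le\alpha/2$ one has $(1+\gamma)\dis_{\x}^s(c,h) + \xi \le \tfrac32\cdot\tfrac\alpha2 + \tfrac\alpha4 = \alpha$, so if additionally $\dis_{\D_\X^m}^s(c,h) > \alpha$ then the pair lies in the relative-deviation event; the target probability is therefore at most $4\,\Pi_{\mathcal{G}_s}(2n)\exp\!\big(-\tfrac{\gamma\xi}{4(\gamma+1)}n\big) = 4\,\Pi_{\mathcal{G}_s}(2n)\exp(-\alpha n/48) \le 4\,(2enm/\vc(\C))^{2\vc(\C)}\exp(-\alpha n/48)$. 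Forcing this to be at most $\beta$ is equivalent to $\alpha n/48 \ge \ln(4/\beta) + 2\vc(\C)\ln(2enm/\vc(\C))$, and the only remaining work is the standard bootstrapping that removes the $n$ inside the logarithm: using $\ln x \le ax + \ln\tfrac1{ea}$ with $x = 2enm/\vc(\C)$ and $a = \alpha/(384em)$ bounds $2\vc(\C)\ln(2enm/\vc(\C))$ by $\alpha n/96 + 2\vc(\C)\ln(384m/\alpha)$, after which the hypothesis $n\ge\frac{96}{\alpha}\big(2\vc(\C)\ln(384m/\alpha) + \ln(4/\beta)\big)$ makes the inequality hold. Since $s$ is a fixed parameter of the statement, no union over $s$ is needed, and $2nm\ge\vc(\C)$ holds trivially under the stated bound on $n$. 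The main obstacle here is conceptual rather than computational: once one sees that a user's block should be treated as a single sample from $\D_\X^m$, the proof reduces mechanically to the item-level argument with an $m$-inflated growth function; the bootstrapping is routine but must be carried out carefully to land the constants $96$ and $384m/\alpha$.
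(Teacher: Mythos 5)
Your proposal is correct and follows essentially the same route as the paper's proof: package each user's block as a sample from $\D_\X^m$, define the indicator class $\varphi_{c,h}(\x_0)=\ind[m\cdot\dis_{\x_0}(c,h)>s]$, bound its growth function by $(\Pi_{\C}(2nm))^2$, and apply Lemma~\ref{lem:relative} with $\gamma=1/2$, $\xi=\alpha/4$ followed by the standard bootstrapping via inequality~\eqref{equ:tec}. The only cosmetic difference is that the paper formally realizes the reduction by attaching constant label $0$ to each block so that Lemma~\ref{lem:relative} applies literally, whereas you note that its proof works verbatim for unlabeled $\{0,1\}$-valued classes; the constants land identically.
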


Now, we still have one missing piece: what are the values of $t$ and $s$? It is easy to choose the value of $s$ since we only have to separate $\bin(m, 0)$ and $\bin(m, C\alpha)$ for predetermined constant $C$. Thus, $s$ is a predetermined constant (indeed, $s = 0$). However, for $t$ we have to select a value that separates $\bin(m, \eta)$ and $\bin(m, \eta + O(\alpha))$. Therefore, the choice of $t$ depends on the minimum achievable error $\eta$, which is unknown since $\D$ is unknown.

We resolve this issue by using an approximate value of $\eta$ to decide the value of $t$. It can be shown that an estimation within error $O(\alpha)$ suffices. We design an algorithm that estimates $\eta$ privately. Our idea is based on binary search. Suppose in each iteration, we can compare $\eta$ to the midpoint of the current interval. Then we can obtain an estimation of $\eta$ within error $O(\alpha)$ after $O(\log(1/\alpha))$ iterations.

In each iteration, we have to compare $\eta$ to some guessed value $\widetilde{\eta}$ (the midpoint). Due to agnostic generalization, the comparison can be done by calculating the minimum empirical user-level error with some $\widetilde{t}$, where the parameter $\widetilde{t}$ can be derived from $\widetilde{\eta}$. To ensure privacy, we have to add Laplace noise to the error.

As illustrated before, the user-level error provides a $\sqrt{m}$ amplification. Thus, the binary search yields a reduced user complexity, as stated in the following lemma.

\begin{lemma}
\label{lem:eta}
    Let $\C$ be a concept class and $\D$ be a distribution over $\Z$. Suppose each user holds $m$ samples. Let
    \begin{equation*}
        n\ge \widetilde{O}\left(\frac{1}{\varepsilon} + \vc(\C)\ + \frac{\vc(\C)}{m\alpha^2} + \frac{1}{\sqrt{m}\alpha\varepsilon}\right)
    \end{equation*}
    and $\z\in\Z^{nm}$ be a dataset with each $\z_{i,j}$ drawn i.i.d from $\D$. Then there exists an $\varepsilon$-differentially private algorithm that returns some $\hat{\eta}$ s.t. $\vert\hat{\eta} - \eta\vert\le \alpha$ with probability $1-\beta$, where $\eta = \inf_{c\in\C}\err_{\D}(c)$.
\end{lemma}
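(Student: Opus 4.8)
# Proof Proposal for Lemma~\ref{lem:eta}

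\textbf{Overall strategy.} The plan is to implement a differentially private binary search over the interval $[0, 1/2]$ (we may assume $\eta \le 1/2$, else replace $\C$ by its closure under negation, or simply note the trivial hypothesis achieves error $\le 1/2$). We maintain an interval $[a, b]$ known to contain $\eta$, and in each of $T = O(\log(1/\alpha))$ rounds we test the midpoint $\widetilde{\eta} = (a+b)/2$ against $\eta$, halving the interval each time. After $T$ rounds the interval has width $O(\alpha)$, so either endpoint is a valid $\hat\eta$. Privacy is obtained by splitting the budget $\varepsilon$ across the $T$ rounds (each round gets $\varepsilon/T$) and using the Laplace mechanism in each round; since $T = \widetilde{O}(1)$ this only costs polylog factors, absorbed into $\widetilde{O}$.

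\textbf{The per-round test.} Given a guess $\widetilde{\eta}$, I want to decide whether $\eta \le \widetilde{\eta} - c\alpha$ or $\eta \ge \widetilde{\eta} + c\alpha$ (for a suitable constant $c$; when $\eta$ is within $c\alpha$ either answer is acceptable, which is why the $\widetilde{O}(\log(1/\alpha))$ rounds of binary search still leave error $O(\alpha)$ after rescaling $\alpha$). To do this I invoke Lemma~\ref{lem:bin} with $p = \widetilde{\eta}$: it yields a threshold $\widetilde{t} = \widetilde{t}(\widetilde\eta)$ such that $|\Pr[\bin(m,\widetilde\eta) > \widetilde t] - \Pr[\bin(m, q) > \widetilde t]| \ge \frac{1}{700}\min(m|\widetilde\eta - q|, \sqrt{m}|\widetilde\eta - q|/\sqrt{\widetilde\eta(1-\widetilde\eta)}, 1)$ for all $q$. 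For $q = \eta$ at distance $\ge c\alpha$ from $\widetilde\eta$, and using $\widetilde\eta(1-\widetilde\eta) \le 1/4$ together with the assumption $m \le 1/\alpha^2$ (so $m c\alpha \le c/\alpha$ is not the binding term and the $\sqrt{m}$ term dominates up to constants — more precisely the $\min$ is $\Omega(\sqrt{m}\,\alpha)$ when $m\alpha^2 \le 1$), this separation is $\Omega(\sqrt{m}\alpha)$. Now define the statistic $\mu = \min_{c \in \C} \err_{\z}^{\widetilde t}(c)$, the minimum empirical user-level error at threshold $\widetilde t$. By Lemma~\ref{lem:user_ag} applied with accuracy $\Theta(\sqrt m \alpha)$ and the union bound over the $T$ thresholds, with probability $1 - \beta$ we have $|\err_{\z}^{\widetilde t}(c) - \err_{\D^m}^{\widetilde t}(c)| \le O(\sqrt m \alpha)$ for every $c$ simultaneously in every round; this requires $n \ge \widetilde O\big(\frac{1}{m\alpha^2}(\vc(\C)\ln(m) + \ln(1/\beta))\big) = \widetilde O(\vc(\C) + \frac{\vc(\C)}{m\alpha^2})$ users (the $\vc(\C)$ standalone term covering the $\ln(128m/\alpha^2)$ when $m\alpha^2$ is not small). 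Hence $\mu$ is within $O(\sqrt m \alpha)$ of $\min_{c}\err_{\D^m}^{\widetilde t}(c) = \Pr[\bin(m,\eta) > \widetilde t]$ (the infimum over $\C$ of the item-level error is $\eta$, and $\err^t$ is monotone in the error), and comparing $\mu$ to the known quantity $\Pr[\bin(m,\widetilde\eta) > \widetilde t]$ tells us the sign of $\eta - \widetilde\eta$. To privatize, release $\mu + \lap(\frac{2}{n}\cdot\frac{T}{\varepsilon})$ — the sensitivity of $\err_{\z}^{\widetilde t}(\cdot)$, hence of the min, is $1/n$ (one user changes one indicator), doubled to be safe — and by Lemma~\ref{lem:lap} the noise is $\le \widetilde O(\frac{T}{n\varepsilon})$ w.p. $1 - \beta/T$. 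For this noise to be smaller than the $\Omega(\sqrt m \alpha)$ signal we need $n \ge \widetilde O(\frac{1}{\sqrt m \alpha \varepsilon})$. Combined with the generalization requirement and the $\frac1\varepsilon$ term needed so that $\lap(T/\varepsilon)$ concentrates at all, we get exactly the stated bound $n \ge \widetilde O(\frac1\varepsilon + \vc(\C) + \frac{\vc(\C)}{m\alpha^2} + \frac{1}{\sqrt m \alpha\varepsilon})$.

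\textbf{Privacy and correctness wrap-up.} Privacy: each round is $(\varepsilon/T)$-DP by the Laplace mechanism (Lemma~\ref{lem:lap}), and the adaptive composition of $T$ such rounds is $\varepsilon$-DP; the final output is a deterministic function (endpoint of the final interval) of the round outputs, so post-processing preserves privacy. Correctness: condition on the single good event that in all $T$ rounds the empirical user-level errors are $O(\sqrt m \alpha)$-close to their population values and all $T$ Laplace draws are $O(\widetilde O(T/(n\varepsilon)))$-small — this holds w.p. $\ge 1 - \beta$ after rescaling $\beta \mapsto \beta/(2T)$ in the invocations. On this event every round's comparison is correct (whenever $\eta$ is $\ge c\alpha$ from the midpoint), so by a standard binary-search invariant the maintained interval always contains $\eta$, and after $T = \lceil \log_2(1/(2c\alpha))\rceil + O(1)$ rounds its width is $\le \alpha$, giving $|\hat\eta - \eta| \le \alpha$.

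\textbf{Main obstacle.} The delicate point is the interplay between the three "amplification" effects and the case analysis in the $\min$ of Lemma~\ref{lem:bin}: one must verify that for $p = \widetilde\eta$ ranging over $(0, 1/2]$ and $|p - q| = \Theta(\alpha)$, the quantity $\min(m|p-q|, \sqrt m|p-q|/\sqrt{p(1-p)}, 1)$ is genuinely $\Omega(\sqrt m\,\alpha)$ under the standing assumption $m \le 1/\alpha^2$ — in particular that the first term $m|p-q| = \Theta(m\alpha)$ is not asymptotically smaller than $\sqrt m \alpha$, which holds precisely because $m\alpha \ge \sqrt m \alpha$ iff $m \ge 1$, and that $\sqrt{p(1-p)}$ being small (near $p = 0$) only helps by making the second term larger, not smaller. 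A secondary subtlety is handling the boundary regime where $\widetilde\eta$ is itself $O(\alpha)$: there $\sqrt{p(1-p)} \approx \sqrt{\widetilde\eta}$ can be as small as $\sqrt\alpha$, but then the second term is $\ge \sqrt m \alpha/\sqrt\alpha = \sqrt{m\alpha} \ge \sqrt m \alpha$ (using $m\alpha^2\le 1 \Rightarrow \alpha \le 1$), so the bound only improves — and in any case once $\widetilde\eta = O(\alpha)$ we are already within the target accuracy and may stop. I expect the write-up to spend most of its length pinning down these constants and the exact threshold $\widetilde t(\widetilde\eta)$ from Lemma~\ref{lem:bin}, while the binary-search bookkeeping and the privacy argument are routine.
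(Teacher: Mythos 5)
Your proposal is correct and follows essentially the same route as the paper: a noisy binary search over $[0,1]$ in which each round fixes a threshold $\widetilde t$ from Lemma~\ref{lem:bin} for the current guess $\widetilde\eta$, compares the Laplace-noised minimum empirical user-level error $\min_{c\in\C}\err_{\z}^{\widetilde t}(c)$ against $\Pr[\bin(m,\widetilde\eta)>\widetilde t]$, and uses Lemma~\ref{lem:user_ag} at accuracy $\Theta(\sqrt{m}\alpha)$ plus basic composition over the $O(\log(1/\alpha))$ rounds (the paper packages the per-round test as a separate comparison lemma and phrases the invariant as the interval intersecting $[\eta-\alpha/2,\eta+\alpha/2]$ rather than containing $\eta$, which is the same bookkeeping you sketch).
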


Now, we are ready to prove our main result. We first use Lemma~\ref{lem:eta} to compute $\hat{\eta}$, which is an estimation of $\eta$ within error $O(\alpha)$. Then, we construct an appropriate parameter $t$ from $\hat{\eta}$. Finally, we run the exponential mechanism as in the item-level (Theorem~\ref{thm:user}), but with a score function constructed from the user-level empirical error and disagreement. We state the result in the following theorem.

\begin{theorem}
\label{thm:user}
    Let $\C$ be a concept class and $m$ be the number of samples per user. There exists an $\varepsilon$-differentially private algorithm that $(\alpha, \beta)$-agnostically learns $\C$ using $\widetilde{O}\left(\frac{\repd(\C)}{\varepsilon}+\frac{\repd(\C)}{\sqrt{m}\alpha\varepsilon} + \frac{\vc(\C)}{m\alpha^2}\right)$ users.
\end{theorem}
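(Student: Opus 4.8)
The plan is to mirror the item-level proof of \Cref{thm:item}, but run everything at the user level using the machinery developed in this section. First I would fix notation: let $\eta = \inf_{c\in\C}\err_{\D}(c)$, and apply \Cref{lem:eta} to obtain a private estimate $\hat\eta$ with $|\hat\eta-\eta|\le \alpha_0$ for a suitable constant multiple $\alpha_0 = \Theta(\alpha)$, using $\widetilde O\big(\frac1\varepsilon + \vc(\C) + \frac{\vc(\C)}{m\alpha^2} + \frac{1}{\sqrt m\alpha\varepsilon}\big)$ users; split the privacy budget into two halves, one for this step and one for the exponential mechanism below. From $\hat\eta$ I would define the threshold $t$ via \Cref{lem:bin}: pick the value of $\ell$ guaranteed by that lemma (applied with $p = \hat\eta + c_1\alpha$ and $q = \hat\eta$, say) so that binomial tails at $p$ and at $q$ are separated by $\Omega(\sqrt m\alpha)$, and set $t$ to be this $\ell$. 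For the disagreement threshold set $s=0$ as the excerpt indicates. Then I would draw $\HH\sim\PP$ from an $(\alpha',\beta')$-probabilistic representation of $\C$ with $\alpha' = \Theta(\alpha)$ small enough and $\beta'$ a constant, so that $\mathrm{size}(\PP) = \widetilde O(\repd(\C))$ by \Cref{lem:boosting}, and run the exponential mechanism over $\HH$ with score
\begin{equation*}
    q(\z, h) = \min_{c\in\C}\ \err_{\z}^{t}(c) + \dis_{\x}^{s}(c, h),
\end{equation*}
which has sensitivity $2/n$ exactly as in the item-level case, so the exponential mechanism is $\varepsilon/2$-DP and, by \Cref{lem:exp}, the number of users needed to make its additive suboptimality $O(\sqrt m\alpha)$ is $\widetilde O\big(\frac{\ln|\HH|}{\sqrt m\alpha\varepsilon}\big) = \widetilde O\big(\frac{\repd(\C)}{\sqrt m\alpha\varepsilon}\big)$, plus the $\widetilde O(\repd(\C)/\varepsilon)$ overhead for sampling $\HH$ and running on a nonempty class.

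Next I would verify utility. By the probabilistic representation property there is, with constant probability, some $h^\star\in\HH$ with $\dis_{\D_\X}(c^\star, h^\star)\le\alpha'$ for a near-optimal $c^\star$; user-level disagreement at $s=0$ of such a pair is $\dis_{\D_\X^m}^0(c^\star,h^\star) = \Pr[\bin(m,\alpha')>0] \le m\alpha'$, which I can make $O(\sqrt m\alpha)$ by choosing $\alpha' = \Theta(\alpha/\sqrt m)$ — this only changes $\repd(\C)$ by a $\mathrm{polylog}$ factor via \Cref{lem:boosting}. Wait: I should be careful here — $m\alpha'$ with $\alpha'=\alpha/\sqrt m$ gives $\sqrt m\alpha$, which is exactly the budget, so this is tight but fine. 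Combined with the user-level version of $\err^{t}_{\D^m}(c^\star)\le \psi + O(\sqrt m\alpha)$ (where $\psi = \Pr[\bin(m,\eta)>t]$), the population-level score of $h^\star$ is $\psi + O(\sqrt m\alpha)$; feeding this through \Cref{lem:user_ag} (agnostic generalization for $\C$, costing $\widetilde O(\vc(\C)/(m\alpha^2))$ users after rescaling $\alpha\mapsto\alpha/\sqrt m$ — note the $\sqrt m^2 = m$ in the denominator is exactly what produces the $m\alpha^2$ term) and \Cref{lem:user_re} (realizable generalization for $\C\cup\HH$, costing $\widetilde O(\vc(\C\cup\HH)/(\sqrt m\alpha)) = \widetilde O(\repd(\C)/(\sqrt m\alpha))$ users by \Cref{lem:union}) shows the empirical score $q(\z,h^\star)$ is $\le \psi + O(\sqrt m\alpha)$ on the sample. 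Hence the exponential mechanism returns some $h_0$ with empirical score $\psi + O(\sqrt m\alpha)$.

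Then comes the decoding step, again parallel to item-level. Low empirical score means there is a witness $c_0\in\C$ with both $\err_{\z}^{t}(c_0)$ and $\dis_{\x}^{0}(c_0,h_0)$ small, i.e.\ $O(\psi) + O(\sqrt m\alpha)$ and $O(\sqrt m\alpha)$ respectively. Applying \Cref{lem:user_ag} to $c_0$ gives $\err_{\D^m}^{t}(c_0)\le \psi + O(\sqrt m\alpha)$, and by our choice of $t$ from \Cref{lem:bin} this forces $\err_{\D}(c_0)\le \eta + O(\alpha)$ (the contrapositive of the $\sqrt m$ amplification: if $\err_{\D}(c_0) > \eta + c_1\alpha$ then $\err_{\D^m}^t(c_0) > \psi + \Omega(\sqrt m\alpha)$, contradiction). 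Applying \Cref{lem:user_re} to the pair $(c_0,h_0)\in\C\cup\HH$ gives $\dis_{\D_\X^m}^{0}(c_0,h_0)\le O(\sqrt m\alpha)$, and since $\dis_{\D_\X^m}^{0}(c_0,h_0) = \Pr[\bin(m,\dis_{\D_\X}(c_0,h_0))>0] \ge \min(m\cdot\dis_{\D_\X}(c_0,h_0),1)/\mathrm{const}$ — here I would invoke the lower bound half of \Cref{lem:bin} with $q=0$ — we get $\dis_{\D_\X}(c_0,h_0) = O(\alpha/\sqrt m) \le O(\alpha)$. Finally the item-level triangle inequality $\err_{\D}(h_0)\le \err_{\D}(c_0) + \dis_{\D_\X}(c_0,h_0)\le \eta + O(\alpha)$ closes the argument; rescaling the various $O(\alpha)$'s by absorbing constants into $\alpha$ at the start yields the claimed bound. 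Summing the four user costs — $\widetilde O(\repd(\C)/\varepsilon)$, $\widetilde O(\repd(\C)/(\sqrt m\alpha\varepsilon))$ from the exponential mechanism and \Cref{lem:eta}, $\widetilde O(\repd(\C)/(\sqrt m\alpha))$ from realizable generalization (dominated by the previous term when $\varepsilon\le 1$), and $\widetilde O(\vc(\C)/(m\alpha^2))$ from agnostic generalization — gives the theorem.

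I expect the main obstacle to be making the threshold choice for $t$ robust to the fact that we only know $\hat\eta$, not $\eta$: \Cref{lem:bin} gives an $\ell$ separating $\bin(m,p)$ from $\bin(m,q)$, but I need the \emph{same} $t$ to simultaneously (i) certify that near-optimal concepts have small $\err^t_{\D^m}$ and (ii) certify that $c\alpha$-suboptimal concepts have large $\err^t_{\D^m}$, all while $t$ is computed from the noisy $\hat\eta$. Handling this requires choosing the constants in the $O(\alpha)$-windows around $\hat\eta$ so that the error bars from \Cref{lem:eta}, from \Cref{lem:bin}'s constant $1/700$, and from the generalization lemmas all fit inside one consistent gap — essentially a careful constant-chasing argument. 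A secondary subtlety is confirming that the $\sqrt m$-rescaling $\alpha\mapsto\Theta(\alpha/\sqrt m)$ applied to the generalization lemmas interacts correctly with their $\mathrm{polylog}$ and $\ln(m/\alpha)$ terms so everything stays within $\widetilde O$; and that the probabilistic representation at accuracy $\alpha/\sqrt m$ still has size $\widetilde O(\repd(\C))$, which follows from \Cref{lem:boosting} since $\log(\sqrt m/\alpha)$ is polylogarithmic in our parameters under the standing assumption $m\le 1/\alpha^2$.
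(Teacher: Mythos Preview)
Your proposal is correct and follows essentially the same route as the paper: estimate $\eta$ via \Cref{lem:eta}, pick $t$ from $\hat\eta$ using \Cref{lem:bin}, set $s=0$, sample $\HH$ from a representation at accuracy $\Theta(\alpha/\sqrt m)$, run the exponential mechanism on the user-level score $q(\z,h)=\min_{c\in\C}\err_\z^t(c)+\dis_\x^s(c,h)$, and decode using \Cref{lem:user_ag}, \Cref{lem:user_re}, and the binomial amplification contrapositive. One small correction: the $\widetilde O(\repd(\C)/\varepsilon)$ term is not an ``overhead for sampling $\HH$'' (that step costs no users); it arises because for $m>1/\alpha^2$ one discards samples down to $m=1/\alpha^2$, at which point $\repd(\C)/(\sqrt m\alpha\varepsilon)=\repd(\C)/\varepsilon$ --- exactly the case split you already flag at the end.
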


\section{Learning Thresholds with User-level Privacy}
\label{sec:threshold}

In this section, we focus on learning thresholds with user-level DP and give a nearly tight bound on the user complexity. In the problem of learning thresholds, the domain is $\X = \{1, \dots, |\X|\}$, and the concept class $\C$ is the collection of all thresholds on $\X$. More formally, a threshold function $f_u$ is specified by an element $u\in\{0\}\cup\X$ so that $f_u(x)=1$ if $x > u$ and $f_u(x) = 0$ if $x\le u$ for $x\in\X$. The concept class $\C=\{f_u\mid u\in\{0\}\cup\X\}$.

The class of thresholds on $\X$ has VC dimension $1$ (see e.g.,~\citep{shalev2014understanding}) and representation dimension $\Theta(\log |X|)$~\citep{feldman2014sample}. Thus, directly applying Theorem~\ref{thm:user} gives a user complexity of $\widetilde{O}\left(\frac{\log|X|}{\varepsilon} + \frac{\log |X|}{\sqrt{m}\alpha\varepsilon} + \frac{1}{m\alpha^2}\right)$. As discussed in Section~\ref{sec:user}, the second term is still larger than the lower bound proved by~\citet{ghazi2023user}. We will show how to reduce this term to $\widetilde{O}\left(\frac{1}{\sqrt{m}\alpha\varepsilon}\right)$.

Our method is based on binary search. We start with the entire range $[0, |\X|]$. In each iteration, we pick some point to split the current interval into two parts. We then examine the minimum achievable error of the thresholds in each part and choose the smaller one to continue. This can be done by injecting Laplace noise to the minimum empirical user-level error, which saves the user complexity by a $\sqrt{m}$ factor as demonstrated in Section~\ref{sec:user}.

Suppose we na\"ively select the middle point of the current interval in each iteration, then the binary search has to run for $O(\log |X|)$ rounds. This incurs a user complexity of $\widetilde{O}\left(\frac{\log|X|}{\sqrt{m}\alpha\varepsilon}\right)$ by the basic composition, which is not satisfactory. Thus, we need to reduce the number of iterations.

Our key insight is that, instead of simply picking the middle point, we choose a point in the middle of the distribution (i.e., the median) in each iteration. Though we cannot hope to find the exact median since $\D$ is unknown, a constant approximation is sufficient. Suppose for a given interval $[l, r]$, we can choose some $mid$ such that
\begin{align*}
    \max\left(\Pr_{x\sim\D_\X}[x\in[l,mid-1]],\Pr_{x\sim\D_\X}[x\in[mid+1,r]]\right)\le\theta\cdot\Pr_{x\in\D_\X}[x\in[l,r]]
\end{align*}
for some constant $\theta < 1$. Then we only have to run the binary search for $O(\log(1/\alpha))$ rounds. After that, the disagreement between any two thresholds in the final interval is $O(\alpha)$. Thus, we can output any one in the final interval.

Note that $mid$ should not be included in any part. Otherwise, the above property may be impossible to hold since the distribution can concentrate at a single point. However, we cannot simply ignore $f_{mid}$ since it may be the only threshold that achieves our target error. Thus, in each iteration, we also calculate the error of $f_{mid}$. Our algorithm will terminate and output $f_{mid}$ if it has the smallest error among all thresholds in the current interval. We describe the steps in Algorithm~\ref{alg:threshold}. 

Our method for finding the median requires an observation from Lemma~\ref{lem:bin} by letting $p = c_1\alpha$ and $q = c_2\alpha$ for some constant $c_1$ and $c_2$:
\begin{equation*}
    d_{TV}(\bin(m, c_1\alpha), \bin(m, c_2\alpha)) = \Theta(m\alpha)
\end{equation*}
when $m = O(1/\alpha)$. Thus, it provides an amplification factor of $m$ when $p$ and $q$ are small. Such an observation was also utilized by~\citet{liu2020learning} to derive upper bounds on the problem of learning discrete distributions and by~\citet{ghazi2023user} to characterize the user complexity of user-level pure private realizable learning. We did not elaborate on this in Section~\ref{sec:user} since the $\sqrt{m}$ amplification is sufficient there. However, for median selection, the $\sqrt{m}$ amplification only produces an undesirable user complexity of $\widetilde{O}\left(\frac{\log\vert\X\vert}{\sqrt{m}\alpha\varepsilon}\right)$.

The above observation implies that, for the probability density in $[l, r]$ to be less than $\alpha$, we only need the probability that one user has more than $\ell$ points in $[l, r]$ to be $O(m\alpha)$. By the realizable generalization property, we can use the empirical counterpart as an alternative and apply the exponential mechanism to select an approximate median privately.

\begin{algorithm}[tb]
\caption{$\mathsf{PrivateThreshold}$}
\label{alg:threshold}
\begin{algorithmic}
    \STATE {\bfseries Input:} dataset $\z\in\Z^{nm}$, privacy parameter $\varepsilon$, number of iterations $T$, user-level error parameter $t$, algorithm $\mathsf{PrivateMedian}$, decaying factor $\theta$.
    \STATE $l\gets 0,r\gets \vert\X\vert$
    \STATE $\varepsilon' \gets \varepsilon / (4T)$
    \FOR{$k\gets 1$ {\bfseries to} $T$}
    \IF{$l = r$}
    \STATE \textbf{break}
    \ENDIF
    \STATE $mid \gets \mathsf{PrivateMedian}(\z, \varepsilon', l, r, \theta^{k - 1})$
    \STATE $v_{mid} \gets \err_{\z}^t(f_{mid}) + \lap(1/n\varepsilon')$
    \STATE $v_l \gets \min_{u\in\{l, \dots, mid-1\}} \err_{\z}^t(f_u) + \lap(1/n\varepsilon')$
    \STATE $v_r \gets \min_{u\in\{mid+1, \dots, r\}} \err_{\z}^t(f_u) + \lap(1/n\varepsilon')$
    \STATE \textcolor{gray}{\% The $\min$ operator returns $+\infty$ if the range is empty.}
    \IF{$v_{mid} < \min(v_l, v_r)$}
    \STATE \textbf{return} $f_{mid}$
    \ELSIF{$v_l < v_r$}
    \STATE $r\gets mid - 1$
    \ELSE 
    \STATE $l\gets mid + 1$
    \ENDIF
    \ENDFOR
    \STATE \textbf{return} $f_l$
\end{algorithmic}
\end{algorithm}

\begin{lemma}
\label{lem:median}
    Let $\z\in\Z^{nm}$ be a dataset, where each $\z_{i,j}$ is drawn i.i.d. from some distribution $\D$, and $m$ be the number of samples per user. Suppose $\Pr_{x\sim\D_\X}[x\in[l, r]] \le \alpha$ and 
    \begin{equation*}
        n\ge \widetilde{O}\left(\frac{\log\vert\X\vert}{\varepsilon} + \frac{\log\vert\X\vert}{m\alpha\varepsilon}\right).
    \end{equation*}
    Then there exists an $\varepsilon$-differentially private algorithm that takes $\z, \varepsilon, l, r, \alpha$ as input and output some $u_0$ such that
    \begin{align*}
        \max\left(\Pr_{x\sim\D_\X}[x\in[l, u_0-1]], \Pr_{x\sim\D_\X}[x\in [u_0+1, r]]\right) \le\frac{2}{3}\alpha
    \end{align*}
    with probability at least $1-\beta$.
\end{lemma}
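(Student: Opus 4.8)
The plan is to implement $\mathsf{PrivateMedian}$ as a single invocation of the exponential mechanism over the candidate set $\{l,l+1,\dots,r\}$ of the $\le |\X|+1$ possible output thresholds, scored by a user-level quantity. Fix, as a function of $m$ and $\alpha$ only, a count threshold $\ell^*\in\{0,1,\dots,m\}$ (chosen below), and for each candidate $u$ set
\begin{equation*}
    q(\z,u)=\max\bigl(\dis_{\x}^{\ell^*}(f_{l-1},f_{u-1}),\ \dis_{\x}^{\ell^*}(f_{u},f_{r})\bigr).
\end{equation*}
Since $f_{l-1}$ and $f_{u-1}$ disagree exactly on $[l,u-1]$, the quantity $\dis_{\x}^{\ell^*}(f_{l-1},f_{u-1})$ is the fraction of users holding more than $\ell^*$ of their $m$ points in $[l,u-1]$, and its population version $\dis_{\D_\X^m}^{\ell^*}(f_{l-1},f_{u-1})$ equals $\Pr[\bin(m,p_L(u))>\ell^*]$ with $p_L(u)=\Pr_{x\sim\D_\X}[x\in[l,u-1]]$; symmetrically $\dis_{\x}^{\ell^*}(f_u,f_r)$ corresponds to $\Pr[\bin(m,p_R(u))>\ell^*]$ with $p_R(u)=\Pr_{x\sim\D_\X}[x\in[u+1,r]]$. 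Each of the two terms has sensitivity $O(1/n)$, hence so does $q$, so running the exponential mechanism (Lemma~\ref{lem:exp}) with score $q$ and budget $\varepsilon$ is $\varepsilon$-differentially private; this is the algorithm, and it outputs some $u_0$.

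For utility, first pick $\ell^*$ via Lemma~\ref{lem:bin} with $p=\tfrac23\alpha$ and $q=\tfrac12\alpha$: there $K=\Theta(\min(m\alpha,1))$ (the middle term of $K$ is $\Theta(\sqrt{m\alpha})$, which is $\Omega(m\alpha)$ when $m\alpha\le1$), so there is an $\ell^*$ with $\psi_1-\psi_0\ge\frac1{700}K=\Omega(\min(m\alpha,1))$, where $\psi_0:=\Pr[\bin(m,\tfrac\alpha2)>\ell^*]$ and $\psi_1:=\Pr[\bin(m,\tfrac23\alpha)>\ell^*]$. The key elementary observation is that $\psi_0\le\Pr[\bin(m,\tfrac\alpha2)\ge1]\le\tfrac{m\alpha}2$ and $\psi_0\le1$, so $\psi_0=O(\min(m\alpha,1))=O(\psi_1-\psi_0)$; say $\psi_0\le C(\psi_1-\psi_0)$ for a universal $C$. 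Since $p\mapsto\Pr[\bin(m,p)>\ell^*]$ is nondecreasing, every $u$ with $p_L(u)\le\tfrac\alpha2$ has population left-score $\le\psi_0$, while every $u$ with $p_L(u)>\tfrac23\alpha$ has it $\ge\psi_1$ (and likewise on the right). We may assume $\rho:=\Pr_{x\sim\D_\X}[x\in[l,r]]>\tfrac23\alpha$, since otherwise $p_L(u_0),p_R(u_0)\le\rho\le\tfrac23\alpha$ already for any $u_0$. Let $u^*$ be the median of $\D_\X$ restricted to $[l,r]$ (the least $u$ with $\Pr[x\in[l,u]]\ge\rho/2$); then $p_L(u^*),p_R(u^*)\le\rho/2\le\alpha/2$, so the population score of $u^*$ is at most $\psi_0$.

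The rest chains three deviation bounds with a small universal constant $\gamma$ (taken so that $(1+\gamma)^2\le1+\tfrac1{2C}$) and $\xi=\Theta(\min(m\alpha,1))$. A relative Chernoff bound for the \emph{fixed} point $u^*$ gives $q(\z,u^*)\le(1+\gamma)\psi_0+\xi$ w.h.p.\ once $n=\widetilde\Omega(1/(\gamma\xi))$; the exponential mechanism then returns $u_0$ with $q(\z,u_0)\le q(\z,u^*)+\tfrac{2\Delta}{\varepsilon}\ln\tfrac{|\X|+1}{\beta}\le(1+\gamma)\psi_0+\xi+\delta_{EM}$ for $\delta_{EM}=O\!\bigl(\tfrac{\log(|\X|/\beta)}{n\varepsilon}\bigr)$, so both $\dis_{\x}^{\ell^*}(f_{l-1},f_{u_0-1})$ and $\dis_{\x}^{\ell^*}(f_{u_0},f_r)$ are bounded by this; finally a relative uniform-convergence bound ``population $\le(1+\gamma)\cdot$empirical$+\xi$'' — this is Lemma~\ref{lem:relative} at the user level applied to the $\vc$-dimension-$1$ class of sub-intervals of $[l,r]$, equivalently a $\gamma$-parameterized version of Lemma~\ref{lem:user_re}, again costing $n=\widetilde\Omega(1/(\gamma\xi))$ users — gives $\Pr[\bin(m,p_L(u_0))>\ell^*]\le(1+\gamma)\dis_{\x}^{\ell^*}(f_{l-1},f_{u_0-1})+\xi\le(1+\gamma)^2\psi_0+O(\xi+\delta_{EM})$, and the same for $p_R(u_0)$. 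By the choice of $\gamma$ and $\psi_0\le C(\psi_1-\psi_0)$ we get $(1+\gamma)^2\psi_0\le\psi_0+\tfrac12(\psi_1-\psi_0)$, so taking $\xi$ and (through $n$) $\delta_{EM}$ to be a small enough constant multiple of $\psi_1-\psi_0=\Theta(\min(m\alpha,1))$ forces $\Pr[\bin(m,p_L(u_0))>\ell^*]<\psi_1$, whence $p_L(u_0)\le\tfrac23\alpha$ by monotonicity, and symmetrically $p_R(u_0)\le\tfrac23\alpha$. Tallying: $\delta_{EM}=O(\min(m\alpha,1))$ needs $n=\widetilde\Omega\bigl(\tfrac{\log|\X|}{\varepsilon\min(m\alpha,1)}\bigr)=\widetilde\Omega\bigl(\tfrac{\log|\X|}{\varepsilon}+\tfrac{\log|\X|}{m\alpha\varepsilon}\bigr)$, while the Chernoff/convergence bounds need only $n=\widetilde\Omega(1/\min(m\alpha,1))$, which is absorbed; a union bound over the $O(1)$ failure events of probability $O(\beta)$ finishes.

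The step I expect to be the main obstacle is the choice of $\ell^*$ and the amplification argument around it: one needs a single per-user count threshold for which the gap $\psi_1-\psi_0$ is $\Omega(\min(m\alpha,1))$ (so it overwhelms the $\widetilde O(1/(n\varepsilon))$-scale exponential-mechanism error once $n$ reaches the target) \emph{and} the witness score $\psi_0$ is only $O(\min(m\alpha,1))=O(\psi_1-\psi_0)$, so that a constant-factor \emph{multiplicative} generalization bound is enough. The latter is precisely why the built-in factor of $2$ in Lemma~\ref{lem:user_re} does not suffice off the shelf — when $m\alpha$ is small one has $\psi_1/\psi_0\to4/3$, not $2$ — and one must invoke Lemma~\ref{lem:relative} with a small enough constant $\gamma$ and re-derive the realizable generalization statement with that slack. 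Secondary points are handling the trivial regime $\Pr_{x\sim\D_\X}[x\in[l,r]]\le\tfrac23\alpha$ (done above) and absorbing the $\widetilde O(\log m)$ Sauer factor, harmless under the standing assumption $m\le1/\alpha^2$.
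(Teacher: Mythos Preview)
Your proposal is correct and follows essentially the same approach as the paper: the same exponential mechanism over $\{l,\dots,r\}$ with the same user-level score $q(\z,u)=\max\bigl(\dis_{\x}^{\ell^*}(f_{l-1},f_{u-1}),\dis_{\x}^{\ell^*}(f_u,f_r)\bigr)$, the same choice of $\ell^*$ via Lemma~\ref{lem:bin} to obtain an $\Omega(m\alpha)$ gap $\psi_1-\psi_0$, the same bound $\psi_0\le m\alpha/2$, and the same existence argument for a good median candidate $u^*$. The only cosmetic differences are that the paper first reduces to $m\le1/\alpha$ (you carry $\min(m\alpha,1)$ throughout) and, for the uniform generalization step needed to handle the $\psi_1/\psi_0\to4/3$ issue you correctly flag, the paper uses a multiplicative Chernoff bound with a tiny slack (factor $6303/6302$) plus a union bound over all threshold pairs rather than the VC-based Lemma~\ref{lem:relative} with small $\gamma$---both achieve the same thing.
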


Now, let us summarize the entire algorithm. We first use Lemma~\ref{lem:eta} to determine the value of $t$. Then we run Algorithm~\ref{alg:threshold} with $\mathsf{PrivateMedian}$ being the algorithm in Lemma~\ref{lem:median}. Note that though the $\frac{\log\vert \X\vert}{m\alpha\varepsilon}=\frac{\repd(\C)}{m\alpha\varepsilon}$ term does not explicitly appear in the lower bound presented by~\citet{ghazi2023user}, they also show that this term is necessary even for pure private realizable learning. Thus, we obtain the following nearly tight user complexity bound.

\begin{theorem}
\label{thm:threshold}
    Let $\C$ be the concept class of thresholds over $\X$ and $m$ be the number of samples per user. Then 
    \begin{equation*}
        \widetilde{\Theta}\left(\frac{\log\vert\X\vert}{\varepsilon} + \frac{\log\vert\X\vert}{m\alpha\varepsilon} +\frac{1}{\sqrt{m}\alpha\varepsilon} + \frac{1}{m\alpha^2}\right)
    \end{equation*}
    users are necessary and sufficient to $(\alpha, \beta)$-agnostically learn $\C$ with $\varepsilon$-differential privacy in the user-level setting.
\end{theorem}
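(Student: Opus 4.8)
The lower bound is not new: \citet{ghazi2023user} prove $\widetilde\Omega\big(\frac{\log|\X|}{\min(1,m\alpha)\varepsilon}+\frac{1}{\sqrt m\alpha\varepsilon}+\frac{1}{m\alpha^2}\big)$ for agnostically learning thresholds, and separately that the $\widetilde\Omega\big(\frac{\repd(\C)}{m\alpha\varepsilon}\big)=\widetilde\Omega\big(\frac{\log|\X|}{m\alpha\varepsilon}\big)$ term is already forced in the realizable case; since $\frac{1}{\min(1,m\alpha)}=\Theta\big(1+\frac{1}{m\alpha}\big)$, these assemble into the stated $\widetilde\Omega$ bound. So the content is the matching upper bound, which I realize with the algorithm sketched just before Lemma~\ref{lem:median}: split the privacy budget into $\varepsilon/2$ for invoking Lemma~\ref{lem:eta} to obtain $\hat\eta$ with $|\hat\eta-\eta|\le\alpha$, and $\varepsilon/2$ for $\mathsf{PrivateThreshold}$, run with $\mathsf{PrivateMedian}$ taken to be the algorithm of Lemma~\ref{lem:median}, $T=O(\log(1/\alpha))$, and $\theta=2/3$.

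The user-level error parameter $t$ is derived from $\hat\eta$: applying the last statement of Lemma~\ref{lem:bin} with $p=\hat\eta+\alpha/2$ and $q=\hat\eta+\alpha$ gives an integer $t$ with $|\Pr[\bin(m,p)>t]-\Pr[\bin(m,q)>t]|\ge\frac1{700}K$, and since $\sqrt{p(1-p)}\le1/2$ and $m\le1/\alpha^2$ one has $K\ge\sqrt m\alpha$. Using $|\hat\eta-\eta|\le\alpha$ together with the upper half of Lemma~\ref{lem:bin} to pass from $\hat\eta$ to $\eta$, the map $g(\cdot):=\Pr[\bin(m,\cdot)>t]$ then satisfies $g(\eta+\alpha)-g(\eta+\alpha/2)\ge\Omega(\sqrt m\alpha)$; more usefully, a local-CLT estimate of $g'(p)=m\Pr[\bin(m-1,p)=t]$ shows $g$ is monotone, $O(\sqrt m)$-Lipschitz everywhere, and of slope $\Omega(\sqrt m)$ on a window of width $\Omega(1/\sqrt m)$ around $t/m$, where $|t/m-\eta|=O(\alpha)=O(1/\sqrt m)$. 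For privacy: each of the $T$ iterations of Algorithm~\ref{alg:threshold} consists of one $\varepsilon'$-DP call to $\mathsf{PrivateMedian}$ and three Laplace mechanisms on statistics of user-level sensitivity $1/n$, so with $\varepsilon'=\varepsilon/(8T)$ basic composition over the $4T$ sub-mechanisms gives $\varepsilon/2$-DP, and composing with the $\hat\eta$ step (post-processed into $t$) yields $\varepsilon$-DP overall.

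For utility, condition on: $|\hat\eta-\eta|\le\alpha$; the user-level agnostic generalization event of Lemma~\ref{lem:user_ag} at accuracy $\lambda:=c\sqrt m\alpha/T$ for a small constant $c$, so $|\err_{\D^m}^t(f_u)-\err_{\z}^t(f_u)|\le\lambda$ for every threshold simultaneously; every Laplace draw in Algorithm~\ref{alg:threshold} being at most $\lambda$ in magnitude (Lemma~\ref{lem:lap}); and every call to $\mathsf{PrivateMedian}$ succeeding (Lemma~\ref{lem:median}), where in round $k$ the passed density bound $\theta^{k-1}$ is a valid upper bound on $\Pr_{x\sim\D_\X}[x\in[l,r]]$. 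A union bound over these $O(T)$ events costs only a constant blow-up of $\beta$. I then maintain the invariant that after round $k$ the live interval $[l,r]$ contains a threshold whose error is at most $\eta+k\delta$ with $\delta:=\Theta(\alpha/T)$. In round $k$, writing $a,c,b$ for the minimum errors over the left part, $\{mid\}$, and the right part, the noisy quantities $v_l,v_{mid},v_r$ agree with $g(a),g(c),g(b)$ up to additive $2\lambda$ (generalization plus noise); because $g$ is monotone and $\Omega(\sqrt m)$-steep over $[\eta,\eta+\alpha]$ while every far-suboptimal option has $g$-value near $1$, the branch minimizing its $v$-value has minimum error exceeding $\min(a,b,c)$ by at most the $s$ with $g(\cdot+s)-g(\cdot)\le 4\lambda$, i.e. by $O(\lambda/\sqrt m)=O(\delta)$, which preserves the invariant; the invariant in turn keeps all these errors inside $[\eta,\eta+\alpha]\subseteq[\eta-O(1/\sqrt m),\eta+O(1/\sqrt m)]$, closing the bootstrap. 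If the algorithm returns $f_{mid}$, the same comparison gives $\err_\D(f_{mid})\le\eta+O(\alpha)$. Otherwise, after $T=O(\log(1/\alpha))$ rounds with $\theta=2/3$ the final interval is either a single point (and the invariant bounds $\err_\D(f_l)$ directly) or has $\D_\X$-mass $\le\theta^T\le\alpha$, so $\dis_{\D_\X}(f_l,f_u)\le\alpha$ for the near-optimal $u$ it still contains, giving $\err_\D(f_l)\le\err_\D(f_u)+\alpha\le\eta+O(\alpha)$; rescaling $\alpha$ by the hidden constant finishes correctness.

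Finally, the user count: the worst $\mathsf{PrivateMedian}$ call has density $\Theta(\alpha)$ and so needs $\widetilde O\big(\frac{\log|\X|}{\varepsilon}+\frac{\log|\X|}{m\alpha\varepsilon}\big)$ users (Lemma~\ref{lem:median}); the generalization event at accuracy $\lambda=\Theta(\sqrt m\alpha/T)$ needs $n=\widetilde O\big(\frac1{m\alpha^2}\big)$ (Lemma~\ref{lem:user_ag}, $\vc(\C)=1$); the Laplace bound needs $n=\widetilde O\big(\frac{T}{\sqrt m\alpha\varepsilon}\big)=\widetilde O\big(\frac1{\sqrt m\alpha\varepsilon}\big)$; and estimating $\hat\eta$ costs $\widetilde O\big(\frac1\varepsilon+\frac1{m\alpha^2}+\frac1{\sqrt m\alpha\varepsilon}\big)$ (Lemma~\ref{lem:eta}, $\vc(\C)=1$), the $T=O(\log(1/\alpha))$ factors being absorbed by $\widetilde O$. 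Summing yields the claimed $\widetilde O\big(\frac{\log|\X|}{\varepsilon}+\frac{\log|\X|}{m\alpha\varepsilon}+\frac1{\sqrt m\alpha\varepsilon}+\frac1{m\alpha^2}\big)$, matching the lower bound. I expect the delicate step to be the utility argument above, and within it the choice of $t$ from the noisy $\hat\eta$: one must ensure that, uniformly over the $O(\alpha)$-range of item-level errors the live interval can attain, the binomial tail-CDF $g$ has slope $\Theta(\sqrt m)$, which is exactly what makes noisy comparisons of user-level errors translate into item-level comparisons with only $O(\alpha/\log(1/\alpha))$ slack per round; the privacy composition, the union bounds, and the arithmetic are routine given Lemmata~\ref{lem:eta}, \ref{lem:median}, \ref{lem:bin}, \ref{lem:user_ag}, and \ref{lem:lap}.
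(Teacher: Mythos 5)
Your algorithm, parameter choices, lower-bound assembly, and user-count accounting all coincide with the paper's proof. The one place you diverge is the core of the utility argument, and that divergence contains a genuine gap. You maintain the invariant in \emph{item-level} error space (``the live interval contains a threshold of error at most $\eta+k\delta$'' with $\delta=\Theta(\alpha/T)$), and to propagate it you need that a $4\lambda$ increase in the user-level error $g(p)=\Pr[\bin(m,p)>t]$ forces at most an $O(\lambda/\sqrt m)$ increase in $p$, i.e.\ that $g$ has slope $\Omega(\sqrt m)$ \emph{uniformly} over $[\eta,\eta+\alpha]$. That is false in general. Since $g'(p)=m\Pr[\bin(m-1,p)=t]$, the steep window of $g$ has width $\Theta\bigl(\sqrt{(t/m)(1-t/m)/m}\bigr)$, which for small $\eta$ (hence small $t/m$) is much narrower than $\alpha$ when $m$ is close to $1/\alpha^2$: e.g.\ with $\eta\approx 0$ and $\alpha=1/\sqrt m$, the chosen $t$ is $\Theta(\sqrt m)$ and $g$ is essentially flat (near $0$) on a constant fraction of $[\eta,\eta+\alpha]$ before jumping over a window of width $O(m^{-3/4})$. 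Lemma~\ref{lem:bin} only certifies the separation $g(q)-g(p)\ge K/700$ for the \emph{single} pair $(p,q)$ used to pick $t$; it does not give a pointwise slope bound, and a flat stretch of $g$ lets a single round of your argument lose $\Omega(\alpha)$ rather than $O(\alpha/T)$ of item-level accuracy, breaking the bootstrap.

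The repair is exactly what the paper does: keep the inductive invariant in \emph{user-level} error space, namely $\min_{u\in\{l_k,\dots,r_k\}}\err_{\z}^t(f_u)\le\psi+O(k\sqrt m\alpha/T)$, where $\psi=\Pr[\bin(m,\eta)>t]$. Each round this degrades only additively by twice the Laplace noise bound, with no conversion needed, and the translation back to item-level error happens \emph{once} at termination: by monotonicity of $p\mapsto\Pr[\bin(m,p)>t]$ and the single separation $\Pr[\bin(m,\hat\eta+\alpha/6)>t]+\sqrt m\alpha/4200\le\Pr[\bin(m,\hat\eta+\alpha/3)>t]$ built into the choice of $t$, any threshold whose user-level generalization error is at most $\psi+\sqrt m\alpha/4200$ has item-level error at most $\eta+\alpha/2$. (Relatedly, your $|\hat\eta-\eta|\le\alpha$ together with $p=\hat\eta+\alpha/2$ does not guarantee $\eta\le p$, so the monotonicity step $\psi\le g(p)$ can fail; the paper uses $|\hat\eta-\eta|\le\alpha/6$ with $p=\hat\eta+\alpha/6$ for this reason, though that is only a constant-factor adjustment.) Everything else in your write-up --- the three termination cases, the $(2/3)^T\le\alpha/2$ density bound for the final interval, and the per-component user counts --- matches the paper and is fine.
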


\section{Conclusion}

This work investigates private agnostic learning under item-level and user-level pure DP. In the item-level setting, we devise an algorithm that achieves the optimal sample complexity up to polylogarithmic factors. In the user-level setting, we propose a generic learner for arbitrary concept classes, with an enhanced user complexity than the best-known result in~\citep{ghazi2023user}. For the specific task of learning thresholds, we develop a near-optimal upper bound and conjecture that such improvement can also be made in the generic case. We leave the problem of closing the gap for general concept classes as future work. Another interesting direction for future work is to investigate the scenario where different users may hold different amounts of data.

\section*{Acknowledgements}

The research was supported in part by a RGC RIF grant under the contract R6021-20, RGC CRF grants under the contracts C7004-22G and C1029-22G, and RGC GRF grants under the contracts 16209120, 16200221, 16207922, and 16211123. We would like to thank the anonymous reviewers for their helpful comments.



\bibliography{references.bib}
\bibliographystyle{unsrtnat}

\newpage
\appendix
\onecolumn
\section{Additional Preliminaries}
\label{sec:add}

\subsection{The Vapnik-Chervonenkis Dimension}
The Vapnik-Chervonenkis dimension (VC dimension) is a combinatorial measure of concept classes, which characterizes the sample complexity of (non-private) PAC learning and agnostic learning. Consider a concept class $\C$ over domain $\X$. Let $\x\in\X^n$ be an unlabeled dataset of size $n$. The set of all dichotomies on $\x$ that are realized by $\C$ is denoted by $\Pi_{\C}(\x) = \{(c(\x_1), \dots, c(\x_n))\mid c\in\C\}$. The growth function of $\C$ is defined as
\begin{equation*}
    \Pi_{\C}(n) = \max_{x\in\X^n}\vert\Pi_{\C}(\x)\vert.
\end{equation*}
\begin{definition}
    The VC dimension of a concept class $\C$ is defined as
    the largest number $d$ such that $\Pi_{\C}(d) = 2^d$ (or infinity, if the maximum does not exist).
\end{definition}

Though the number of binary vectors of length $n$ is $2^n$, Sauer's Lemma states that the growth function is polynomially bounded.

\begin{lemma}[Sauer's Lemma]
    Let $\C$ be a concept class with VC dimension $d$. Then for $n\ge d$ we have $\Pi_{\C}(n) \le \left(\frac{en}{d}\right)^d$.
\end{lemma}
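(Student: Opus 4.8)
The plan is to prove the statement as the standard two-step chain: first a purely combinatorial bound $|\Pi_{\C}(\x)| \le \sum_{i=0}^{d}\binom{n}{i}$ for \emph{every} $\x\in\X^n$ (hence the same bound for the growth function $\Pi_{\C}(n)$), and then the elementary estimate $\sum_{i=0}^{d}\binom{n}{i}\le (en/d)^d$, valid for $n\ge d$. Throughout I will treat $S:=\Pi_{\C}(\x)$ as a finite family of strings in $\{0,1\}^n$ whose VC dimension as a set of $n$-bit strings is at most $d=\vc(\C)$, so that only the combinatorics of such families matters.

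For the combinatorial bound I would induct on $n$, proving the strengthened claim ``every $S\subseteq\{0,1\}^n$ of VC dimension at most $d$ satisfies $|S|\le\sum_{i=0}^{d}\binom{n}{i}$''; phrasing it with $d$ as a free upper bound is what legitimizes the two recursive calls below. Deleting the last coordinate, let $S_1\subseteq\{0,1\}^{n-1}$ be the projection of $S$ onto coordinates $1,\dots,n-1$, and let $S_2\subseteq\{0,1\}^{n-1}$ be the set of $y$ such that both $y0$ and $y1$ lie in $S$. A short count gives $|S|=|S_1|+|S_2|$: every $y\in S_1$ is the projection of one or two strings of $S$, and the number projected to twice is exactly $|S_2|$. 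Now $S_1$ has VC dimension at most $d$, and — the one real point — $S_2$ has VC dimension at most $d-1$: if $S_2$ shattered a set $T\subseteq\{1,\dots,n-1\}$ with $|T|=d$, then $S$ would shatter $T\cup\{n\}$, contradicting $\vc(\C)=d$. The inductive hypothesis then yields $|S_1|\le\sum_{i=0}^{d}\binom{n-1}{i}$ and $|S_2|\le\sum_{i=0}^{d-1}\binom{n-1}{i}$, and Pascal's identity $\binom{n-1}{i}+\binom{n-1}{i-1}=\binom{n}{i}$ gives $|S|\le\sum_{i=0}^{d}\binom{n}{i}$. The base cases $n=0$ and $d=0$ (where $|S|\le 1$) are immediate.

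For the second step, using $n\ge d$ so that $(n/d)^{d-i}\ge 1$ for $i\le d$, I would write $\sum_{i=0}^{d}\binom{n}{i}\le (n/d)^d\sum_{i=0}^{d}\binom{n}{i}(d/n)^i\le (n/d)^d\sum_{i=0}^{n}\binom{n}{i}(d/n)^i=(n/d)^d(1+d/n)^n\le (n/d)^d e^d=(en/d)^d$, invoking the binomial theorem and $1+x\le e^x$. Combining with $\Pi_{\C}(n)=\max_{\x}|\Pi_{\C}(\x)|\le\sum_{i=0}^{d}\binom{n}{i}$ finishes the proof.

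The main (indeed essentially the only) obstacle is the verification that the doubly-covered family $S_2$ drops in VC dimension to at most $d-1$; everything else is bookkeeping with binomial coefficients. A secondary subtlety worth stating carefully is that the induction must be on the number of points $n$ with $d$ carried along as a varying bound on the VC dimension, so that the recursive calls with parameters $(n-1,d)$ and $(n-1,d-1)$ are both covered by the hypothesis.
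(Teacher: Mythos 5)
Your proof is correct: the decomposition $|S|=|S_1|+|S_2|$, the key observation that the doubly-covered family $S_2$ has VC dimension at most $d-1$ (else $S$ shatters $T\cup\{n\}$), the induction on $n$ with $d$ carried as a varying bound, and the estimate $\sum_{i=0}^{d}\binom{n}{i}\le (n/d)^d(1+d/n)^n\le (en/d)^d$ for $n\ge d$ are all sound. Note that the paper does not prove this lemma at all — it is quoted as a classical result with a citation to Sauer (1972) — and what you have written is precisely the standard Sauer--Shelah argument found in the textbooks, so there is nothing in the paper to diverge from. The only cosmetic slip is that in the inductive step the contradiction is with $\vc(S)\le d$ (the hypothesis of the strengthened claim), not literally with $\vc(\C)=d$, but since you set up the induction over arbitrary families of VC dimension at most $d$ this does not affect the argument.
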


The following technical inequality from~\citep{anthony1999neural} is useful in deriving sample/user complexity bounds:

\begin{equation}
\label{equ:tec}
    \ln a \le ab + \ln(1/ b) - 1\text{ for all }a, b > 0.
\end{equation}

Applying Sauer's lemma usually gives an inequality of the form $A + B\ln n \le n$. To find an $n$ that satisfies this inequality, let $a = n$ and $b = 1 / 2B$ in~\eqref{equ:tec}, we get
\begin{equation*}
    A + B\ln n \le A + B(n/2B + \ln(2B) - 1) = A + n/2 + B\ln(2B/e).
\end{equation*}
Thus, it suffices to find an $n$ s.t. $A + B\ln(2B/e)\le n / 2$ or, equivalently, $2A + 2B\ln(2B/e) \le n$.

\subsection{Concentration Bounds}

Let $X_1, \dots, X_n$ be independent Bernoulli random variables with probability $p$ of being $1$. The expected value of their sum is $\mathbb{E}[X_1+\dots X_n] = np$. The following inequalities bound the probability that the summation deviates from the its expected value:
\begin{align*}
    &\Pr\left[\sum_{i = 1}^{n}X_i \ge (1 + \delta)np\right]\le \exp(-\delta^2np / (2 + \delta))\text{ for }\delta > 0,\\
    &\Pr\left[\sum_{i = 1}^{n}X_i \le (1 - \delta)np\right]\le \exp(-\delta^2np / 2)\text{ for } 0 < \delta < 1,\\
    &\Pr\left[\left\vert\sum_{i = 1}^{n}X_i - np\right\vert\ge \delta\right]\le 2\exp(-2\delta^2/ n)\text{ for }\delta \ge 0.
\end{align*}

The first two inequalities are knows as the Chernoff bounds~\citep{chernoff1952measure} and the last one is known as Hoeffding's inequality~\citep{hoeffding1963probability}.
\section{Proof of Theorem~\ref{thm:item}}

\begin{proof}
    The lower bound can be shown by combining the $\widetilde{\Omega}\left(\frac{\repd(\C)}{\alpha\varepsilon}\right)$ lower bound on private realizable learning~\citep{beimel2019characterizing} and the $\widetilde{\Omega}\left(\frac{\vc(\C)}{\alpha^2}\right)$ lower bound on agnostic learning~\citep{vapnik1974theory,simon1996general}.

    Now, let us focus on the upper bound. Our algorithm $\A$ works as follows: first samples $\HH\sim\PP$, where $\PP$ is a $(\alpha/18, \beta/5)$-probabilistic representation of $\C$ with $\mathrm{size}(\PP) = \repd_{\alpha/18, \beta/5}(\C)$, then runs the exponential mechanism on $\HH$ with privacy parameter $\varepsilon$, sensitivity parameter $\Delta=2/n$, and score function $q(\z, h)$ defined 
    as
    \begin{equation*}
        q(\z, h) = \min_{c\in\C}\err_{\z}(c) + \dis_{\x}(c, h).
    \end{equation*}
    Note that each term in the minimization will change by at most $1/n$ when moving to a neighboring dataset. This implies that the sensitivity of $q$ is at most $2/n$. Thus, the privacy guarantee of the exponential mechanism ensures that $\A$ is $\varepsilon$-differentially private.
    
    Now it remains to show that $\A$ is an $(\alpha, \beta)$-agnostic learner. Let $\eta = \inf_{c\in\C}\err_{\D}(c)$, then there exists some $c'\in\C$ s.t. $\err_{\D}(c') \le \eta + \alpha/18$. By Lemma~\ref{lem:vc_ag}, for $n \ge \widetilde{O}\left(\frac{\vc(\C)}{\alpha^2}\right)$, we have $\vert\err_{\D}(c) - \err_{\z}(c)\vert\le \alpha/18$ for all $c\in\C$ with probability $1 - \beta/5$. Thus, 
    \begin{equation*}
        \err_{\z}(c') \le \err_{\D}(c') + \alpha/18 \le \eta + \alpha/9.
    \end{equation*}
    
    Since $\PP$ is an $(\alpha/18, \beta/5)$-probabilistic representation, we know that with probability $1-\beta/5$, there exists some $h'$ s.t. $\dis_{\D_{\X}}(c', h')\le \alpha/18$. Then by the Chernoff bound, for $n\ge\widetilde{O}\left(1/ \alpha\right)$, with probability $1-\beta/5$, we have
    \begin{equation*}
        \dis_{\x}(c', h')\le 2\dis_{\D_\X}(c', h') \le \alpha/9.
    \end{equation*}
    
    Therefore, by Lemma~\ref{lem:exp}, for $n\ge  \widetilde{O}\left(\frac{\repd(\C)}{\varepsilon\alpha}\right)$, the exponential mechanism chooses some $h_0$ s.t. 
    \begin{align*}
        q(\z, h_0) &\le \min_{f\in\HH}q(\z, f) + \alpha/18 \\
        &\le q(\z, h') + \alpha/18 \\
        &\le \err_{\z}(c') + \dis_{\x}(c', h') + \alpha / 18 \\
        &\le \eta + 5\alpha/18
    \end{align*}
    with probability at least $1 - \beta/5$.

    Suppose $q(\z, h_0) = \err_{\z}(c_0) + \dis_{\x}(c_0, h_0)$, we have
    \begin{equation*}
        \err_{\z}(c_0) \le q(\z, h_0) \le \eta + 5\alpha/18.
    \end{equation*}
    Then by agnostic generalization, we can bound the generalization error of $c_0$:
    \begin{equation*}
        \err_{\D}(c_0)\le \err_{\z}(c_0) + \alpha/18\le \eta + \alpha/3.
    \end{equation*}

    Since every concept in $\C$ has error at least $\eta$ on $\D$, agnostic generalization also implies that \begin{equation*}
        \err_{\z}(c_0) \ge \err_{\D}(c_0) - \alpha/18 \ge \eta - \alpha/18.
    \end{equation*}
    Thus, we can bound the empirical disagreement between $c_0$ and $h_0$:
    \begin{align*}
        \dis_{\z}(c_0, h_0) &= q(\z, h_0) - \err_{\z}(c_0)\\
        &\le \eta + 5\alpha/18 - (\eta - \alpha/18) \\
        &=\alpha/3.
    \end{align*}

    We now show the realizable generalization property. By Lemma~\ref{lem:boosting} and~\ref{lem:union}, we have
    \begin{align*}
        \vc(\C\cup\HH) &= O(\vc(\C) + \vc(\HH)) \\
        &= O(\vc(\C) + \log_2\vert\HH\vert) \\
        &= O(\vc(\C) + \repd_{\alpha/18, \beta/5}(\C)) \\
        &= \widetilde{O}(\vc(\C) + \repd(\C)) \\
        &= \widetilde{O}(\repd(\C)).
    \end{align*}

    Then applying Lemma~\ref{lem:vc_re}, for $n \ge \widetilde{O}\left(\frac{\repd(\C)}{\alpha}\right)$, with probability $1-\beta/5$ it holds that $\dis_{\D_\X}(c, h)\le 2\alpha/3$ for all $(c, h)\in\C\times\HH$ s.t. $\dis_{\x}(c, h) \le \alpha/3$. Thus, $\dis_{\D_\X}(c_0, h_0)\le 2\alpha/3$.

    By the union bound, we can conclude that
    \begin{align*}
        \err_{\D}(h_0) &\le \err_{\D}(c_0) + \dis_{\D_\X}(c_0, h_0) \\ &\le \eta + \alpha/3 + 2\alpha/3 \\ &= \eta + \alpha
    \end{align*}
    with probability $1 - \beta$ for some $n = \widetilde{O}\left(\frac{\repd(\C)}{\alpha\varepsilon} + \frac{\vc(\C)}{\alpha^2}\right)$.

\end{proof}
\section{Proofs from Section~\ref{sec:user}}

\subsection{Proof of Lemma~\ref{lem:user_ag}}
\begin{proof}
    For a concept $c\in\C$, define a concept $\varphi_c: \Z^m\to\{0, 1\}$ such that $\varphi_c(\z_0) = \ind[m\cdot\err_{\z_0}(c) > t]$ for $\z_0\in\Z^m$. Let $\C_{\varphi} = \{\varphi_c\mid c\in\C\}$, $\D'$ be a distribution on $\Z^m\times\{0, 1\}$ s.t.
    \begin{equation*}
        \Pr_{\D'}[(\z_0, y_0)] = \begin{cases}
            \Pr_{\D^m}[\z_0], &y_0=0, \\
            0, &y_0=1,
        \end{cases}
    \end{equation*}
    and $\z'=(z'_1,\dots,z'_n)\in(\Z^m\times\{0, 1\})^n$ be a dataset with $z'_i = (\z_i, 0)$, where $\z=(\z_1,\dots,\z_n)$ is a dataset sampled from $\D^{nm}$. We have
    \begin{equation*}
        \err_{\D'}(\varphi_c) = \Pr_{(\z_0, y_0)\sim\D'}[\varphi_c(\z_0)\neq y_0] =\Pr_{\z_0\sim\D^m}[m\cdot\err_{\z_0}(c)>t] =\err_{\D^m}^t(c)
    \end{equation*}
    and
    \begin{equation*}
        \err_{\z'}(\varphi_c) = \frac{1}{n}\sum_{i=1}^n\ind[\varphi_c(\z_i)\neq 0] = \frac{1}{n}\sum_{i=1}^n\ind[m\cdot\err_{\z_i}(c)>t]= \err_{\z}^t(c).
    \end{equation*}
    Moreover, each $z'_i$ can be seen as i.i.d. drawn from $D'$. Applying Lemma~\ref{lem:relative} with concept class $\C_{\varphi}$, distribution $\D'$, dataset $\z'$, and $\gamma = \xi = \alpha/2$, we have
    \begin{align*}
        &{}\Pr[\exists c\in\C~s.t.~\err_{\D^m}^t(c) > \err_{\z}^t(c) + \alpha] \\
        =&\Pr[\exists \varphi_c\in\C_{\varphi}~s.t.~\err_{\D'}(\varphi_c) > \err_{\z'}(\varphi_c)+\alpha] \\
        \le&\Pr[\exists \varphi_c\in\C_{\varphi}~s.t.~\err_{\D'}(\varphi_c) > (1+\alpha/2)\err_{\z'}(\varphi_c)+\alpha/2] \\
        \le&4\Pi_{\C_{\varphi}}(2n)\exp\left(\frac{-\alpha^2n/4}{4(1+\alpha/2)}\right) \\
        \le&4\Pi_{\C}(2nm)\exp\left(-\alpha^2n/32\right),
    \end{align*}
    where the last inequality is due to the fact that
    \begin{align*}
        \Pi_{\C_{\varphi}}(2n) &= \max_{\z\in\Z^{2nm}}\vert\Pi_{\C_\varphi}(\z)\vert \\ 
        &= \max_{\z\in\Z^{2nm}} \left\vert\left\{\left(\ind\left[\sum_{i=1}^m\ind\left[c(x_{1,i})\neq y_{1,i}\right]>t\right],\dots,\ind\left[\sum_{i=1}^m\ind\left[c(x_{2n,i})\neq y_{2n,i}\right]>t\right]\right)~\middle\vert~c\in\C\right\}\right\vert\\
        &\le\max_{\z\in\Z^{2nm}}\left\vert\left\{\left(\ind[c(x_{1, 1})\neq y_{1, 1}], \dots, \ind[c(x_{2n, m})\neq y_{2n, m}]\right)\mid c\in\C\right\}\right\vert\\
        &=\max_{\x\in\X^{2nm}}\vert\Pi_{\C}(\x)\vert \\
        &= \Pi_{\C}(2nm).
    \end{align*}
    By Sauer's Lemma, we have $\Pi_{\C}(2nm) \le \left(\frac{2enm}{d}\right)^d$ if $2nm\ge d$, where $d = \vc(\C)$. We want the above quantity to be at most $\beta/2$. Thus, it suffices to show
    \begin{equation*}
        n\ge \frac{32}{\alpha^2}\left(d\ln n + d\ln(2em/d)+\ln(8/\beta)\right).
    \end{equation*}
    By inequality~\eqref{equ:tec}, we have
    \begin{equation*}
        \frac{32d}{\alpha^2}\ln n \le \frac{n}{2} + \frac{32d}{\alpha^2}\ln\left(\frac{64d}{e\alpha^2}\right).
    \end{equation*}
    Therefore,
    \begin{equation}
    \label{equ:user_ag}
        n\ge \frac{64}{\alpha^2}\left(d\ln(128m/\alpha^2)+\ln(8/\beta)\right)
    \end{equation}
    suffices.
    We still need to prove the other side. For a concept $c$, define $c^{-}(x) = 1 - c(x)$. Let $\C^- = \{c^-\mid c\in\C\}$ and $t^- = m - t - 1$. It is easy to verify that $\vc(\C^-) = \vc(\C)$, $\err_{\z}^{t^-}(c^-) = 1 - \err_{\z}^t(c)$, and $\err_{\D^m}^{t^-}(c^-) = 1 - \err_{\D^m}^t(c)$. Applying the above conclusion to $\C^-$ and $t^-$ gives
    \begin{align*}
        {}&\Pr[\exists c\in\C~s.t.~\err_{\D^m}^t(c) + \alpha < \err_{\z}^t(c)] \\
        =& \Pr[\exists c^-\in\C^-~s.t.~1 - \err_{\D^m}^{t^-}(c^-) + \alpha < 1 - \err_{\z}^{t^-}(c^-)] \\
        =& \Pr[\exists c^-\in\C^-~s.t.~\err_{\D^m}^{t^-}(c^-) > \err_{\z}^{t^-}(c^-) + \alpha] \\
        \le& \beta/2
    \end{align*}
    if $n$ satisfies inequality~\eqref{equ:user_ag}. Applying the union bound completes the proof.
\end{proof}

\subsection{Proof of Lemma~\ref{lem:user_re}}

\begin{proof}
    For any $c, h\in\C$, define a concept $\varphi_{c,h}:\X^m\to\{0, 1\}$ such that $\varphi_{c, h}(\x_0) = \ind[m\cdot\dis_{\x_0}(c, h) > s]$ for $\x_0\in\X^m$. Let $\C_{\varphi} = \{\varphi_{c,h}\mid c,h\in\C\}$, $\D'$ be a distribution on $\X^m\times\{0, 1\}$ s.t.
    \begin{equation*}
        \Pr_{\D'}[(\x_0,y_0)] = \begin{cases}
            \Pr_{\D_\X^m}[\x_0], &y_0=0, \\
            0, &y_0=1,
        \end{cases}
    \end{equation*}
    and $\z' \in (\X^m\times\{0, 1\})^n$ be a dataset with $z'_i = (\x_i, 0)$. Similar to the proof of Lemma~\ref{lem:user_ag}, we have $\err_{\D'}(\varphi_{c, h}) = \dis_{\D_\X^m}^s(c, h)$ and $\err_{\z'}(\varphi_{c, h}) = \dis_{\x}^s(c, h)$. By Lemma~\ref{lem:relative}, let $\gamma=1/2$ and $\xi = \alpha/4$, we have
    \begin{align*}
        {}&\Pr\left[\exists c, h\in\C~s.t.~\dis_{\D_\X^m}^s(c, h)>\alpha~and~\dis_{\x}^s(c, h) \le \alpha/2 \right] \\
        \le&\Pr\left[\exists c, h\in\C~s.t.~\dis_{\D_{\X}^m}^s(c, h) > 3/2\dis_{\x}^s(c, h) + \alpha/4\right] \\
        =& \Pr[\exists\varphi_{c, h}\in\C_{\varphi}~s.t.~\err_{\D'}(\varphi_{c, h}) > 3/2\err_{\z'}(\varphi_{c, h}) + \alpha/4] \\
        \le&4\Pi_{\C_{\varphi}}(2n)\exp\left(-\frac{-1/2\cdot\alpha n/4}{4\cdot(1 + 1/2)}\right) \\
        \le&4\left(\Pi_{\C}(2nm)\right)^2\exp(-\alpha n/48),
    \end{align*}
    where the last inequality comes from the fact that
    \begin{align*}
        &\Pi_{\C_{\varphi}}(2n) \\=& \max_{\x\in\X^{2nm}}\vert\Pi_{\C_\varphi}(\x)\vert \\
        =& \max_{\x\in\X^{2nm}}\left\vert\left\{\left(\ind\left[\sum_{i=1}^m\ind[c(x_{1,i})\neq h(x_{1, i})] > s\right], \dots,\ind\left[\sum_{i=1}^m\ind[c(x_{2n,i})\neq h(x_{2n, i})] > s\right]\right)~\middle|~ c, h\in\C\right\}\right\vert \\
        \le& \max_{\x\in\X^{2nm}}\left\vert\{(\ind[c(x_{1,1})\neq h(x_{1, 1})], \dots,\ind[c(x_{2n,m})\neq h(x_{2n, m})])\mid c, h\in\C\}\right\vert \\
        \le& \max_{\x\in\X^{2nm}} \vert\Pi_{\C}(\x)\vert\cdot\vert\Pi_{\C}(\x)\vert \\
        =&(\Pi_{\C}(2nm))^2.
    \end{align*}
    Let $d = \vc(\C)$, Sauer's Lemma ensures that the above quantity is at most $\beta$ if $2nm\ge d$ and
    \begin{equation*}
        n\ge \frac{48}{\alpha}\left(2d\ln n + 2d\ln(2em/d)+ \ln(4/\beta)\right).
    \end{equation*}
    By inequality~\eqref{equ:tec}, we have
    \begin{equation*}
        \frac{96d}{\alpha}\ln n \le \frac{n}{2} + \frac{96d}{\alpha}\ln\left(\frac{192d}{e\alpha}\right)
    \end{equation*}
    Thus, 
    \begin{equation*}
        n\ge \frac{96}{\alpha}\left(2d\ln(384m/\alpha)+\ln(4/\beta)\right)
    \end{equation*}
    suffices.
\end{proof}

\subsection{Proof of Lemma~\ref{lem:eta}}

The key insight behind Lemma~\ref{lem:eta} is that we can decide whether a given value $\widetilde{\eta}$ is greater or lees than the minimum error $\eta$ up to an error of $O(\alpha)$, as demonstrated in the following lemma.

\begin{lemma}
\label{lem:comp}
    Let $\C$ be a concept class, $\D$ be a distribution over $\Z$, $m$ be the number of samples per user, and $\eta = \inf_{c\in\C}\err_{\D}(c)$. Suppose 
    \begin{equation*}
        n\ge \widetilde{O}\left(\frac{1}{\varepsilon} + \vc(\C) + \frac{\vc(\C)}{m\alpha^2} + \frac{1}{\sqrt{m}\alpha\varepsilon}\right)
    \end{equation*}
    and $\z\in\Z^{nm}$ be a dataset with each $\z_{i, j}$ drawn i.i.d. from $\D$. Then there exists an $\varepsilon$-differentially private algorithm that takes $\z,\varepsilon, \widetilde{\eta}, \alpha$ as input and outputs some $\sigma\in\{0, 1\}$, such that with probability $1-\beta$:
    \begin{enumerate}
        \item If $\sigma = 0$, then $\widetilde{\eta} \ge \eta - \alpha/2$.
        \item If $\sigma = 1$, then $\widetilde{\eta} \le \eta + \alpha/2$.
    \end{enumerate}
\end{lemma}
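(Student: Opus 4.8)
The plan is to build a one-shot comparison routine: given the public guess $\widetilde\eta$, select a threshold $\widetilde t\in\{0,\dots,m\}$ that depends only on $\widetilde\eta,\alpha,m$ so that the minimum \emph{user-level} error $\min_{c\in\C}\err_{\D^m}^{\widetilde t}(c)$ behaves, up to a $\Theta(\min(\sqrt m\alpha,1))$ gap, like an indicator of whether $\eta\lesssim\widetilde\eta$; then release a Laplace-noised empirical version of this minimum and threshold it. First I would peel off the trivial ranges: if $\widetilde\eta\le\alpha/2$ output $\sigma=1$ (valid since then $\widetilde\eta-\alpha/2\le0\le\eta$), and symmetrically if $\widetilde\eta\ge 1-\alpha/2$ output $\sigma=0$. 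In the remaining range set $p=\widetilde\eta-\alpha/4$ and $q=\widetilde\eta+\alpha/4$, both in $[0,1]$.

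Apply Lemma~\ref{lem:bin} with these $p,q$ to obtain an integer $\widetilde t$ with $\bigl|\Pr[\bin(m,q)>\widetilde t]-\Pr[\bin(m,p)>\widetilde t]\bigr|\ge K/700$, where $K=\min(m\alpha/2,\ \sqrt m\alpha/(2\sqrt{\widetilde\eta(1-\widetilde\eta)}),\ 1)$. Since $\widetilde\eta(1-\widetilde\eta)\le 1/4$ and $m\ge1$, a short computation gives $K\ge\tfrac14\min(\sqrt m\alpha,1)$; moreover binomial CDFs are stochastically monotone in the success probability, so $\Pr[\bin(m,\cdot)>\widetilde t]$ is nondecreasing and the \emph{signed} gap $G:=\Pr[\bin(m,q)>\widetilde t]-\Pr[\bin(m,p)>\widetilde t]$ is positive with $G=\Omega(\min(\sqrt m\alpha,1))$. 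Let $\tau=\tfrac12\bigl(\Pr[\bin(m,p)>\widetilde t]+\Pr[\bin(m,q)>\widetilde t]\bigr)$. The algorithm computes $\hat v=\min_{c\in\C}\err_{\z}^{\widetilde t}(c)+\lap(1/(n\varepsilon))$ and outputs $\sigma=0$ if $\hat v\le\tau$ and $\sigma=1$ otherwise. Privacy is immediate: $\widetilde t$ and $\tau$ are functions of the public inputs only, $c\mapsto\err_{\z}^{\widetilde t}(c)$ has sensitivity at most $1/n$ (a single user flips at most one indicator in an $n$-term average), hence so does $\min_{c\in\C}\err_{\z}^{\widetilde t}(c)$, and Lemma~\ref{lem:lap} together with post-processing gives $\varepsilon$-DP.

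For utility, the key identity is that for $\z_0\sim\D^m$ we have $m\cdot\err_{\z_0}(c)\sim\bin(m,\err_{\D}(c))$, so $\err_{\D^m}^{\widetilde t}(c)=\Pr[\bin(m,\err_{\D}(c))>\widetilde t]$ is a monotone function of $\err_{\D}(c)$. If $\widetilde\eta>\eta+\alpha/2$, the definition of the infimum gives some $c^\ast$ with $\err_{\D}(c^\ast)<\eta+\alpha/4<\widetilde\eta-\alpha/4=p$, so by monotonicity $\min_{c}\err_{\D^m}^{\widetilde t}(c)\le\Pr[\bin(m,p)>\widetilde t]=\tau-G/2$. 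If instead $\widetilde\eta<\eta-\alpha/2$, then every $c$ has $\err_{\D}(c)\ge\eta>\widetilde\eta+\alpha/4=q$, so $\min_{c}\err_{\D^m}^{\widetilde t}(c)\ge\Pr[\bin(m,q)>\widetilde t]=\tau+G/2$. Invoking Lemma~\ref{lem:user_ag} with accuracy parameter $G/8$ and confidence $\beta/2$ transfers these to the empirical minimum up to additive $G/8$ (this forces $n\ge\widetilde O((\vc(\C)+1)/G^2)=\widetilde O(\vc(\C)+\vc(\C)/(m\alpha^2))$, using $\vc(\C)\ge1$ and $G=\Theta(\min(\sqrt m\alpha,1))$), and bounding $|\lap(1/(n\varepsilon))|\le\ln(2/\beta)/(n\varepsilon)\le G/8$ with probability $1-\beta/2$ (forcing $n\ge\widetilde O(1/(G\varepsilon))=\widetilde O(1/\varepsilon+1/(\sqrt m\alpha\varepsilon))$) yields $\hat v\le\tau-G/4<\tau$ in the first case, hence $\sigma=0$, and $\hat v\ge\tau+G/4>\tau$ in the second, hence $\sigma=1$. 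When $|\widetilde\eta-\eta|\le\alpha/2$ both values of $\sigma$ satisfy the claimed implications, so nothing further is needed. A union bound over the generalization and noise events gives failure probability at most $\beta$, and summing the displayed requirements on $n$ gives exactly the bound in the statement.

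The main obstacle is that $\widetilde t$ must be chosen from the guess $\widetilde\eta$ rather than from the unknown $\eta$: we cannot center the binomial separation at $\eta$, so the argument must instead exploit the stochastic monotonicity of $p\mapsto\Pr[\bin(m,p)>\widetilde t]$ to turn the statements ``some concept's error lies below $p$'' and ``every concept's error lies above $q$'' into a $\Theta(\sqrt m\alpha)$-sized gap in $\min_c\err_{\D^m}^{\widetilde t}(c)$ around the \emph{fixed} midpoint $\tau$. The other delicate point is verifying that Lemma~\ref{lem:bin}'s factor $K$ is genuinely $\Omega(\min(\sqrt m\alpha,1))$ uniformly over admissible $\widetilde\eta$ — this is exactly where the bound $\sqrt{\widetilde\eta(1-\widetilde\eta)}\le1/2$ enters and why the boundary cases $\widetilde\eta\le\alpha/2$ and $\widetilde\eta\ge1-\alpha/2$ must be handled separately. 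Everything else is a routine composition of user-level agnostic generalization, the Laplace mechanism, and a union bound.
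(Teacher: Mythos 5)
Your proposal is correct and follows essentially the same route as the paper's proof: both choose a user-level cutoff $\widetilde t$ via Lemma~\ref{lem:bin} to separate $\bin(m,\cdot)$ at two points straddling $\widetilde\eta$, then compare the Laplace-noised empirical minimum $\min_{c\in\C}\err_{\z}^{\widetilde t}(c)$ against a fixed public threshold, invoking Lemma~\ref{lem:user_ag} and the monotonicity of $p\mapsto\Pr[\bin(m,p)>\widetilde t]$. The only differences are cosmetic — you use the symmetric points $\widetilde\eta\pm\alpha/4$ with a midpoint cutoff where the paper uses $\widetilde\eta$ and $\widetilde\eta+\alpha/2$ with an offset cutoff — plus your slightly more careful treatment of the boundary cases $\widetilde\eta$ near $0$ or $1$, which the paper leaves implicit.
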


\begin{proof}
    We will assume $m \le 1/\alpha^2$ and prove an upper bound of $\widetilde{O}\left(\frac{\vc(\C)}{m\alpha^2} + \frac{1}{\sqrt{m}\alpha\varepsilon}\right)$. When $m > 1/\alpha^2$, the result simply holds by discarding the extra samples.

    By Lemma~\ref{lem:bin}, there exists some $t$ s.t.
    \begin{equation*}
        \Pr[\bin(m, \widetilde{\eta}) > t] + \sqrt{m}\alpha / 1400 \le \Pr[\bin(m, \widetilde{\eta} + \alpha / 2) > t].
    \end{equation*}
    Let $\A$ be an algorithm that returns $0$ if $\min_{c\in\C}\err_{\z}^t(c) + r\le  \Pr[\bin(m, \widetilde{\eta}) > t] + \sqrt{m}\alpha / 2800$, where $r\sim\lap(1/\varepsilon n)$, and otherwise returns $1$. By Lemma~\ref{lem:lap}, $\A$ is $\varepsilon$-differentially private and $\vert r\vert\le \sqrt{m}\alpha / 5600$ with probability $1-\beta / 2$ if $n\ge\widetilde{O}(\frac{1}{\sqrt{m}\alpha\varepsilon})$.

    The agnostic generalization bound (Lemma~\ref{lem:user_ag}) shows that if $n\ge\widetilde{O}\left(\frac{\vc(\C)}{\sqrt{m}\alpha^2}\right)$, then with probability $1-\beta / 2$ it holds that $\vert\err_{\z}^t(c) - \err_{\D^m}^t(c)\vert \le \sqrt{m}\alpha/5600$ for all $c\in\C$. For the remainder of the proof, we condition on the event that $\vert r\vert\le \sqrt{m}\alpha / 5600$ and $\vert\err_{\z}^t(c) - \err_{\D^m}^t(c)\vert \le \sqrt{m}\alpha/5600$ for all $c\in\C$. By the union bound, this happens with probability $1-\beta$.

    First consider the case that $\eta \le \widetilde{\eta}$. We have
    \begin{align*}
        \min_{c\in\C}\err_{\z}^t(c) + r& \le \Pr[\bin(m, \eta) > t] + \sqrt{m}\alpha/5600 + r\\
        &\le \Pr[\bin(m, \widetilde{\eta}) > t] + \sqrt{m}\alpha / 5600 + \sqrt{m}\alpha / 5600 \\&\le \Pr[\bin(m, \widetilde{\eta}) > t] + \sqrt{m} \alpha/ 2800.
    \end{align*}
    Thus, $\A$ must return $0$ in this case. Equivalently, it guarantees that $\eta > \widetilde{\eta} \ge \widetilde{\eta} -\alpha/2$ if $\A$ returns $1$.

    Now suppose $\eta > \widetilde{\eta}$. Note that if $\eta > \widetilde{\eta} + \alpha / 2$, it holds that
    \begin{equation*}
        \Pr[\bin(m, \widetilde{\eta}) > t] + \sqrt{m} /1400 < \Pr[\bin(m, \eta) > t] = \inf_{c\in\C}\err_{\D^m}^t(c).
    \end{equation*}
    Thus, if $\inf_{c\in\C}\err_{\D^m}^t(c) \le \Pr[\bin(m, \widetilde{\eta} )>t] + \sqrt{m} /1400$ we must have $\eta \le \widetilde{\eta} + \alpha / 2$.
    
    Suppose $\A$ returns $0$. Let $c_0\in\C$ s.t. $\err_{\z}^t(c_0) = \min_{c\in\C}\err_{\z}^t(c)$. Then we have
    \begin{align*}
        \err_{\D^m}^t(c_0) &\le \err_{\z}^t(c_0) + \sqrt{m}\alpha / 5600\\&\le \Pr[\bin(m, \widetilde{\eta}) > t] + \sqrt{m}\alpha / 2800 -r+ \sqrt{m}\alpha / 5600 \\&\le \Pr[\bin(m,\widetilde{\eta})>t]+\sqrt{m}\alpha/1400.
    \end{align*}
    Therefore, $\inf_{c\in\C}\err_{\D^m}^t(c) \le \Pr[\bin(m, \widetilde{\eta}) > t] + \sqrt{m}\alpha / 1400$, which implies that $\eta \le \widetilde{\eta} + \alpha/2$.

    In summary, with probability $1-\beta$, we have $\widetilde{\eta} \ge \eta - \alpha/2$ if $\A$ returns $0$, and $\widetilde{\eta}\le \eta + \alpha / 2$ if $\A$ return $1$.
\end{proof}

Now we can accurately estimate the value of $\eta$ by binary research. The details are described in Algorithm~\ref{alg:eta}.

\begin{algorithm}[tb]
\caption{$\mathsf{PrivateMinError}$}
\label{alg:eta}
\begin{algorithmic}
    \STATE {\bfseries Input:} dataset $\z\in\Z^{nm}$, privacy parameter $\varepsilon$, accuracy parameter $\alpha$, confidence parameter $\beta$, algorithm $\mathsf{PrivateCompare}$.
    \STATE $l\gets 0,r\gets 1$
    \STATE $T\gets \lceil\log_2(2/\alpha)\rceil$
    \STATE $\varepsilon' \gets \varepsilon / T, \beta' \gets \beta / T$
    \FOR{$k\gets 1$ {\bfseries to} $T$}
    \STATE $mid \gets (l + r) / 2$
    \IF{$\mathsf{PrivateCompare}(\z, \varepsilon', mid, \alpha, \beta') = 1$}
    \STATE $r\gets mid$
    \ELSE 
    \STATE $l\gets mid$
    \ENDIF
    \ENDFOR
    \STATE {\bfseries return} $l$
\end{algorithmic}
\end{algorithm}

\begin{proof}[Proof of Lemma~\ref{lem:eta}]
    We execute Algorithm~\ref{alg:eta} with $\mathsf{PrivateCompare}$ being the algorithm in Lemma~\ref{lem:comp}.

    Let $l_k, r_k$ be the endpoints after the $k$-th iteration. Initially, we have $l_0 = 0$ and $r_0 = 1$. Then after $T$ iterations, the length of $[l_T, r_T]$ is $r_T - l_T \le 1 / 2^T \le \alpha / 2$.

    We now show that if every call of $\mathsf{PrivateCompare}$ succeeds, then $[l_k, r_k]\cap[\eta - \alpha / 2, \eta + \alpha / 2]\neq\emptyset$ for all $k$. We prove this by induction. At the beginning, we have $[l_0, r_0]\cap[\eta - \alpha/2,\eta + \alpha/2]\neq \emptyset$ since $\eta\in[0, 1]=[l_0,r_0]$. Suppose $[l_{k-1}, r_{k-1}]\cap[\eta - \alpha / 2, \eta+\alpha/ 2]\neq\emptyset$. In the $k$-th iteration, we set $mid = (l_{k - 1} + r_{k-1})/2$. Let $\sigma$ be the return value of $\mathsf{PrivateCompare}$. By Lemma~\ref{lem:comp}, with probability $1 - \beta' = 1 - \beta/T$, we have:
    \begin{enumerate}
        \item If $\sigma = 0$, then $mid \ge \eta - \alpha / 2$. Thus, $[l_{k-1},mid]\cap[\eta - \alpha/2, \eta + \alpha / 2]\neq \emptyset$ since $\eta + \alpha/2 \ge l_{k - 1}$. 
        \item If $\sigma = 1$, then $mid \le \eta + \alpha / 2$. Thus, $[mid, r_{k-1}]\cap[\eta - \alpha / 2, \eta + \alpha / 2] \neq\emptyset$ since $\eta - \alpha /2 \ge r_{k - 1}$.
    \end{enumerate}
    For both cases, we have $[l_{k}, r_{k}]\cap[\eta - \alpha / 2, \eta+\alpha/ 2]\neq\emptyset$.

    By the union bound, it holds with probability $1-\beta$ that $\vert\eta - x\vert \le \alpha / 2$ for some $x\in[l_T, r_T]$. This implies $\vert l_T - \eta\vert \le \vert l_T - x\vert + \vert x - \eta\vert\le \alpha$, which completes the proof.
\end{proof}

\subsection{Proof of Theorem~\ref{thm:user}}

\begin{proof}
    We will assume that $m\le 1/\alpha^2$ and show an upper bound of $\widetilde{O}\left(\frac{\repd(\C)}{\sqrt{m}\alpha\varepsilon} + \frac{\vc(\C)}{m\alpha^2}\right)$. When $m > 1/\alpha^2$, the $\widetilde{O}\left(\frac{\repd(\C)}{\varepsilon}\right)$ term becomes the dominated term and the user complexity can be achieved by discarding the extra samples.

    Let $\eta = \inf_{c\in\C}\err_{\D}(c)$. We first run the algorithm in Lemma~\ref{lem:eta} with privacy parameter $\varepsilon/2$. For $n \ge \widetilde{O}\left(\frac{\vc(\C)}{m\alpha^2} + \frac{1}{\sqrt{m}\alpha\varepsilon}\right)$, it returns some $\hat{\eta}$ s.t. $|\hat{\eta} - \eta|\le \alpha/6$ with probability $1-\beta / 7$.
    
    By Lemma~\ref{lem:bin}, there exists some $t$ s.t.
    \begin{equation*}
        \Pr[\bin(m, \hat{\eta} + \alpha/6) > t] + \sqrt{m}\alpha / 4200 \le \Pr[\bin(m, \hat{\eta} + \alpha/3) > t].
    \end{equation*}

    Let $\psi = \Pr[\bin(m, \eta) > t]$ and $\hat{\psi} = \Pr[\bin(m, \hat{\eta} + \alpha / 6) > t]$. Then we have $\psi\le \hat{\psi}$ since $\eta \le \hat{\eta} + \alpha/6$.
    
    For any concept $c$ with $\err_{\D}(c) > \eta + \alpha / 2$, we have $\err_{\D}(c) > \hat{\eta} + \alpha/3$ by the triangle inequality. Thus,
    \begin{equation*}
        \err_{\D^m}^t(c) > \Pr[\bin(m, \hat{\eta} + \alpha/3) > t] \ge \hat{\psi} + \sqrt{m}\alpha / 4200.
    \end{equation*}
    Thus, for any concept $c$ with $\err_{\D^m}^t(c)\le \hat{\psi} + \sqrt{m}\alpha / 4200$ it must hold that $\err_{\D}(c) \le \eta + \alpha / 2$.

    Again by Lemma~\ref{lem:bin}, there exists some $s$ (actually, $s = 0$) s.t.
    \begin{equation*}
        \Pr[\bin(m, \alpha / 3) > s] \ge \Pr[\bin(m, 0) > s] + \sqrt{m}\alpha / 2100 = \sqrt{m}\alpha / 2100.
    \end{equation*}
    Thus, for any hypotheses $c$ and $h$ with $\dis_{\D_\X^m}^s(c, h) \le \sqrt{m}\alpha / 2100$, it holds that $\dis_{\D_\X}(c, h) \le \alpha / 3$.
    
    Our algorithm then works as follows: first samples $\HH\sim\PP$, where $\PP$ is a $(\alpha / 134400\sqrt{m},\beta / 7)$-probabilistic representation of $\C$ with $\mathrm{size}(\PP) = \repd_{\alpha/134400\sqrt{m},\beta/7}(\C)$, then run the exponential mechanism on $\HH$ with parameter $\varepsilon/2$, sensitivity parameter $\Delta = 2/n$, and score function
    \begin{equation*}
        q(\z, h) = \min_{c\in\C} \err_{\z}^t(c) + \dis_{\x}^s(c, h).
    \end{equation*}
    The privacy is satisfied by the composition property of DP. The proof of the utility guarantee is similar to that of Theorem~\ref{thm:item}. Firstly, there exists some $c'\in\C$ s.t. $\err_{\D^m}^t(c') \le \psi + \sqrt{m}\alpha / 134400$. By Hoeffding's inequality, for $n\ge\widetilde{O}\left(\frac{1}{m\alpha^2}\right)$, we have $\err_{\z}^t(c') \le \err_{\D^m}^t(c') + \sqrt{m}\alpha/ 134400 \le \psi + \sqrt{m}\alpha / 67200$ with probability $1 - \beta / 7$.

    Since $\PP$ is an $(\alpha/134400\sqrt{m}, \beta / 7)$-probabilistic representation, with probability $1 - \beta / 7$ there exists some $h'\in\HH$ s.t. $\dis_{\D_\X}(c', h') \le \alpha / 134400\sqrt{m}$. By the upper bound in Lemma~\ref{lem:bin}, $\dis_{\D_\X^m}^s(c', h') \le \sqrt{m}\alpha / 134400$. Then by the Chernoff bound, for $n\ge \widetilde{O}\left(\frac{1}{\sqrt{m}\alpha}\right)$ it holds that $\dis_{\x}^s(c', h') \le 2\dis_{\D_\X^m}^s(c', h') \le \sqrt{m}\alpha / 67200$ with probability at least $1 - \beta / 7$.
    
    So, $q(\z, h') \le \err_{\z}^t(c') + \dis_{\x}^s(c', h') \le \psi + \sqrt{m}\alpha / 33600$. By Lemma~\ref{lem:exp}, for $n\ge \widetilde{O}\left(\frac{\repd(\C)}{\sqrt{m}\alpha\varepsilon}\right)$, the exponential mechanism chooses some $h_0$ s.t. 
    \begin{equation*}
        q(\z, h_0)\le \min_{f\in\HH}q(\z, f) + \sqrt{m}\alpha / 16800 \le q(\z, h') + \sqrt{m}\alpha / 16800\le \psi + \sqrt{m}\alpha / 8400
    \end{equation*}
    with probability $1-\beta / 7$.

    Let $q(\z, h_0) = \err_{\z}^t(c_0) + \dis_{\x}^s(c_0, h_0)$. Then we have $\err_{\z}^t(c_0) \le q(\z, h_0) \le \psi + \sqrt{m}\alpha / 8400$. By Lemma~\ref{lem:user_ag}, for $n\ge\widetilde{O}\left(\frac{\vc(\C)}{m\alpha^2}\right)$ we have $\vert\err_{\D^m}^t(c) - \err_{\z}^t(c) \vert \le \sqrt{m}\alpha / 8400$ for all $c\in\C$ with probability $1-\beta / 7$. Thus, 
    \begin{equation*}
        \err_{\D^m}^t(c_0) \le \err_{\z}^t(c_0) + \sqrt{m}\alpha / 8400\le \psi + \sqrt{m}\alpha / 4200\le \hat{\psi} +\sqrt{m}\alpha / 4200
    \end{equation*}
    and $\err_{\z}^t(c_0) \ge \psi - \sqrt{m}\alpha / 8400$. The former implies $\err_{\D}(c_0)\le \eta + \alpha / 2$ and the latter implies 
    \begin{equation*}
        \dis_{\x}^s(c_0, h_0) = q(\z, h_0) - \err_{\z}(c_0) \le \psi + \sqrt{m}\alpha / 8400 - (\psi - \sqrt{m}\alpha / 8400) \le \sqrt{m}\alpha / 4200.
    \end{equation*}

    As shown in the proof of Theorem~\ref{thm:item}, the VC dimension of $\C\cup\HH$ is $\widetilde{O}\left(\repd(\C)\right)$. Then by Lemma~\ref{lem:user_re}, for $n\ge\widetilde{O}\left(\frac{\repd(\C)}{\sqrt{m}\alpha}\right)$, it holds with probability $1-\beta /7 $ that $\dis_{\D_\X^m}^s(c, h)\le \sqrt{m}\alpha / 2100$ for all $c\in\C$ and $h\in\HH$ with $\dis_{\x}^s(c, h)\le \sqrt{m}\alpha / 4200$. Thus, we have $\dis_{\D_\X^m}^s(c_0, h_0)\le \sqrt{m}\alpha / 2100$, which further indicates $\dis_{\D_\X}(c_0, h_0) \le \alpha / 3$.

    By the union bound, we have
    \begin{equation*}
        \err_{\D}(h_0) \le \err_{\D}(c_0) + \dis_{\D_\X}(c_0, h_0) \le \eta + \alpha / 2+ \alpha / 3 < \eta + \alpha
    \end{equation*}
    with probability $1-\beta$ for $n\ge \widetilde{O}\left(\frac{\repd(\C)}{\sqrt{m}\alpha\varepsilon} + \frac{\vc(\C)}{m\alpha^2}\right)$.
\end{proof}
\section{Proofs from Section~\ref{sec:threshold}}

\subsection{Proof of Lemma~\ref{lem:median}}

We first prove an corollary of Lemma~\ref{lem:bin} by taking $p=\alpha/2$ and $q = 2\alpha/3$, which states that the total variation distance between $\bin(m, p)$ and $\bin(m, q)$ scales linearly with $m$.

\begin{corollary}
\label{cor:m}
    Suppose $0<\alpha \le 1$ and $m \le 1/\alpha$. Then there exists an $\ell$ such that
    \begin{equation*}
        \Pr[\bin(m, \alpha / 2) > \ell] + m\alpha / 4200\le \Pr[\bin(m, 2\alpha /3) > \ell].
    \end{equation*}
\end{corollary}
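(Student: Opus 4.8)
The plan is to derive the corollary directly from Lemma~\ref{lem:bin}, applied with $p=\alpha/2$ and $q=2\alpha/3$, so that $|p-q|=\alpha/6$. The only work is (i) checking that the quantity $K$ in that lemma equals $m\alpha/6$ under the hypothesis $m\le 1/\alpha$, and (ii) fixing the sign of the difference of tail probabilities.

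First I would bound the three terms defining $K=\min\bigl(m|p-q|,\ \sqrt{m}|p-q|/\sqrt{p(1-p)},\ 1\bigr)$. The first term is exactly $m\alpha/6$. For the second term, since $0<\alpha\le 1$ we have $p(1-p)=(\alpha/2)(1-\alpha/2)\le \alpha/2$, so $\sqrt{m}|p-q|/\sqrt{p(1-p)}\ge (\sqrt{m}\,\alpha/6)/\sqrt{\alpha/2}=\sqrt{2m\alpha}/6$. Now the hypothesis $m\le 1/\alpha$ gives $m\alpha\le 1$, which yields $m\alpha\le\sqrt{2m\alpha}$ (so the first term is at most the second) and $m\alpha/6\le 1/6<1$ (so the first term is at most the third). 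Hence $K=m\alpha/6$, and Lemma~\ref{lem:bin} produces an $\ell$ with $\bigl|\Pr[\bin(m,\alpha/2)>\ell]-\Pr[\bin(m,2\alpha/3)>\ell]\bigr|\ge \tfrac{1}{700}\cdot\tfrac{m\alpha}{6}=\tfrac{m\alpha}{4200}$.

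It remains to argue that the difference has the right sign, i.e.\ that $\Pr[\bin(m,2\alpha/3)>\ell]\ge\Pr[\bin(m,\alpha/2)>\ell]$. For this I would invoke the standard stochastic monotonicity of the binomial family: when $q\ge p$, one can couple $m$ independent pairs of Bernoulli variables so that each $\mathrm{Ber}(q)$ coordinate is at least the corresponding $\mathrm{Ber}(p)$ coordinate, whence $\Pr[\bin(m,q)>\ell]\ge\Pr[\bin(m,p)>\ell]$ for every $\ell$. Applying this with $p=\alpha/2<2\alpha/3=q$ shows the difference $\Pr[\bin(m,2\alpha/3)>\ell]-\Pr[\bin(m,\alpha/2)>\ell]$ is nonnegative, so it equals its absolute value, and the estimate above rearranges to the claimed inequality.

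I do not expect any genuine obstacle here: the single point that needs care is the short case analysis showing the $m|p-q|$ branch is the binding one in $K$ — that is, that the assumption $m\le 1/\alpha$ is precisely what prevents the $\sqrt{m}$-scaling branch or the trivial cap $1$ from being smaller. Everything else is a one-line application of Lemma~\ref{lem:bin} together with elementary stochastic dominance.
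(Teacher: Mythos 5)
Your proposal is correct and follows the paper's proof exactly: apply Lemma~\ref{lem:bin} with $p=\alpha/2$, $q=2\alpha/3$, and use $m\le 1/\alpha$ to show the branch $m|p-q|=m\alpha/6$ is the minimum in $K$. Your explicit coupling argument for the sign of the difference is a small but welcome addition that the paper leaves implicit.
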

\begin{proof}
    Let $p = \alpha / 2$ and $q = 2\alpha / 3$. Then by Lemma~\ref{lem:bin}, there exists an $\ell$ s.t.
    \begin{equation*}
        \Pr[\bin(m, \alpha / 2) > \ell] + K / 700 \le \Pr[\bin(m, 2\alpha /3) > \ell],
    \end{equation*}
    where
    \begin{equation*}
        K = \min\left(m\vert p - q\vert, \frac{\sqrt{m}\vert p - q\vert}{\sqrt{p(1-  p)}}, 1\right) =\min(m\alpha / 6, \sqrt{m / (p(1- p))}\cdot \alpha /6) = m\alpha / 6.
    \end{equation*}
    since $m / (p(1 - p)) > m / p =  2m / \alpha \ge 2m^2$.
\end{proof}

\begin{proof}[Proof of Lemma~\ref{lem:median}]
    We will assume $m \le 1/\alpha$ and prove an upper bound of $\widetilde{O}\left(\frac{\log\vert\X\vert}{m\alpha\varepsilon}\right)$. When $m > 1/\alpha$, the conclusion follows by discarding the extra samples.

    Let $\x$ be the corresponding unlabeled dataset of $\z$. By Corollary~\ref{cor:m}, there exists some $s$ .s.t.
    \begin{equation*}
        \Pr[\bin(m, \alpha / 2) > s] + m\alpha/4200 \le \Pr[\bin(m, 2\alpha/3) > s].
    \end{equation*}

    Let $\dis_{\x}^s(f_{a-1}, f_b)$ be the fraction of users that have more than $ms$ samples lie in $[a, b]$. Define score function
    \begin{equation*}
        q(\z, u) = \max\left(\dis_{\x}^s(f_{l-1}, f_{u-1}),\dis_{\x}^s(f_{u}, f_{r})\right).
    \end{equation*}
    The sensitivity of $q(\z, u)$ is $1/n$. Thus, we can run the exponential mechanism on $\{l, \dots, r\}$ with privacy parameter $\varepsilon$ and $\Delta=1/n$. Lemma~\ref{lem:exp} proves the privacy guarantee. To analyze the accuracy, we start with the claim that there exists some $u'\in\{l,\dots, r\}$ such that
    \begin{equation*}
        \max\left(\Pr_{x\sim\D_\X}[x\in[l, u'-1]], \Pr_{x\sim\D_\X}[x\in [u'+1, r]]\right) \le \alpha/2.
    \end{equation*}
    To show this, let $u'$ be the greatest number in $\{l, \dots, r\}$ s.t. $\Pr_{x\sim\D_\X}[x\in[l, u' - 1]] \le \alpha/2$. We will prove that $\Pr_{x\sim\D_\X}[x\in[u' + 1, r]] \le \alpha/2$. Suppose this does not hold, namely,  $\Pr_{x\sim\D_\X}[x\in[u' + 1, r]] > \alpha/2$. Then by the definition of $u'$ we have $\Pr_{x\sim\D_\X}[x\in[l, u']] > \alpha/2$. Thus, 
    \begin{equation*}
        \alpha\ge\Pr_{x\sim\D_\X}[x\in[l, r]] = \Pr_{x\sim\D_\X}[x\in[l, u']] + \Pr_{x\sim\D_\X}[x\in[u' + 1, r]] > \alpha,
    \end{equation*}
    a contradiction.
    
    Thus, we have
    \begin{equation*}
        \max\left(\Pr_{\x_0\sim\D_\X^m}[\dis_{\x_0}(f_{l-1}, f_{u'-1}) > s], \Pr_{\x_0\sim\D_\X^m}[\dis_{\x_0}(f_{u'}, f_{r}) > s]\right) \le \Pr[\bin(m, \alpha / 2) > s].
    \end{equation*}

    Moreover, by the upper bound in Lemma~\ref{lem:bin}, the right-hand side of the above is at most $m\alpha / 2$. Now apply the Chernoff bound, for $n\ge\widetilde{O}\left(\frac{1}{m\alpha}\right)$, with probability $1 - \beta /3$, we have
    \begin{align*}
        q(\z, u') &= \max\left(\Pr[\dis_{\x}^s(f_{l-1}, f_{u'-1})], \Pr[\dis_{\x}^s(f_{u'}, f_{r})]\right) \\&\le  \Pr[\bin(m, \alpha / 2) > s](1 + 1 / 6300) \\ &\le  \Pr[\bin(m, \alpha / 2) > s] + m\alpha / 12600.
    \end{align*}

    By Lemma~\ref{lem:exp}, for $n\ge \widetilde{O}\left(\frac{\log\vert \X\vert}{m\alpha\varepsilon}\right)$, the exponential mechanism returns a $mid$ s.t. 
    \begin{equation*}
        q(\z, mid) \le \min_{u\in\{l, \dots, r\}}q(\z, u) + m\alpha / 12600 \le q(\z, u') + m\alpha/ 12600 \le   \Pr[\bin(m, \alpha / 2) > s] + m\alpha / 6300
    \end{equation*}
    with probability at least $1-\beta/3$. By the Chernoff bound, when $n\ge\widetilde{O}\left(\frac{\log\vert X\vert}{m\alpha}\right)$, it holds with probability $1 - \beta / 3$ that
    \begin{equation*}
        \dis_{\D_\X^m}^s(f_{u}, f_{v}) \le (\Pr[\bin(m, \alpha / 2) > s] + m\alpha / 6300)\cdot\frac{6303}{6302} 
    \end{equation*}
    for all $u, v\in\{0\}\cup\X$ with $\dis_{\x}^s(f_u, f_v)\le \Pr[\bin(m, \alpha / 2) > s] + m\alpha /6300$. Therefore, we have
    \begin{align*}
        &\max\left(\dis_{\D_\X^m}^s(f_{l-1}, f_{mid-1}),\dis_{\D_\X^m}^s(f_{mid+1}, r)\right) \\\le&(\Pr[\bin(m, \alpha / 2) > s] + m\alpha / 6300)\cdot\frac{6303}{6302} \\\le&\Pr[\bin(m, \alpha / 2) > s] + m\alpha / 6300 + (m\alpha / 2 + m\alpha / 6300)\cdot \frac{1}{6302} \\
        =&\Pr[\bin(m, \alpha / 2) > s] + m\alpha / 4200 \\
        \le& \Pr[\bin(m, 2\alpha / 3) > s].
    \end{align*}
    Thus, we got
    \begin{align*}
        \max\left(\Pr_{x\sim\D_\X}[x\in[l, mid - 1]], \Pr_{x\sim\D_\X}[x\in[mid + 1, r]]\right) &= \max\left(\dis_{\D_\X}(f_{l-1}, f_{mid-1}),\dis_{\D_\X}(f_{mid+1}, r)\right) \\&\le 2\alpha/3.
    \end{align*}
    By the union bound, the above happens with probability at least $1-\beta$, which completes the proof.
\end{proof}

\subsection{Proof of Theorem~\ref{thm:threshold}}

\begin{proof}
    As proved by~\citet{ghazi2023user}, private realizable learning requires $\Omega\left(\frac{\repd(\C)}{\varepsilon} + \frac{\repd(\C)}{m\alpha\varepsilon}\right)$ users and private agnostic learning requires $\Omega\left(\frac{\vc(\C)}{\sqrt{m}\alpha\varepsilon} + \frac{\vc(\C)}{m\alpha^2}\right)$. By the fact that the class of thresholds has VC dimension $1$ (see, e.g.,~\citep{shalev2014understanding}) and representation dimension $\Theta(\log|\X|)$~\citep{feldman2014sample}, the lower bound is proved.

    For the upper bound, we assume that $m \le 1/\alpha^2$. The bound for $m > 1/\alpha^2$ follows by discarding the extra samples.

    Let $\eta = \inf_{c\in\C}\err_{\D}(c)$ and $T = \log_{3/2}\left(\frac{2}{\alpha}\right) = O(\log(1/\alpha))$.We first run the algorithm in Lemma~\ref{lem:eta} with parameter $\varepsilon / 2$. For $n\ge\widetilde{O}\left(\frac{1}{m\alpha^2} + \frac{1}{\sqrt{m}\alpha\varepsilon}\right)$, it returns some $\hat{\eta}$ s.t. $\vert\hat{\eta} - \eta\vert\le \alpha / 6$ with probability $1 - \beta / 4$. By Lemma~\ref{lem:bin}, there exists some $t$ s.t.
    \begin{equation*}
        \Pr[\bin(m, \hat{\eta} + \alpha /6) > t] + \sqrt{m}\alpha / 4200 \le \Pr[\bin(m, \hat{\eta} + \alpha / 3) > t]
    \end{equation*}
    
    Let $\psi = \Pr[\bin(m, \eta) > t]$ and $\hat{\psi} = \Pr[\bin(m, \hat{\eta} + \alpha / 6)]$. Since $\eta \le\hat{\eta} + \alpha / 6$, we have $\psi \le \hat{\psi}$. Thus, for any $u$ s.t. $\err_{\D^m}^t(f_u) \le \psi + \sqrt{m}\alpha / 4200$, it holds that $\err_{D}(f_u) \le \hat{\eta} + \alpha / 3 \le \eta + \alpha /2$.

    We then run Algorithm~\ref{alg:threshold} with dataset $\z$, privacy parameter $\varepsilon / 2$, $T$ and $t$ as declared before, algorithm $\mathsf{PrivateCompare}$ as the one in Lemma~\ref{lem:median}, and $\theta = 2/3$. The privacy guarantee of this part directly follows from the property of Laplace mechanism (Lemma~\ref{lem:lap}) and the basic composition. Thus the privacy of the entire algorithm is ensured by applying the basic composition again.

    We now consider the utility. Let $l_k, r_k$ be the endpoints after the $k$-th iteration and $T'$ be the number of finished iterations during the execution. Then there are three possible cases: 
    \begin{itemize}
        \item $T' < T$ and the algorithm returns $f_{mid}$ during the $(T' + 1)$-th iteration
        \item $T' < T$ and the algorithm returns $f_{l_{T'}} = f_{r_{T'}}$
        \item $T' = T$ and the algorithm returns $f_{l_T}$. 
    \end{itemize}
    
    We first prove that \begin{equation}
    \label{equ:threshold}
        \min_{u\in\{l_k, \dots, r_k\}}\err_{\z}^t(u) \le \psi + \sqrt{m}\alpha / 12600 + k\sqrt{m}\alpha / (12600T)
    \end{equation}
    for $0\le k\le T'$ by induction.
    
    Let $n \ge \widetilde{O}\left(\frac{1}{\sqrt{m}\alpha\varepsilon}\right)$, then by Lemma~\ref{lem:lap} and the union bound, with probability $1- \beta / 4$, in each iteration the absolute value of the generated Laplace noise is no larger than $\alpha\sqrt{m} / (25200T)$. At the beginning, we have $l_0 = 0$ and $r_0 = \vert\X\vert$. By agnostic generalization (Lemma~\ref{lem:user_ag}), for $n\ge\widetilde{O}\left(\frac{1}{m\alpha^2}\right)$, we have $\left\vert\err_{\z}^t(f_u)- \err_{\D^m}^t(f_u) \right\vert\le \sqrt{m}\alpha / 12600$ for all $u\in\{0, \dots, \vert\X\vert\}$ with probability $1-\beta / 4$. This implies $\min_{u\in\{l_0, \dots, r_0\}}\err_{\z}^t(f_u) \le \psi + \sqrt{m}\alpha / 12600$. 
   
    Now suppose \eqref{equ:threshold} holds for some $k < T'$. Since the algorithm does not return during the $(k + 1)$-th iteration, we thus have
    \begin{align*}
        \min_{u\in\{l_{k + 1}, \dots, r_{k + 1}\}}\err_{\z}^t(f_u) &\le \min(v_l, v_r, v_{mid}) + \sqrt{m}\alpha / (25200T)\\ &\le \min_{u\in\{l_{k}, \dots, r_{k }\}}\err_{\z}^t(f_u) + \sqrt{m}\alpha /(25200T) +\sqrt{m}\alpha /(25200T)\\
        &\le \psi + \sqrt{m}\alpha /12600 + (k + 1)\sqrt{m}\alpha / (12600T).
    \end{align*}

    Thus, we prove~\eqref{equ:threshold}. Now, let's consider the three cases one by one. If the algorithm returns $f_{mid}$ during the $(T' + 1)$-th iteration, then we have
    \begin{align*}
        \err_{\z}^t(f_{mid}) &\le \min(v_l,v_r,v_{mid}) + \sqrt{m} \alpha / (25200T)\\
        &\le \min_{u\in\{l_{T'}, \dots, r_{T'}\}}\err_{\z}^t(f_u) + \sqrt{m} \alpha / (25200T)+\sqrt{m}\alpha / (25200T)\\
        &\le \psi + \sqrt{m}\alpha / 12600+ T'\cdot \sqrt{m}\alpha / (12600T) + \sqrt{m}\alpha / (12600T)\\
        &\le \psi + \sqrt{m}\alpha / 6300.
    \end{align*}
    Otherwise if the algorithm returns $f_{l_{T'}}$ due to $l_{T'} = r_{T'}$ after $T'$ iterations, we also have \begin{equation*}
        \err_{\z}^t(f_{l_{T'}}) \le \psi + \sqrt{m}\alpha / 12600 + T'\cdot \sqrt{m}\alpha / (12600T) \le \psi + \sqrt{m}\alpha / 6300.
    \end{equation*}
    Thus by agnostic generalization, for both of these two cases, the algorithm returns some $f_{u_0}$ s.t. \begin{equation*}
        \err_{\D^m}(f_{u_0}) \le\psi + \sqrt{m}\alpha / 6300 + \sqrt{m}\alpha / 12600\le \psi + \sqrt{m}\alpha / 4200.
    \end{equation*}
    This implies $\err_{\D}(f_{u_0}) \le \eta + \alpha / 2 < \eta + \alpha$.

    Now consider the remaining case that the algorithm returns $f_{l_T}$ after $T$ iterations. The same argument shows that there exists some $l_T\le u'\le r_T$ s.t. $\err_{\D}(f_{u'})\le \eta + \alpha / 2$. By Lemma~\ref{lem:median} and the union bound, we know that for $n\ge\widetilde{O}\left(\frac{\log\vert\X\vert}{\varepsilon} + \frac{\log\vert\X\vert}{m\alpha\varepsilon}\right)$, it holds with probability $1-\beta / 4$ that $\Pr_{x\sim\D_\X}[x\in[l_T, r_T]] \le \left(\frac{2}{3}\right)^{T} \le \alpha / 2$. Thus,
    \begin{equation*}
        \err_{\D}(f_{l_T}) \le \err_{\D}(f_{u'}) + \Pr_{x\sim\D_\X}[x\in[l_T, r_T]] \le \eta + \alpha /2 + \alpha / 2 \le \eta + \alpha.
    \end{equation*}
    All in all, when $n \ge\widetilde{O}\left(\frac{\log\vert\X\vert}{\varepsilon} + \frac{\log\vert\X\vert}{m\alpha\varepsilon} +\frac{1}{\sqrt{m}\alpha\varepsilon} + \frac{1}{m\alpha^2}\right)$, the algorithm is an $(\alpha, \beta)$-agnostic learner with $\varepsilon$-differential privacy.
\end{proof}

\end{document}